\documentclass{article}

\PassOptionsToPackage{numbers,compress}{natbib} 
\usepackage{neurips_2026}
\usepackage{wrapfig}
\usepackage[utf8]{inputenc}
\usepackage[T1]{fontenc}
\usepackage{hyperref}
\usepackage{url}
\usepackage{booktabs}
\usepackage{amsfonts}
\usepackage{amsmath,amssymb,amsthm,mathtools,bm}
\usepackage{nicefrac}
\usepackage{microtype}
\usepackage{xcolor}
\usepackage{enumitem}
\usepackage{graphicx}
\usepackage{subcaption}
\usepackage{multirow}
\usepackage{tikz-cd}

\usepackage{placeins}

\usepackage{titletoc}
\hypersetup{colorlinks=true,linkcolor=blue,citecolor=blue,urlcolor=blue}

\theoremstyle{plain}
\newtheorem{theorem}{Theorem}
\newtheorem{lemma}{Lemma}
\newtheorem{assumption}{Assumption}
\newtheorem{proposition}{Proposition}

\theoremstyle{definition}
\newtheorem{remark}{Remark}

\theoremstyle{plain}

\newcommand{\E}{\mathbb{E}}

\newcommand{\R}{\mathbb{R}}
\newcommand{\argmax}{\mathop{\mathrm{argmax}}}
\newcommand{\argmin}{\mathop{\mathrm{argmin}}}

\newcommand{\cA}{\mathcal{A}}
\newcommand{\cX}{\mathcal{X}}

\newcommand{\cH}{\mathcal{H}}

\newcommand{\cU}{\mathcal{U}}
\newcommand{\cV}{\mathcal{V}}
\newcommand{\Reg}{\mathrm{Reg}}

\usepackage{needspace}

\usepackage{xcolor}
\usepackage{tikz}

\definecolor{badgeblue}{RGB}{186,210,232}

\newcommand{\badge}[1]{%
  \tikz[baseline=(char.base)]{
    \node[
      shape=circle,
      fill=badgeblue,
      text=black,
      inner sep=0.4pt,
      minimum size=1.15em,
      font=\scriptsize\bfseries
    ] (char) {#1};
  }%
}

\title{Decision-Aware Proximal Bridge Learning for Optimal Treatment Selection}
\author{%
  Tomàs Garriga \affmark{1,2}\cofirstmark\correspondingmark
  \And
  Alejandro Almodóvar\affmark{3}\cofirstmark
  \And
  Axel Brando\affmark{2}
  \And
  Gerard Sanz\affmark{1}
  \And
  Eduard Serrahima de Cambra\affmark{1}
  \And
  Juan Parras\affmark{3}
}

\begin{document}

\maketitle

\affiliationnote{1}{Novartis}
\affiliationnote{2}{Barcelona Supercomputing Center}
\affiliationnote{3}{Universidad Politécnica de Madrid}
\cofirstnote
\correspondingnote{tomas.garriga\_dicuzzo@novartis.com}

\begin{abstract}
Individualized treatment selection with continuous actions requires accurate causal response estimation in decision-relevant regions, rather than uniformly over the entire action space. Estimating a global causal response surface and then choosing the treatment that maximizes it can therefore be suboptimal, since standard estimation objectives allocate modeling effort according to the observed treatment distribution rather than the regions that determine the optimal decision. While decision-aware approaches have been studied in unconfounded settings, this problem remains underexplored in proximal causal inference, where proxy variables and bridge functions enable identification under suitable assumptions even in the presence of hidden confounding. Despite recent progress, proximal methods have primarily focused on treatment-effect and potential-outcome estimation rather than treatment selection and optimal decision-making. To bridge this gap, we introduce a policy-targeted weighted bridge loss that emphasizes decision-relevant treatment regions while retaining global stabilization. We prove a regret bound showing that the proposed weighted bridge loss controls
treatment-selection regret through a weighted ill-posedness constant. We instantiate the framework in decision-aware variants of several proximal bridge solvers, yielding practical algorithms that alternate between weighted bridge estimation, response-surface projection, policy update, and weight refinement. Empirically, we find that decision-aware weighting reduces regret across several bridge solvers, suggesting improved treatment selection in proximal settings.
\end{abstract}

\section{Introduction}
\label{sec:introduction}

Individualized treatment selection from observational data is a central goal in machine learning and causal inference. In applications such as medicine, marketing, education, and public policy, the aim is not only to estimate treatment effects, but to select an intervention with high utility for each individual. This distinction is especially important for continuous treatments, where the learner must recover enough of the response surface to recommend an optimal dose. However, most counterfactual prediction and causal-effect estimation methods optimize global accuracy over the treatment space. As emphasized by \citet{zou2022counterfactual}, such objectives can be poorly aligned with decision quality: a model may fit the response surface well on average while being inaccurate near the treatment optimum.

The challenge is amplified in observational studies with unmeasured confounding. Standard individualized-treatment methods typically assume exchangeability after adjusting for observed covariates, but treatment assignment may depend on latent severity, physician judgment, patient frailty, unrecorded preferences, or institutional constraints. When such latent factors affect both treatment and outcome, ignoring them can bias both response-surface estimates and the resulting decisions.

\begin{figure}[t]
\begin{center}
\centerline{\includegraphics[width=\textwidth]{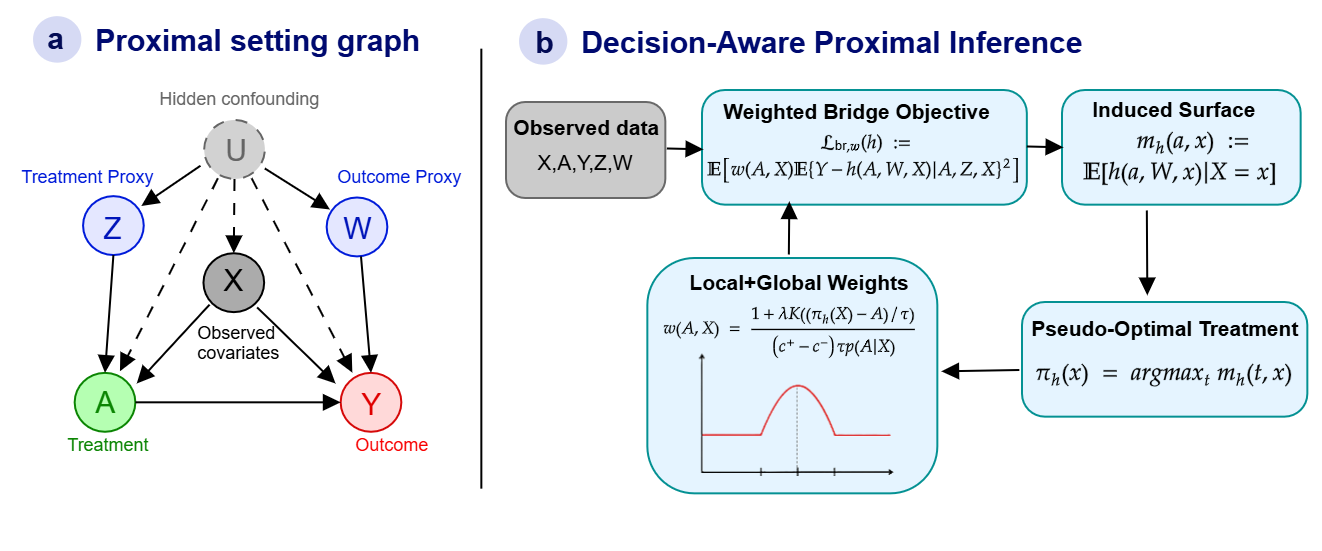}}
\end{center}
\caption{(a) Proximal causal graph with proxy variables. 
(b) Decision-aware bridge-learning pipeline with local--global weighting around the pseudo-optimal treatment.}
\label{main_figure}
\end{figure}

Proximal causal inference provides a principled approach to this setting by using proxy variables for hidden confounders and identifying causal effects through bridge functions satisfying conditional moment restrictions \citep{miao2018identifying,tchetgen2024proximal}. Recent work has developed kernel, moment-restriction, neural, doubly robust, and treatment-rule methods for proximal learning \citep{mastouri2021proximal,kompa2022deep,wu2024doubly,qi2024proximal,shen2023optimal}. Yet most of these methods remain estimation-centric: they aim to recover bridge functions or dose-response surfaces accurately in a global sense.

For treatment selection, global accuracy is not the right target. Errors in clearly suboptimal treatment regions may have little effect on policy value, whereas small errors near the optimum can change the selected dose. We therefore study \emph{policy-targeted proximal bridge learning}. Our key idea is that proximal bridge estimation is an inverse problem, and the norm used to solve it should reflect the downstream decision. We construct a weighted bridge objective that emphasizes treatment regions near a pseudo-optimal dose estimated from a pilot bridge while retaining global stabilization.

Our contributions are: \textbf{(1)} we formulate individualized continuous-treatment selection under hidden confounding as a regret-targeted proximal bridge-learning problem, extending Outcome Oriented Counterfactual Prediction \citep{zou2022counterfactual} to proximal inference; \textbf{(2)} we show that, under a weighted ill-posedness condition, a policy-targeted weighted bridge loss controls a weighted causal-surface loss and hence a treatment-selection regret surrogate; \textbf{(3)} we propose an iterative algorithm alternating between bridge estimation, response-surface projection, policy update, and weight refinement, and instantiate it for several bridge solvers; and \textbf{(4)} we evaluate the method on synthetic and semi-synthetic benchmarks with hidden confounding and proxy variables.

\section{Related Work}
\label{sec:related-work}

\textbf{Decision-oriented learning and continuous treatments.}
A large literature studies treatment-effect estimation and policy learning from observational data, including generalized propensity-score methods for continuous treatments \citep{hirano2004propensity} and modern approaches to dose-response estimation and continuous-action policy optimization \citep{kallus2018policy,singh2024kernel}. A complementary line of work emphasizes that accurate causal estimation need not imply accurate treatment selection: individualized treatment-rule methods such as outcome-weighted learning directly target decision quality \citep{zhao2012estimating}, while outcome-oriented counterfactual prediction shows that errors near decision-relevant treatments can matter more than global surface error \citep{zou2022counterfactual}. These ideas are especially important for continuous treatments, where only a small region of the action space may determine the recommended dose. Most such methods, however, assume ignorability.

\textbf{Proximal causal inference and bridge learning.}
Proximal causal inference uses negative-control or proxy variables to identify causal quantities under unmeasured confounding through outcome and treatment bridge functions satisfying conditional moment restrictions \citep{miao2018identifying,tchetgen2024proximal,cui2024semiparametric}. Recent machine-learning methods estimate these bridges using kernel two-stage regression and maximum moment restrictions, including kernel proxy variables (KPV) and proxy maximum moment restriction (PMMR) \citep{mastouri2021proximal}; neural or adversarial conditional-moment objectives, including neural maximum moment restriction (NMMR) \citep{kompa2022deep,kallus2021causal,ghassami2022minimax}; deep proxy representations, including deep feature proxy variables (DFPV) \citep{xu2021deep}; and kernel methods for causal response functions \citep{singh2024kernel}. Related work develops continuous-treatment, doubly robust, and density-ratio-free proximal estimators \citep{wu2024doubly,bozkurt2025densityratio,bozkurt2025drfree}, as well as implicit bridge learning through deconfounding generative models \citep{almodovar2025decaflow} and proximal treatment-rule learning under hidden confounding \citep{qi2024proximal,shen2023optimal}. These methods advance estimation and policy learning in proximal models, but their bridge-learning objectives remain largely estimation-centric rather than decision oriented. See App. \ref{app:extended-related-work} for an extended discussion.

\textbf{Research gap.}
Existing outcome-oriented methods target decision-relevant counterfactual prediction but usually assume observed-confounder identification. Existing proximal methods handle latent confounding but usually optimize global bridge or effect estimation. Our work targets the intersection: decision-aware bridge learning for continuous treatment policies under unmeasured confounding.

\section{Background and Problem Setting}

\subsection{Proximal Inference}

We observe i.i.d. samples $O_i=(Y_i,A_i,Z_i,W_i,X_i)$, for $i=1,\ldots,n$,
where $A\in\cA=[c_-,c_+]$ is a continuous treatment,
$Y$ is the outcome, and $X$ denotes observed pre-treatment covariates.
The variable $U$ denotes latent confounders that may affect both treatment
assignment and the outcome. Proximal inference uses two additional observed
variables as proxies for $U$: a \emph{treatment-inducing proxy} $Z$ and an
\emph{outcome-inducing proxy} $W$. Informally, $Z$ carries information
about latent factors that affect treatment assignment, while $W$ carries
information about latent factors that affect the outcome. The left-hand side of Figure \ref{main_figure} shows the causal graph of the proximal setting. 

The proximal strategy is to replace adjustment for the unobserved $U$ by
an observable bridge equation. The bridge is not itself the causal response
surface. Rather, it is a function of the observed treatment, outcome proxy,
and covariates whose conditional expectation recovers the causal response
after averaging over the distribution of $W$. This distinction is central
for our objective: policy decisions will be made through the causal response
surface induced by a learned bridge.

We write $Y(a)$ for the potential outcome under treatment level $a$.
All expectations are with respect to the observed-data distribution unless
potential outcomes are explicitly shown. We impose the following standard
proximal inference assumptions.

\begin{assumption}[Consistency and latent exchangeability]
\label{ass:consistency-exchangeability}
For every $a\in\cA$, $Y(a)$ is well defined, $Y=Y(A)$ almost surely, and
$Y(a)\perp\!\!\!\perp A\mid U,X$.
\end{assumption}

\begin{assumption}[Negative-control proxies]
\label{ass:negative-control}
For every $a\in\cA$,
$Y(a)\perp\!\!\!\perp Z\mid A,U,X$ and
$W\perp\!\!\!\perp (A,Z)\mid U,X$.
\end{assumption}

\begin{assumption}[Completeness]
\label{ass:completeness}
For any square-integrable $\ell$, if
$\E\{\ell(U)\mid A=a,Z=z,X=x\}=0$ for all $(a,z,x)$ in the support of
$(A,Z,X)$, then $\ell(U)=0$ almost surely. For any square-integrable $g$,
if $\E\{g(Z)\mid A=a,W=w,X=x\}=0$ for all $(a,w,x)$ in the support of
$(A,W,X)$, then $g(Z)=0$ almost surely.
\end{assumption}

\begin{assumption}[Positivity]
\label{ass:positivity}
For almost every $(u,x)$, the conditional law of $A$ given $(U,X)=(u,x)$
has support containing $\cA$ and admits a density $p(a\mid u,x)>0$ on
$\cA$. Moreover, the observed density $p(a\mid x)$ is bounded away from
zero on $\cA$.
\end{assumption}

We further assume the existence of a square-integrable outcome bridge
$h_0:\cA\times\mathcal W\times\cX\to\R$ satisfying the outcome bridge restriction
    $\E\{Y-h_0(A,W,X)\mid A,Z,X\}=0$.
Sufficient regularity and solvability conditions are given in
App.~\ref{app:bridge-existence}. App. \ref{app:assumptions} further discusses the assumptions. 

\begin{proposition}[Proximal identification]
\label{prop:proximal-identification}
Under the bridge-existence conditions in
Appendix~\ref{app:bridge-existence}, for every $a\in\cA$ and almost every
$x$, $m_0(a,x)
    :=
    \E\{Y(a)\mid X=x\}
    =
    \E\{h_0(a,W,x)\mid X=x\}$.
\end{proposition}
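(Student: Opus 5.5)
The plan is the standard Miao et al.\ argument: first show that the observable bridge restriction implies a latent one conditional on $U$, and then integrate over $U$ using latent exchangeability and the fact that $W$ does not depend on treatment.

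\emph{Step 1: lift the bridge restriction to the latent level.} Start from $\E\{Y-h_0(A,W,X)\mid A,Z,X\}=0$ and condition additionally on $U$ via the tower property:
\[
\E\bigl[\E\{Y\mid A,U,Z,X\}-\E\{h_0(A,W,X)\mid A,U,Z,X\}\bigm| A,Z,X\bigr]=0 .
\]
By consistency and Assumption~\ref{ass:negative-control}, $\E\{Y\mid A,U,Z,X\}=\E\{Y(A)\mid A,U,X\}$, so $Z$ drops out. Because $W\perp\!\!\!\perp(A,Z)\mid U,X$, we also have $\E\{h_0(A,W,X)\mid A,U,Z,X\}=\int h_0(A,w,X)\,dP(w\mid U,X)$. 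Hence
\[
\ell_{A,X}(U):=\E\{Y\mid A,U,X\}-\int h_0(A,w,X)\,dP(w\mid U,X)
\]
satisfies $\E\{\ell(U)\mid A,Z,X\}=0$. Applying the first part of Assumption~\ref{ass:completeness}, with square-integrability inherited from that of $Y$ and $h_0$, gives
\[
\E\{Y\mid A=a,U,X\}=\E\{h_0(a,W,X)\mid U,X\}
\]
for almost every $(a,u,x)$.

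\emph{Step 2: integrate out $U$.} By Assumption~\ref{ass:consistency-exchangeability},
\[
\E\{Y(a)\mid U,X\}=\E\{Y(a)\mid A=a,U,X\}=\E\{Y\mid A=a,U,X\}.
\]
Conditioning on $A=a$ is meaningful for every $a\in\cA$ because Assumption~\ref{ass:positivity} gives $p(a\mid u,x)>0$. Combining this with Step 1 and taking $\E\{\cdot\mid X=x\}$ over $U$ yields
\[
m_0(a,x)=\E\bigl[\E\{h_0(a,W,x)\mid U,X=x\}\bigm| X=x\bigr]=\E\{h_0(a,W,x)\mid X=x\}.
\]

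\emph{Main obstacle: the null sets.} Completeness only delivers the Step 1 identity almost everywhere in $(a,u,x)$ with respect to the observed law, whereas the claim is for \emph{every} $a$. Positivity makes the law of $A$ given $(U,X)$ mutually absolutely continuous with Lebesgue measure on $\cA$, so a.e.\ in the observed law is the same as Lebesgue-a.e.\ in $a$, for a.e.\ $(u,x)$. To upgrade to every $a$ I would invoke the regularity in App.~\ref{app:bridge-existence}, which presumably provides continuity in $a$ of both $h_0$ and $a\mapsto\E\{Y(a)\mid U,X\}$, and pass to limits using dominated convergence under the square-integrability bounds. If that regularity is not available, the statement should instead be read as holding for almost every $a$.
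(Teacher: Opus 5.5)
Your Step 2 is the whole argument under the paper's hypotheses. Step 1 is a detour, and it creates the null-set problem you flag, which your proposed fix does not resolve. Assumption~\ref{ass:appendix-regularity} only supplies regular conditional laws, square-integrability of $q_0(a,U,X)$ for each $a$, and well-defined $L_2$ conditional-expectation maps. Nothing in Appendix~\ref{app:bridge-existence} gives continuity in $a$ of $h_0$ or of $a\mapsto\E\{Y(a)\mid U,X\}$. The upgrade also cannot hold for an arbitrary solution of the observed restriction. Positivity makes the law of $A$ absolutely continuous, so you can add a constant to $h_0(a,\cdot,\cdot)$ for $a$ in a Lebesgue-null set without disturbing $\E\{Y-h_0(A,W,X)\mid A,Z,X\}=0$, yet this changes $\E\{h_0(a,W,x)\mid X=x\}$ at those $a$. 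So your route, as written, only proves the almost-every-$a$ version, which is weaker than the statement. (Step 1 also silently uses completeness in its standard per-$(a,x)$ form, for functions $\ell(a,u,x)$ whose conditional mean vanishes at that $(a,x)$. Assumption~\ref{ass:completeness} as literally stated concerns functions of $U$ alone whose conditional mean vanishes at all $(a,z,x)$, and it does not cover your $\ell_{A,X}$.)

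The missing observation is that the bridge-existence conditions already give you the latent identity you work to derive. Assumption~\ref{ass:latent-bridge-solvability} posits, for every $a\in\cA$ and almost every $(u,x)$, that $q_0(a,u,x)=\E\{h_0(a,W,x)\mid U=u,X=x\}$, and the $h_0$ in the proposition is that function. Proposition~\ref{prop:appendix-outcome-bridge-existence} then runs your Step 1 in the reverse direction, from latent to observed, with no completeness needed. Inserting this directly into Step 2 gives $m_0(a,x)=\E\{q_0(a,U,x)\mid X=x\}=\E[\E\{h_0(a,W,x)\mid U,X=x\}\mid X=x]=\E\{h_0(a,W,x)\mid X=x\}$ for every $a$ and almost every $x$, with no completeness and no exceptional set in $a$. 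Your completeness-based argument is still the right one for a different question. It shows that every solution of the observed bridge equation (the object the estimators actually target) identifies $m_0(a,x)$, and there almost every $a$ is the sharp conclusion.
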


\subsection{Causal response surfaces, policy value and regret}
\label{subsec:regret}

Proposition \ref{prop:proximal-identification} identifies the causal response surface $m_0(a,x)$ for the true bridge $h_0$. For any candidate bridge $h\in\cH$, define its induced response surface
$m_h(a,x)
    :=
    \E\{h(a,W,x)\mid X=x\}$.
A treatment policy is a measurable map $\pi:\cX\to\cA$. Its value is $V(\pi)
    :=
    \E\{Y(\pi(X))\}
    =
    \E\{m_0(\pi(X),X)\}$. For a candidate bridge $h$, define the induced policy and oracle policy by
\begin{equation}
    \pi_h(x) \in \argmax_{t\in\cA} m_h(t,x),
    \qquad
    a^\star(x) \in \argmax_{t\in\cA} m_0(t,x).
    \label{eq:induced-and-oracle-policy}
\end{equation}
We assume throughout that larger outcomes are preferred and, for notational simplicity, that maximizers are unique; fixed tie-breaking would give the same analysis. The regret of the bridge-induced policy is
\begin{equation}
    \Reg(h)
    :=
    \E\!\left[
        m_0(a^\star(X),X)-m_0(\pi_h(X),X)
    \right]
    =
    V(a^\star)-V(\pi_h).
    \label{eq:regret}
\end{equation}
This regret is decision-specific. It does not require the entire surface $m_h$ to be uniformly accurate; it requires sufficient accuracy at treatment levels that can change the optimizer.

\subsection{Why is ordinary proximal bridge learning not decision-aligned?}
\label{subsec:global-misalignment}

Standard bridge estimators minimize a global violation of the conditional moment restriction
or a kernelized maximum-moment analogue. This is natural for estimating a bridge or a full dose-response curve. It is not necessarily aligned with treatment selection. For example, a bridge may fit the moment equation well in treatment regions that are frequently observed but clearly suboptimal, while remaining inaccurate near the maximizer of $m_0(\cdot,x)$. Such localized errors can change $\pi_h(x)$ and produce large regret even when the global moment loss is small.

Our goal is therefore not to replace proximal identification. We keep the same bridge equation and the same proxy assumptions. The difference is the norm in which the bridge equation is solved. Section~\ref{sec:targeted-risk} constructs a localized-plus-global proximal risk that emphasizes treatment regions near the policy induced by a pilot bridge, while retaining a global component to stabilize the inverse problem and protect against missing the oracle region. 

Our motivation is related to outcome-oriented counterfactual prediction, which shows that global counterfactual prediction error can be poorly aligned with treatment selection \citep{zou2022counterfactual}. The challenge here is different: under unmeasured confounding, the causal surface is available only through a proximal bridge equation. Decision-targeting must therefore be imposed at the level of an inverse conditional-moment problem rather than at the level of a supervised counterfactual prediction loss.

\section{Policy-Targeted Proximal Risk}
\label{sec:targeted-risk}

This section develops the main population object of the paper: a weighted proximal bridge risk whose
geometry is aligned with optimal treatment selection and that controls a regret surrogate. All the proofs are in App. \ref{app:proofs}.

\subsection{A Surface Surrogate on Regret}
\label{subsec:surface-regret}

Define the squared surface error $G_h(X,t):=\bigl(m_0(t,X)-m_h(t,X)\bigr)^2$.

\begin{lemma}
\label{lemma:regret-decomp-surface}
The regret satisfies
    $\Reg(h)
    \le
    \sqrt{\E\!\left[G_h\!\left(X,\pi_h(X)\right)\right]}
    +
    \sqrt{\E\!\left[G_h\!\left(X,a^\star(X)\right)\right]}.
    \label{eq:regret-decomp}$
\end{lemma}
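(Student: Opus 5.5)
The plan is the standard plug-in decomposition: add and subtract the estimated surface $m_h$ at both the oracle and the induced treatment, then use optimality of $\pi_h$ for $m_h$. For each fixed $x$, write
\begin{align*}
m_0(a^\star(x),x)-m_0(\pi_h(x),x)
&=\bigl[m_0(a^\star(x),x)-m_h(a^\star(x),x)\bigr]
+\bigl[m_h(a^\star(x),x)-m_h(\pi_h(x),x)\bigr]\\
&\quad+\bigl[m_h(\pi_h(x),x)-m_0(\pi_h(x),x)\bigr].
\end{align*}
By the definition of $\pi_h$ in \eqref{eq:induced-and-oracle-policy}, $\pi_h(x)$ maximizes $m_h(\cdot,x)$ over $\cA$. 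Hence the middle bracket is $\le 0$ and can be dropped. This gives the pointwise bound
\[
m_0(a^\star(x),x)-m_0(\pi_h(x),x)\le \bigl|m_0(a^\star(x),x)-m_h(a^\star(x),x)\bigr|+\bigl|m_0(\pi_h(x),x)-m_h(\pi_h(x),x)\bigr|.
\]
The two terms on the right are exactly $\sqrt{G_h(x,a^\star(x))}$ and $\sqrt{G_h(x,\pi_h(x))}$.

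Next I take expectations over $X$. By \eqref{eq:regret}, the left side integrates to $\Reg(h)$. For each term on the right I apply Jensen's inequality, or equivalently Cauchy--Schwarz with the constant function $1$:
\[
\E\bigl|m_0(t(X),X)-m_h(t(X),X)\bigr|\le\sqrt{\E[G_h(X,t(X))]},
\]
with $t=\pi_h$ and with $t=a^\star$. Summing the two bounds gives the claimed inequality.

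The only step needing care is measure-theoretic rather than analytic. The maps $x\mapsto G_h(x,\pi_h(x))$ and $x\mapsto G_h(x,a^\star(x))$ must be measurable, and the expectations must be well defined. I would handle this by appealing to the standing convention that maximizers are unique, or chosen by a fixed measurable tie-breaking rule, so that $\pi_h$ and $a^\star$ are measurable policies. If either right-hand expectation is infinite, the bound holds trivially. Otherwise square-integrability of $m_0-m_h$ along these policies makes every term finite, and the regret is well defined and nonnegative by optimality of $a^\star$.
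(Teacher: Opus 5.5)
Your proposal is correct and matches the paper's proof: add and subtract $m_h$ at $a^\star(x)$ and $\pi_h(x)$, drop the nonpositive middle term using optimality of $\pi_h$ for $m_h$, then take expectations and apply Cauchy--Schwarz to each absolute-error term. Your measurability remark (a measurable tie-breaking rule so that $\pi_h$ and $a^\star$ are measurable policies) is a reasonable addition that the paper leaves implicit.
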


Lemma~\ref{lemma:regret-decomp-surface} makes the target explicit. To control regret, it is sufficient to control the surface error near the learned dose $\pi_h(X)$ and the oracle dose $a^\star(X)$. The first term can be targeted using the current or pilot policy. The second term is not directly observable and motivates retaining a global component.

Assume overlap, so that $p(t\mid x)>0$ for all $(x,t)\in\cX\times\cA$. Let $K$ be a nonnegative symmetric kernel supported on $[-1,1]$ with $\int K(u)\,du=1$, $\int uK(u)\,du=0$, and finite second moment $\mu_2(K)=\int u^2K(u)\,du<\infty$. Define $K_\tau(u)=\tau^{-1}K(u/\tau)$.

The localized surface loss around the pseudo-optimal treatment is
\begin{equation}
    A_\tau(h)
    :=
    \E_{X,A}\!\left[
    \frac{K\!\left((\pi_h(X)-A)/\tau\right)}{\tau\,p(A\mid X)}
    \,G_h(X,A)
    \right].
    \label{eq:Atau-surface}
\end{equation}
This approximates $G_h(X,\pi_h(X))$,  the surface error at $\pi_h(X)$; the proof and a precise kernel-bias statement is given in Appendix~\ref{app:proofs}. 

Define the global surface loss
\begin{equation}
    B(h)
    :=
    \E_X\!\left[
    \frac{1}{c_+-c_-}\int_{c_-}^{c_+} G_h(X,t)\,dt
    \right]
    =
    \E_{X,A}\!\left[
    \frac{G_h(X,A)}{(c_+-c_-)p(A\mid X)}
    \right].
    \label{eq:B-surface}
\end{equation}

In appendix \ref{app:proofs} we show that $B(h)$ controls the oracle dose term $G_h(X,a^\star(X))$  up to a Lipschitz remainder. 
The bound is
deliberately coarse; its role is not to sharply approximate the unknown oracle region, but to retain global stabilization.

The combined surface surrogate is $\gamma A_\tau(h)+B(h)$, where $\gamma>0$ controls the degree of localization. Larger $\gamma$ prioritizes errors near the pseudo-optimal treatment, while smaller $\gamma$ emphasizes the global response surface.

\begin{proposition}[Weighted surface surrogate]
\label{prop:weighted-surrogate-surface}
Let $c_\gamma:=\max\{1,\gamma^{-1}\}$. Under the local kernel-approximation and global Lipschitz conditions stated in Propositions~\ref{prop:kernel-bias-surface} and~\ref{prop:global-surface-bound} (Appendix \ref{app:proofs}),
\begin{equation}
    \Reg(h)
    \le
    \sqrt{2c_\gamma}\,
    \sqrt{\gamma A_\tau(h)+B(h)}
    +
    \sqrt{2\!\left(C_h\tau^2+\frac{L}{2}(c_+-c_-)\right)}.
    \label{eq:weighted-surrogate-bound}
\end{equation}
\end{proposition}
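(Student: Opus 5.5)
The argument combines Lemma~\ref{lemma:regret-decomp-surface} with the two appendix approximation results, then merges the two square roots with elementary inequalities. I take the appendix statements in the following form. Proposition~\ref{prop:kernel-bias-surface} gives
\[
\E[G_h(X,\pi_h(X))]\le A_\tau(h)+C_h\tau^2 ,
\]
a second-order kernel-bias bound that uses $\int uK=0$ and $\mu_2(K)<\infty$, with $C_h$ absorbing $\mu_2(K)$ and a bound on $\partial_t^2 G_h$. Proposition~\ref{prop:global-surface-bound} gives
\[
\E[G_h(X,a^\star(X))]\le B(h)+\tfrac{L}{2}(c_+-c_-),
\]
which follows from $|G_h(x,a^\star)-G_h(x,t)|\le L|a^\star-t|$ averaged uniformly over $t\in\cA$, since $\frac{1}{c_+-c_-}\int|a^\star-t|\,dt\le (c_+-c_-)/2$. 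If the appendix statements carry slightly different constants, the same argument goes through with those constants.

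Starting from Lemma~\ref{lemma:regret-decomp-surface}, I substitute these bounds. This gives
\[
\Reg(h)\le \sqrt{A_\tau(h)+C_h\tau^2}+\sqrt{B(h)+\tfrac{L}{2}(c_+-c_-)} .
\]
Next I use $\sqrt{u}+\sqrt{v}\le\sqrt{2(u+v)}$ to obtain
\[
\Reg(h)\le\sqrt{2\bigl(A_\tau+B+R\bigr)},\qquad R:=C_h\tau^2+\tfrac{L}{2}(c_+-c_-).
\]
Then subadditivity of the square root, $\sqrt{2(S+R)}\le\sqrt{2S}+\sqrt{2R}$, separates the statistical part from the bias remainder. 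This already produces the second term of \eqref{eq:weighted-surrogate-bound}.

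It remains to compare $A_\tau+B$ with $\gamma A_\tau+B$. Write $c_\gamma=\max\{1,\gamma^{-1}\}$. If $\gamma\ge1$, then $A_\tau\le\gamma A_\tau$, so $A_\tau+B\le\gamma A_\tau+B$. If $\gamma<1$, then $A_\tau+B=\gamma^{-1}(\gamma A_\tau)+B\le\gamma^{-1}(\gamma A_\tau+B)$. In both cases $A_\tau+B\le c_\gamma(\gamma A_\tau+B)$, using $A_\tau,B\ge0$ because $K\ge0$ and $G_h\ge0$. Substituting gives $\sqrt{2S}\le\sqrt{2c_\gamma}\sqrt{\gamma A_\tau(h)+B(h)}$, which completes the bound.

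The only delicate point is that $A_\tau$ is built around $\pi_h(X)$, which itself depends on $h$. The kernel-bias proposition must therefore be applied pointwise in $x$ with $t_0=\pi_h(x)$ fixed. Two further issues arise there. First, boundary effects near $c_\pm$ need to be handled, either as assumed in that proposition or via $K\ge0$ and one-sided truncation. Second, $C_h$ must be uniform in $x$ so it can be taken outside the expectation. Everything else is elementary algebra.
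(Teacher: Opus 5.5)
Your proposal is correct and follows the paper's proof essentially step for step. Both arguments use the same sequence: the regret lemma, the two appendix bounds, then $\sqrt{u}+\sqrt{v}\le\sqrt{2(u+v)}$, the comparison $A_\tau+B\le c_\gamma(\gamma A_\tau+B)$, and subadditivity of the square root; the only difference, the order of the last two steps, is immaterial.

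Your closing concerns are already handled by the paper's kernel-bias proposition. It requires only an integrable envelope $M_h(X)$, with $C_h=\tfrac{\mu_2(K)}{2}\E[M_h(X)]$, not a constant uniform in $x$. It avoids boundary effects through the interior condition $\pi_h(X)\in[c_-+\kappa,c_+-\kappa]$ with $\tau<\kappa$.
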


Using the definitions of \(A_\tau(h)\) and \(B(h)\), the combined surface
objective \(\gamma A_\tau(h)+B(h)\) can be written as the policy-targeted
surface risk
\begin{equation}
    \mathcal L^{\mathrm{surf}}_{\tau,\lambda}(h)
    :=
    \E_{X,A}\!\left[
    \frac{1+\lambda K\!\left((\pi_h(X)-A)/\tau\right)}
    {(c_+-c_-)p(A\mid X)}
    \,\bigl(m_0(A,X)-m_h(A,X)\bigr)^2
    \right],
    \label{eq:eq10-analog-surface}
\end{equation}
where $\lambda=(c_+-c_-)\gamma/\tau$.

\subsection{From weighted surface risk to weighted bridge risk}
\label{subsec:bridge-risk}

The surface risk in~\eqref{eq:eq10-analog-surface} is not directly observable because it depends on $m_0$. Proximal inference instead learns through the bridge moment equation. This raises two key questions:
\badge{1} can the policy-targeted weighting be justified at the bridge level, so that the weighted bridge loss controls the decision-relevant surface risk and hence regret?
\badge{2} does introducing such weights change the population bridge target?

For any nonnegative measurable weight $\omega:\cA\times\cX\to[0,\infty)$, define
\begin{align}
    \mathcal L_{\mathrm{surf},\omega}(h)
    &:=
    \E\!\left[
    \omega(A,X)\bigl(m_h(A,X)-m_0(A,X)\bigr)^2
    \right],
    \label{eq:weighted-surface-risk} \\
    \mathcal L_{\mathrm{br},\omega}(h)
    &:=
    \E\!\left[
    \omega(A,X)\,
    \E\{Y-h(A,W,X)\mid A,Z,X\}^2
    \right].
    \label{eq:weighted-bridge-risk}
\end{align}

Let $\Delta_h(a,w,x):=h(a,w,x)-h_0(a,w,x)$ and define the outcome-bridge operator
\begin{equation}
    (T\Delta)(a,z,x):=
    \E\{\Delta(a,W,x)\mid A=a,Z=z,X=x\}.
    \label{eq:bridge-operator}
\end{equation}

\begin{proposition}[Weighted bridge loss controls weighted surface loss]
\label{prop:weighted_bridge_to_surface}
Assume that $h_0$ satisfies the outcome bridge restriction. Let $\nu:=P_{A,W,X}$ and $\mu:=P_{W\mid X}\otimes P_{A\mid X}\otimes P_X$. Suppose there exists a finite weighted ill-posedness constant
\begin{equation}
    \tau_{\omega}
    :=
    \sup_{g\in\cH}
    \frac{\|\Delta_g\|_{L_2(\omega\,d\nu)}}
         {\|T\Delta_g\|_{L_2(\omega\,dP_{A,Z,X})}}
    < \infty,
    \label{eq:weighted-illposedness}
\end{equation}
and suppose the density ratio $d\mu/d\nu$ is essentially bounded by $C_\rho$. Then, for every $h\in\cH$,
\begin{equation}
    \mathcal L_{\mathrm{surf},\omega}(h)
    \le
    C_\rho\tau_{\omega}^2\,
    \mathcal L_{\mathrm{br},\omega}(h).
    \label{eq:bridge-to-surface}
\end{equation}
\end{proposition}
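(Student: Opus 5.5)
We chain three elementary comparisons: bridge residual to the operator $T$, the operator to the bridge error $\Delta_h$, and the bridge error to the surface error. Throughout, we take $\omega(A,X)$ to depend only on $(A,X)$. That is what lets the same weight be used under $\nu=P_{A,W,X}$, under $\mu$, and under $P_{A,Z,X}$.

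\textbf{Step 1: residual equals $T\Delta_h$.} By the bridge restriction, $\E\{Y\mid A,Z,X\}=\E\{h_0(A,W,X)\mid A,Z,X\}$. Hence
\[
\E\{Y-h(A,W,X)\mid A,Z,X\}=-(T\Delta_h)(A,Z,X).
\]
This gives $\mathcal L_{\mathrm{br},\omega}(h)=\|T\Delta_h\|^2_{L_2(\omega\,dP_{A,Z,X})}$. By the definition of $\tau_\omega$ in \eqref{eq:weighted-illposedness}, applied with $g=h\in\cH$,
\[
\|\Delta_h\|^2_{L_2(\omega\,d\nu)}\le \tau_\omega^2\,\mathcal L_{\mathrm{br},\omega}(h).
\]
If the denominator vanishes, finiteness of $\tau_\omega$ forces $\|\Delta_h\|_{L_2(\omega d\nu)}=0$, so the inequality still holds.

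\textbf{Step 2: surface error is an average of $\Delta_h$ under $\mu$.} By definition, $m_h(a,x)-m_0(a,x)=\E\{\Delta_h(a,W,x)\mid X=x\}=\int \Delta_h(a,w,x)\,dP_{W\mid X=x}(w)$. Evaluating at $a=A$, Jensen's inequality for the conditional law $P_{W\mid X}$ gives
\[
(m_h(A,X)-m_0(A,X))^2\le \int \Delta_h(A,w,X)^2\,dP_{W\mid X}(w).
\]
Multiply by $\omega(A,X)\ge0$ and take expectations over $(A,X)\sim P_{A,X}=P_{A\mid X}\otimes P_X$. This yields
\[
\mathcal L_{\mathrm{surf},\omega}(h)\le \int \omega\,\Delta_h^2\,d\mu=\|\Delta_h\|^2_{L_2(\omega\,d\mu)}.
\]
The point of this step is that $W$ is drawn independently of $A$ given $X$ in the surface loss: the response surface integrates $W$ against its marginal given $X$, not against $P_{W\mid A,X}$. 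That is exactly the product measure $\mu$.

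\textbf{Step 3: change of measure.} Since $d\mu/d\nu\le C_\rho$ essentially and the integrand $\omega\Delta_h^2$ is nonnegative,
\[
\int\omega\Delta_h^2\,d\mu=\int\omega\Delta_h^2\,\frac{d\mu}{d\nu}\,d\nu\le C_\rho\|\Delta_h\|^2_{L_2(\omega d\nu)}.
\]
Combining Steps 1–3 gives \eqref{eq:bridge-to-surface}.

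The main obstacle is Step 2, specifically identifying the right measure. The tempting route is to apply Jensen under $P_{W\mid A,X}$, but that is wrong: $W$ is not independent of $A$ given $X$ because of $U$. One must also make sure the absolute continuity $\mu\ll\nu$ implicit in the density-ratio hypothesis is used correctly. The remaining steps are direct substitutions once $\omega$ is known to depend only on $(A,X)$.
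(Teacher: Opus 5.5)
Your proposal is correct and follows the paper's proof essentially step for step. The paper uses the same four ingredients, only in a different order: Jensen's inequality under $P_{W\mid X}$ to bound $\mathcal L_{\mathrm{surf},\omega}(h)$ by $\|\Delta_h\|^2_{L_2(\omega\,d\mu)}$, the density-ratio bound to pass from $\mu$ to $\nu$, the bridge restriction to identify $\mathcal L_{\mathrm{br},\omega}(h)=\|T\Delta_h\|^2_{L_2(\omega\,dP_{A,Z,X})}$, and the definition of $\tau_\omega$.

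You also treat the zero-denominator case explicitly, which the paper skips. Your argument there is valid for the statement as written, where the supremum runs over all $g\in\cH$ with the convention $c/0=+\infty$. The appendix, however, restricts the supremum to $\|T\Delta_g\|>0$; under that definition, finiteness of $\tau_\omega$ no longer forces $\|\Delta_h\|_{L_2(\omega\,d\nu)}=0$, and that case would need a separate argument (e.g.\ via completeness).
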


Proof in App. \ref{app:bridge-to-surface-proof}. The condition $\tau_\omega<\infty$ is the weighted analogue of the usual
restricted ill-posedness condition for proximal inverse problems, used in works like \cite{kallus2021causal, dikkala2020minimax, chen2012nonparametric}. It is also
implied by this standard unweighted condition under bounded weights; see Appendix~\ref{app:weighted-illposedness-sufficient}.

The result shows that weighting the bridge loss changes the population norm in which bridge moment violations are
controlled, and this norm directly controls the weighted surface target that
appears in the regret surrogate.

For each $h\in\cH$, define the self-weighted policy-targeted bridge weight
\begin{equation}
    \omega_h(a,x)
    :=
    \frac{1+\lambda K((\pi_h(x)-a)/\tau)}{(c_+-c_-)p(a\mid x)},
    \qquad
    \lambda:=\frac{(c_+-c_-)\gamma}{\tau}.
    \label{eq:self-weight}
\end{equation}

\begin{theorem}[Policy-targeted bridge loss controls regret]
\label{thm:self_weighted_bridge_to_regret}
Under the conditions of Propositions~\ref{prop:weighted-surrogate-surface} and~\ref{prop:weighted_bridge_to_surface}, for every $h\in\cH$,
\begin{equation}
    \Reg(h)
    \le
    \sqrt{2c_\gamma C_\rho}\,\tau_{\omega_h}\,
    \sqrt{\mathcal L_{\mathrm{br},\omega_h}(h)}
    +
    \sqrt{2\left(C_h\tau^2+\frac{L}{2}(c_+-c_-)\right)}.
    \label{eq:main-regret-bound}
\end{equation}
\end{theorem}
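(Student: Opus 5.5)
The plan is to chain Proposition~\ref{prop:weighted-surrogate-surface} with Proposition~\ref{prop:weighted_bridge_to_surface}, applied to the specific self-weight $\omega=\omega_h$ from \eqref{eq:self-weight}. The steps below show that the combined surface objective $\gamma A_\tau(h)+B(h)$ is exactly the weighted surface risk $\mathcal L_{\mathrm{surf},\omega_h}(h)$. The weighted bridge loss then bounds that risk, and the resulting inequality is substituted into \eqref{eq:weighted-surrogate-bound}.

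First, I verify the identity $\gamma A_\tau(h)+B(h)=\mathcal L_{\mathrm{surf},\omega_h}(h)$. By \eqref{eq:Atau-surface},
\[
\gamma A_\tau(h)=\E\!\left[\frac{\gamma}{\tau}\,\frac{K((\pi_h(X)-A)/\tau)}{p(A\mid X)}\,G_h(X,A)\right].
\]
Since $\gamma/\tau=\lambda/(c_+-c_-)$, this equals
\[
\E\!\left[\frac{\lambda K((\pi_h(X)-A)/\tau)}{(c_+-c_-)p(A\mid X)}\,G_h(X,A)\right].
\]
Adding $B(h)$ from \eqref{eq:B-surface} gives exactly \eqref{eq:eq10-analog-surface}, that is, $\E[\omega_h(A,X)\,G_h(X,A)]=\mathcal L_{\mathrm{surf},\omega_h}(h)$ as defined in \eqref{eq:weighted-surface-risk}. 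Nonnegativity of $K$ ensures $\omega_h\ge 0$ and measurable, so it is an admissible weight.

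Second, I apply Proposition~\ref{prop:weighted_bridge_to_surface} with $\omega=\omega_h$, which gives
\[
\mathcal L_{\mathrm{surf},\omega_h}(h)\le C_\rho\,\tau_{\omega_h}^2\,\mathcal L_{\mathrm{br},\omega_h}(h).
\]
This is the step needing care. The weight depends on $h$ itself through $\pi_h$, whereas Proposition~\ref{prop:weighted_bridge_to_surface} is stated for a fixed weight $\omega$ with a supremum over $g\in\cH$. The way around this is to freeze $\omega:=\omega_h$ as a fixed function of $(a,x)$ once $h$ is given. The proposition then holds for every element of $\cH$, in particular for this same $h$. This is why the constant is $\tau_{\omega_h}$, whose finiteness is part of the hypotheses ("conditions of Proposition~\ref{prop:weighted_bridge_to_surface}", read as holding for each such weight). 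No uniformity over $h$ is needed, because the bound is pointwise in $h$.

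Third, I substitute into \eqref{eq:weighted-surrogate-bound}:
\[
\sqrt{2c_\gamma}\sqrt{\gamma A_\tau(h)+B(h)}\le\sqrt{2c_\gamma C_\rho}\,\tau_{\omega_h}\sqrt{\mathcal L_{\mathrm{br},\omega_h}(h)}.
\]
Monotonicity of the square root justifies this step, and the remainder term $\sqrt{2(C_h\tau^2+\tfrac L2(c_+-c_-))}$ carries over unchanged. Beyond the self-referential weight in the second step, the only thing to double-check is the algebraic matching of $\lambda$ with $\gamma$ and $\tau$ in the first step.
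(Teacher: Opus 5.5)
Your proposal is correct and follows the paper's own proof essentially step for step. You identify $\gamma A_\tau(h)+B(h)$ with $\mathcal L_{\mathrm{surf},\omega_h}(h)$, apply Proposition~\ref{prop:weighted_bridge_to_surface} with the weight frozen at $\omega=\omega_h$, and substitute the result into the bound of Proposition~\ref{prop:weighted-surrogate-surface}; your explicit check of $\gamma/\tau=\lambda/(c_+-c_-)$ and your remark on the self-referential weight spell out what the paper leaves implicit.
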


\begin{remark}
Theorem~\ref{thm:self_weighted_bridge_to_regret} highlights the main conceptual distinction from outcome-oriented reweighting in unconfounded settings. Under unmeasured confounding, the learner cannot directly regress on counterfactual outcomes or on the causal response surface. It must instead solve a proximal inverse problem through the bridge moment equation. The theorem shows that solving this inverse problem in a policy-targeted weighted norm controls a regret surrogate, with sensitivity to the localized inverse problem captured by the weighted ill-posedness constant $\tau_{\omega_h}$.
\end{remark}

In practice, we approximate this self-weighted objective by a lagged fixed-point procedure: a bridge fitted under the current weights induces a response surface and pseudo-optimal policy, which are then used to define the weights for the next bridge fit. Section \ref{sec:algorithms} describes this practical approximation.

The preceding results explain why the weighted norm matters for regret control. It remains to check that this weighting does not change the population bridge target in the well-specified problem. The next proposition shows that, under strictly positive weights, it does not.

\begin{proposition}[Population target preservation]
\label{prop:population-target-preservation}
Suppose $h_0\in\cH$ satisfies the outcome bridge restriction. Let $\omega:\cA\times\cX\to(0,\infty)$ be measurable and satisfy $\omega(A,X)>0$ almost surely. Then $h_0$ is a population minimizer of $\mathcal L_{\mathrm{br},\omega}$. Moreover, for any $h\in\cH$,
\begin{equation}
    \mathcal L_{\mathrm{br},\omega}(h)=0
    \quad\Longleftrightarrow\quad
    \E\{Y-h(A,W,X)\mid A,Z,X\}=0
    \quad\text{a.s.}
    \label{eq:weighted-zero-risk-equivalence}
\end{equation}
Consequently, the weighted and unweighted population bridge risks have the same population minimizer. \end{proposition}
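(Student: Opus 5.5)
The plan is to work directly from the definition of $\mathcal L_{\mathrm{br},\omega}$ as the expectation of a nonnegative integrand. Write $r_h(A,Z,X):=\E\{Y-h(A,W,X)\mid A,Z,X\}$. Then
\[
\mathcal L_{\mathrm{br},\omega}(h)=\E[\omega(A,X)\,r_h(A,Z,X)^2]\ge 0
\]
for every $h\in\cH$, since $\omega\ge 0$. The outcome bridge restriction says exactly that $r_{h_0}=0$ almost surely, so $\mathcal L_{\mathrm{br},\omega}(h_0)=0$. Because $h_0\in\cH$ and every risk is nonnegative, $h_0$ attains the infimum over $\cH$ and is therefore a population minimizer. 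This establishes the first claim.

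For the equivalence \eqref{eq:weighted-zero-risk-equivalence}, the direction ($\Leftarrow$) is immediate: if $r_h=0$ almost surely, the integrand vanishes almost surely. For ($\Rightarrow$), I would invoke the standard fact that a nonnegative random variable with zero expectation vanishes almost surely. This gives $\omega(A,X)\,r_h(A,Z,X)^2=0$ almost surely. On the event $\{\omega(A,X)>0\}$, which has probability one by assumption, this forces $r_h(A,Z,X)=0$. Intersecting the two probability-one events gives $r_h=0$ almost surely. The only technical point is measurability: $r_h$ is defined as a version of the conditional expectation, which is a measurable function of $(A,Z,X)$. I would also note that $\mathcal L_{\mathrm{br},\omega}(h)$ is well defined in $[0,\infty]$ even if $\omega$ is unbounded, so no integrability assumption is needed.

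For the final consequence, I would apply the same argument with $\omega\equiv 1$, which is strictly positive. This shows that the unweighted zero set is also $\{h\in\cH: r_h=0 \text{ a.s.}\}$. Both risks are nonnegative and both vanish at $h_0\in\cH$, so the minimum value is $0$ in both cases. Hence the set of minimizers of each risk equals its zero set, and by the equivalence just proved these two sets coincide. If the completeness condition in Assumption~\ref{ass:completeness} makes the bridge unique in $\cH$, this common set is $\{h_0\}$. Otherwise, "the same population minimizer" should be read as "the same set of minimizers."

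None of these steps is a real obstacle. The one point needing care is the last sentence of the statement, since bridges need not be unique. I would handle it by phrasing the conclusion as equality of the argmin sets rather than by claiming uniqueness.
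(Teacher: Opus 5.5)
Your proof is correct and follows the paper's argument essentially step by step: nonnegativity of the risk together with $\mathcal L_{\mathrm{br},\omega}(h_0)=0$, then the fact that a nonnegative variable with zero mean vanishes almost surely combined with $\omega(A,X)>0$ a.s., and finally the specialization $\omega\equiv 1$ for the unweighted risk. Reading the final claim as equality of the zero sets (argmin sets) matches the paper's own conclusion of a common ``zero-risk solution,'' and it is the right reading, because Assumption~\ref{ass:completeness} concerns completeness for $U$ and $Z$, not for $W$ given $(A,Z,X)$, so it does not by itself make the outcome bridge unique.
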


Proof in App. \ref{app:proof-population-target-preservation}. For the policy-targeted weight in~\eqref{eq:self-weight}, the global component in the numerator, together with positivity of $p(a\mid x)$, ensures that $\omega_h(A,X)>0$ almost surely, so the preservation result applies.

Thus, the role of the weight is not to define a different population bridge target, but to define the geometry in which deviations from the bridge equation are penalized. This distinction matters in \textbf{finite samples}, under \textbf{regularization}, and under \textbf{misspecification}, where not all bridge-moment violations can be driven to zero. 

\section{Algorithms}
\label{sec:algorithms}

Let $\{O_i=(Y_i,A_i,Z_i,W_i,X_i)\}_{i=1}^n$ denote the observed sample, with
$A_i\in\mathcal A=[c_-,c_+]$. Write
$U_i=(A_i,W_i,X_i)$, $V_i=(A_i,Z_i,X_i)$ and  $r_i(h)=Y_i-h(U_i)$. The population results above suggest solving the proximal bridge equation in a
policy-targeted weighted norm. Algorithmically, this leads to an iterative
procedure that alternates between
\[
    \widehat h^{(s)}
    \;\longrightarrow\;
    \widehat m^{(s)}
    \;\longrightarrow\;
    \widehat\pi^{(s)}
    \;\longrightarrow\;
    \widehat\omega^{(s+1)} .
\]
Thus the treatment rule is not read directly from the fitted bridge. Each
iteration first estimates an outcome bridge, then projects the bridge into an
induced causal response surface, then optimizes the treatment over this surface,
and finally uses the resulting pseudo-optimal treatment to construct the next
policy-targeted bridge weights.

\textbf{Policy-targeted bridge fitting.}
Let $\widehat p(a\mid x)$ be an estimate of the generalized propensity score (in our experiments we estimate this conditional density using a normalizing-flow
model).
We initialize with the global inverse-propensity weight
    $\widehat\omega_i^{(0)}
    =
    \frac{1}{(c_+-c_-)\widehat p(A_i\mid X_i)} .
$
This initialization makes the first bridge fit globally treatment-uniform:
rather than emphasizing treatment values in proportion to how often they are
observed, the inverse-density factor targets the bridge loss averaged uniformly
over $\mathcal A$. Later iterations add the policy-localized component.
For iteration $s=0,\ldots,S$, given weights
$\widehat\omega^{(s)}=(\widehat\omega_1^{(s)},\ldots,\widehat\omega_n^{(s)})$,
we estimate a bridge by solving
    $\widehat h^{(s)}
    \in
    \arg\min_{h\in\mathcal H}
    \left\{
        \widehat{\mathcal L}^{\rm br}_{n,\widehat\omega^{(s)}}(h)
        +
        \eta_h \Omega_h(h)
    \right\},
    \label{eq:generic-weighted-bridge-fit}
$
where $\widehat{\mathcal L}^{\rm br}_{n,\widehat\omega}(h)$ is a weighted
empirical bridge-moment loss and $\Omega_h$ is the solver-specific bridge
regularizer. Different proximal solvers correspond to different choices of
$\mathcal H$, $\widehat{\mathcal L}^{\rm br}_{n,\widehat\omega}$, and
$\Omega_h$.

\textbf{Projection to a response surface.}
The fitted bridge is then projected into an induced causal response surface,
$m_{\widehat h^{(s)}}(a,x)
=
\mathbb E\{\widehat h^{(s)}(a,W,x)\mid X=x\}$. In practice, we approximate this
projection by evaluating the bridge on a treatment grid
$\mathcal G=\{a_1,\ldots,a_M\}\subset\mathcal A$ and forming bridge
pseudo-outcomes over this grid. These pseudo-outcomes are formed by
cross-fitting: for each fold $I_k$ in a partition of $\{1,\ldots,n\}$, the bridge $\widehat h_{-k}^{(s)}$ is
trained on the observations outside $I_k$ and evaluated on the held-out
observations, giving
$\widetilde H_{im}^{(s)}=\widehat h_{-k}^{(s)}(a_m,W_i,X_i)$ for
$i\in I_k$ and $m=1,\ldots,M$. We then pool these out-of-fold pseudo-outcomes
across folds and regress them on $(a_m,X_i)$ to obtain a single surface
estimator $\widehat m^{(s)}(a,x)$. The observation-specific pseudo-optimal
treatment used for weighting is
$\widehat\pi_i^{(s)}
\in
\arg\max_{a\in\mathcal G}
\widehat m^{(s)}(a,X_i)$.

\textbf{Weight update.}
For $s<S$, we update the empirical weights by evaluating
\eqref{eq:self-weight} at $(a,x)=(A_i,X_i)$, replacing
$p$ by $\widehat p$, and using the lagged pseudo-policy
$\widehat\pi_i^{(s)}$. Thus the new weights combine a global inverse-density
component with a localized component around the current pseudo-optimal
treatment. The global component prevents the bridge fit from collapsing
entirely onto the current pseudo-optimal region and stabilizes the proximal
inverse problem.

\subsection{Solver adapters}
\label{app:solver_adapter_table}

The policy-targeted weighting scheme is a general wrapper for proximal bridge
learning. In this work, we instantiate it in four representative proximal bridge
solvers: \textbf{PMMR}, \textbf{NMMR}, \textbf{KPV}, and \textbf{DFPV}. The construction does not change the target outcome bridge: it still targets an
outcome bridge $h_0$ satisfying the bridge equation. Instead, it changes the
empirical norm in which violations of that equation are penalized.

\begin{table}[h]
\centering
\scriptsize
\setlength{\tabcolsep}{4pt}
\renewcommand{\arraystretch}{0.9}
\caption{Solver-specific adapters for the policy-targeted bridge objective.}
\label{tab:solver_weight_adapters_}
\begin{tabular}{lll}
\toprule
Solver & Baseline objective component & Policy-targeted adapter \\
\midrule
PMMR
&
$r^\top K_V r$
&
$r^\top D_{\widehat\omega}^{1/2}K_VD_{\widehat\omega}^{1/2}r$
\\
NMMR
&
$r(\theta)^\top K_V r(\theta)$
&
$r(\theta)^\top D_{\widehat\omega}^{1/2}K_VD_{\widehat\omega}^{1/2}r(\theta)$
\\
KPV (DFPV)
&
second-stage kernel ridge
&
weight second-stage bridge loss by $\widehat\omega_i$
\\
\bottomrule
\end{tabular}
\end{table}

The same weights $\widehat\omega_i$ can be used across different
proximal bridge solvers. The key solver-specific step is to insert the weights at the level where each
method represents the bridge moment. In PMMR and NMMR, the bridge moment is
represented directly by an MMR quadratic in residual pairs. In KPV and DFPV, the
bridge equation is enforced through a second-stage bridge regression after a
first-stage nuisance regression. Table~\ref{tab:solver_weight_adapters_}
summarizes the resulting adapters. $K_V$ is the moment-kernel Gram matrix on $V=(A,Z,X)$ and
$D_{\widehat\omega}=\operatorname{diag}(\widehat\omega_1,\ldots,\widehat\omega_n)$.
For PMMR and NMMR, the adapter is equivalent to replacing each residual by
$\sqrt{\widehat\omega_i}r_i$ inside the MMR loss. For KPV and DFPV, the weights
enter as ordinary loss weights in the second-stage bridge regression; the
first-stage nuisance regression is left unchanged. In the experiments we report two NMMR variants: NMMR-V uses the full V-statistic MMR objective, while NMMR-U uses the corresponding diagonal-deleted U-statistic objective. See an introduction to these solvers and a rigorous derivation of our policy-targeted variants in the appendix.

\begin{figure}[t]
\begin{subfigure}
{0.49\linewidth}
        \centering
    \includegraphics[width=\linewidth]{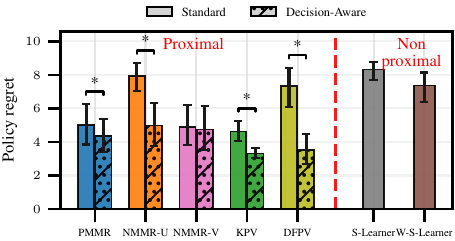}
    \caption{Synthetic experiment}
    \label{fig:semi-synthetic_regret}
\end{subfigure}
\begin{subfigure}{0.49\linewidth}
        \centering
    \includegraphics[width=\linewidth]{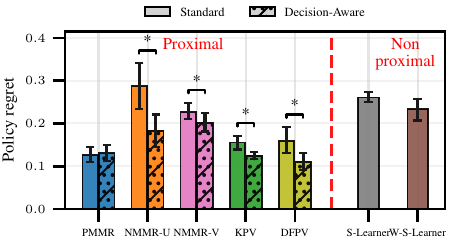}
    \caption{semi-synthetic experiment}
    \label{fig:semi-synthetic_regret}
\end{subfigure}
        \caption{Regret metrics in the synthetic and semi-synthetic datasets. Mean and 95$\%$ confidence intervals over 10 seeds are shown. * denotes statistical difference between baselines and DA models.}
    \label{fig:regret_results}
\end{figure}

\begin{figure}[t]
\begin{subfigure}{0.49\linewidth}
        \centering
    \includegraphics[width=\linewidth]{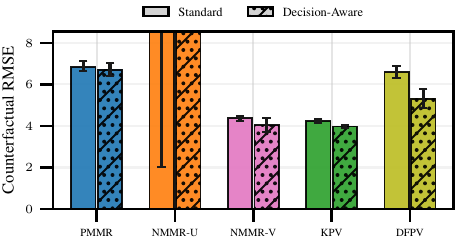}
    \caption{Synthetic experiment}
    \label{fig:synthetic_counterfactual}
\end{subfigure}
\begin{subfigure}{0.49\linewidth}
        \centering
    \includegraphics[width=\linewidth]{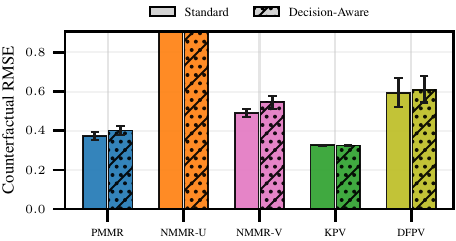}
    \caption{semi-synthetic experiment}
    \label{fig:semi-synthetic_counterfactual}
\end{subfigure}
        \caption{Counterfactual RMSE in the synthetic and semi-synthetic datasets. Mean and 95$\%$ confidence intervals over 10 seeds are shown.}
    \label{fig:counterfactual_results}
\end{figure}

\Needspace{6\baselineskip}

\section{Evaluation}
\label{sec:experiments}

\textbf{Datasets and baselines.}

\begin{wrapfigure}{r}{0.42\textwidth}
    \centering
    \includegraphics[width=\linewidth]{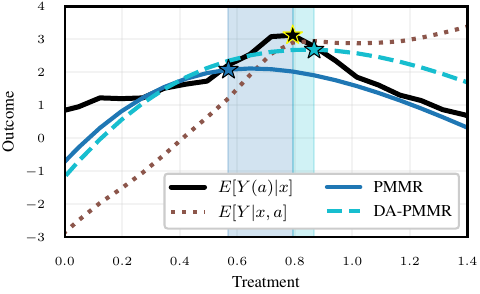}
    \caption{Single-individual example from the synthetic benchmark. DA-PMMR improves the response estimate near the oracle optimum, yielding a treatment recommendation closer to the optimal dose even though it does not uniformly improve the full response surface.}
    \label{fig:motivation}
\end{wrapfigure}

We evaluate on two proximal continuous-treatment benchmarks with hidden
confounding and proxy variables. Each observation is
$O=(X,Z,W,A,Y)$, where $X$ are observed covariates, $Z$ and $W$ are
treatment- and outcome-inducing proxies, $A$ is a continuous treatment, and
$Y$ is the outcome. Both benchmarks have non-monotone causal response surfaces
with interior, covariate-dependent optima, making regret sensitive to errors
near the optimal dose. The first benchmark is a fully synthetic proximal DGP
adapted from \citet{wu2024doubly}; the second is a semi-synthetic TCGA
benchmark using gene-expression features from The Cancer Genome Atlas
\citep{weinstein2013cancer}. See App. \ref{app:dataset-details}. We compare PMMR, NMMR, KPV, and DFPV with their
decision-aware (DA) variants, and include non-proximal S-learner and weighted
S-learner baselines to assess the effect of ignoring proxy-based adjustment. Base learners are tuned by held-out factual RMSE, and the selected
hyperparameters are reused for their DA variants. The weighting parameters
$(\tau,\lambda,n_{\mathrm{rounds}})$ are calibrated once on a synthetic
validation setting and then fixed across methods and datasets; this prevents the
decision-aware variants from receiving model-specific tuning based on oracle
regret, which would not be available in observational applications. See
Appendix~\ref{app:hparam-selection} for details. 
supplementary materials.

\begin{figure}[t]
\begin{subfigure}{0.32\linewidth}
        \centering
    \includegraphics[width=\linewidth]{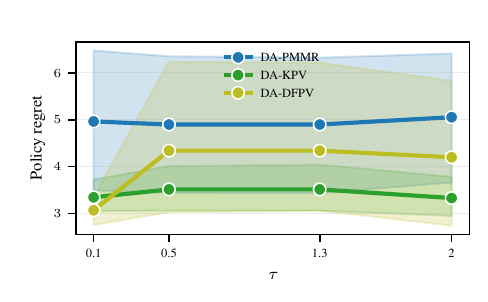}
    \caption{Varying $\tau$}
    \label{fig:tau}
\end{subfigure}
\begin{subfigure}{0.32\linewidth}
        \centering
    \includegraphics[width=\linewidth]{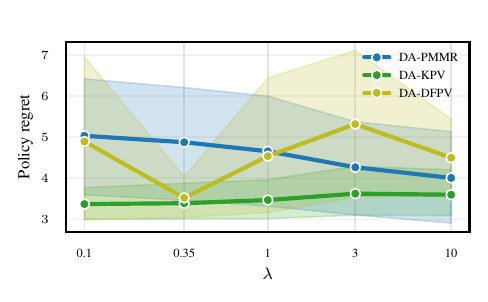}
    \caption{Varying $\lambda$}
    \label{fig:lambda}
\end{subfigure}
\begin{subfigure}{0.32\linewidth}
    \centering
    \includegraphics[width=\linewidth]{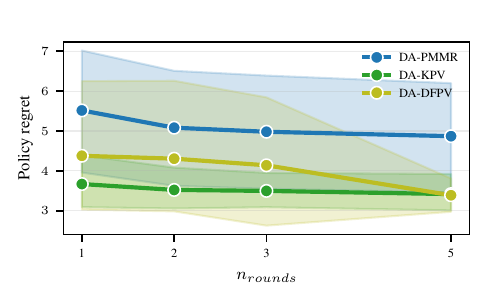}
    \caption{Varying $n_{rounds}$}
    \label{fig:nrounds}
\end{subfigure}
\caption{Sensitivity of weighting hyperparameters, holding the other two fixed. Mean and $95\%$ confidence intervals over the regret in 5 seeds reported.}
\label{fig:ablation}
\end{figure}

\textbf{Results.}
Figure~\ref{fig:regret_results} reports policy regret on the synthetic and
semi-synthetic benchmarks. Across both datasets, decision-aware weighting
generally reduces regret for proximal bridge learners. The gains are especially
visible for the solvers whose baseline response surfaces place substantial error
near the treatment optimum, such as NMMR-U, KPV, and DFPV. In the synthetic
benchmark, the best-performing method is DA-KPV, while in the semi-synthetic benchmark the lowest regret is obtained by
DA-DFPV; Wilcoxon tests show statistical differences between DA and baseline versions for most models. Also, we see that the non-proximal S-learner baselines are consistently worse.
This supports the main hypothesis that, under hidden confounding, proxy-based
bridge learning is necessary, and that reweighting the bridge loss toward
decision-relevant treatment regions can further improve treatment selection.

Figure~\ref{fig:counterfactual_results} reports counterfactual-estimation RMSE over the full treatment region sampled uniformly. The results are mixed: overall, we do not observe substantial differences between the decision-aware (DA) variants and their baseline counterparts. This is expected, since the proposed objective is not designed to reduce global counterfactual error uniformly over the treatment space, but rather to improve the learned response surface near treatment values that affect the induced policy. The individual-level example in Figure~\ref{fig:motivation} illustrates this distinction. Importantly, the DA variants do not appear to degrade general counterfactual estimation. In Appendix~\ref{app:additional-results}, we provide the corresponding counterfactual-RMSE tables, where the behavior of NMMR-U can be more clearly seen. We also report factual prediction metrics, which lead to a similar conclusion.

\begin{wrapfigure}{r}{0.42\textwidth}
    \centering
    \includegraphics[width=\linewidth]{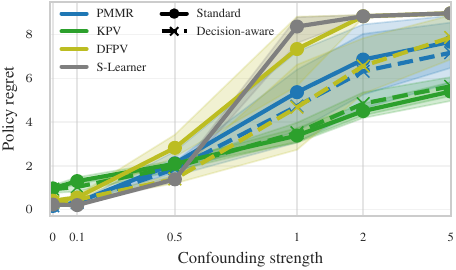}
    \caption{Regret as a function of the confounding strength for three of the DA models. Mean and 95\% confidence intervals over 5 seeds reported.}
    \label{fig:confounding-ablation-regret}
\end{wrapfigure}

\textbf{Sensitivity to weighting hyperparameters.}
We vary the weighting bandwidth $\tau$, localization strength
$\lambda$, and number of reweighting rounds $n_{\mathrm{rounds}}$
one at a time. The
results in Figure~\ref{fig:ablation} show
that performance is relatively stable over a nontrivial range of bandwidths
$\tau$. DA-PMMR and DA-KPV change only mildly as $\tau$ varies, while DA-DFPV is
somewhat more sensitive, with stronger performance for narrower localization in
this experiment. Varying $\lambda$ shows the expected trade-off between
global stabilization and local policy targeting. Larger values tend to improve
DA-PMMR by concentrating more weight around the pseudo-optimal treatment, whereas
DA-KPV remains comparatively stable and DA-DFPV exhibits a more non-monotone
response. Finally, increasing $n_{\mathrm{rounds}}$ generally reduces regret,
consistent with the iterative interpretation of the method. The gains become
smaller after a few rounds.

\textbf{Sensitivity to hidden confounding.}
We also vary the hidden-confounding strength $\Omega$ on the synthetic
benchmark. As shown in Figure~\ref{fig:confounding-ablation-regret}, regret increases for all methods
as confounding becomes stronger. The non-proximal S-learner deteriorates
sharply, highlighting the importance of proxy-based adjustment. Among proximal
methods, decision-aware weighting often improves regret, especially for PMMR and
DFPV at moderate to high confounding levels. For this experiment, we used only 5 seeds while for the main experiments we used 10.

\section{Conclusion}
\label{sec:discussion}
We introduced a decision-aware framework for proximal bridge learning in individualized continuous-treatment selection. Rather than estimating the causal response surface uniformly over the treatment space, the proposed objective weights bridge estimation toward treatment regions that are most relevant for the induced policy, while retaining a global component for stability. Our theory shows that the resulting weighted bridge loss controls a regret-oriented surface surrogate through a weighted ill-posedness condition, without changing the population bridge target. We instantiated this idea as a general wrapper around several proximal bridge solvers, including PMMR, NMMR, KPV, and DFPV. Across synthetic and semi-synthetic benchmarks with hidden confounding and proxy variables, decision-aware weighting generally reduced policy regret, often without corresponding improvements in factual prediction error. These results suggest that, in proximal causal inference, aligning bridge estimation with the downstream treatment-selection objective can improve decision quality beyond globally accurate response estimation. As with other proximal methods, the main limitation of our approach is its reliance on identification assumptions involving valid proxies and bridge existence, which may be difficult to verify in practice.

\bibliographystyle{plainnat}
\bibliography{proximal_inference_refs}

@article{miao2018identifying,
  title = {Identifying Causal Effects with Proxy Variables of an Unmeasured Confounder},
  author = {Miao, Wang and Geng, Zhi and Tchetgen Tchetgen, Eric J.},
  journal = {Biometrika},
  volume = {105},
  number = {4},
  pages = {987--993},
  year = {2018},
  month = dec,
  doi = {10.1093/biomet/asy038},
  url = {https://doi.org/10.1093/biomet/asy038}
}

@article{miao2024confounding,
  title = {A Confounding Bridge Approach for Double Negative Control Inference on Causal Effects},
  author = {Miao, Wang and Shi, Xu and Li, Yilin and Tchetgen Tchetgen, Eric J.},
  journal = {Statistical Theory and Related Fields},
  volume = {8},
  number = {4},
  pages = {262--273},
  year = {2024},
  month = oct,
  doi = {10.1080/24754269.2024.2390748},
  url = {https://doi.org/10.1080/24754269.2024.2390748}
}

@article{tchetgen2024proximal,
  title = {An Introduction to Proximal Causal Inference},
  author = {Tchetgen Tchetgen, Eric J. and Ying, Andrew and Cui, Yifan and Shi, Xu and Miao, Wang},
  journal = {Statistical Science},
  volume = {39},
  number = {3},
  pages = {375--390},
  year = {2024},
  doi = {10.1214/23-STS911},
  url = {https://doi.org/10.1214/23-STS911}
}

@article{cui2024semiparametric,
  title = {Semiparametric Proximal Causal Inference},
  author = {Cui, Yifan and Pu, Hongming and Shi, Xu and Miao, Wang and Tchetgen Tchetgen, Eric J.},
  journal = {Journal of the American Statistical Association},
  volume = {119},
  number = {546},
  pages = {1348--1359},
  year = {2024},
  doi = {10.1080/01621459.2023.2191817},
  url = {https://doi.org/10.1080/01621459.2023.2191817}
}

@inproceedings{ghassami2022minimax,
  title = {Minimax Kernel Machine Learning for a Class of Doubly Robust Functionals with Application to Proximal Causal Inference},
  author = {Ghassami, Amiremad and Ying, Andrew and Shpitser, Ilya and Tchetgen Tchetgen, Eric},
  booktitle = {Proceedings of The 25th International Conference on Artificial Intelligence and Statistics},
  pages = {7210--7239},
  year = {2022},
  volume = {151},
  series = {Proceedings of Machine Learning Research},
  month = {28--30 Mar},
  publisher = {PMLR},
  url = {https://proceedings.mlr.press/v151/ghassami22a.html}
}

@article{miao2023identifying,
  title = {Identifying Effects of Multiple Treatments in the Presence of Unmeasured Confounding},
  author = {Miao, Wang and Hu, Wenjie and Ogburn, Elizabeth L. and Zhou, Xiao-Hua},
  journal = {Journal of the American Statistical Association},
  volume = {118},
  number = {543},
  pages = {1953--1967},
  year = {2023},
  doi = {10.1080/01621459.2021.2023551},
  url = {https://doi.org/10.1080/01621459.2021.2023551}
}

@article{ying2023proximal,
  title = {Proximal Causal Inference for Complex Longitudinal Studies},
  author = {Ying, Andrew and Miao, Wang and Shi, Xu and Tchetgen Tchetgen, Eric J.},
  journal = {Journal of the Royal Statistical Society Series B: Statistical Methodology},
  volume = {85},
  number = {3},
  pages = {684--704},
  year = {2023},
  doi = {10.1093/jrsssb/qkad020},
  url = {https://doi.org/10.1093/jrsssb/qkad020}
}

@article{zhang2023proximal,
  title = {Proximal Causal Inference without Uniqueness Assumptions},
  author = {Zhang, Jeffrey and Li, Wei and Miao, Wang and Tchetgen Tchetgen, Eric J.},
  journal = {Statistics \& Probability Letters},
  volume = {198},
  pages = {109836},
  year = {2023},
  doi = {10.1016/j.spl.2023.109836},
  url = {https://doi.org/10.1016/j.spl.2023.109836}
}

@inproceedings{mastouri2021proximal,
  title = {Proximal Causal Learning with Kernels: Two-Stage Estimation and Moment Restriction},
  author = {Mastouri, Afsaneh and Zhu, Yuchen and Gultchin, Limor and Korba, Anna and Silva, Ricardo and Kusner, Matt and Gretton, Arthur and Muandet, Krikamol},
  booktitle = {Proceedings of the 38th International Conference on Machine Learning},
  pages = {7512--7523},
  year = {2021},
  editor = {Meila, Marina and Zhang, Tong},
  volume = {139},
  series = {Proceedings of Machine Learning Research},
  month = {18--24 Jul},
  publisher = {PMLR},
  url = {https://proceedings.mlr.press/v139/mastouri21a.html}
}

@misc{kallus2021causal,
  title = {Causal Inference Under Unmeasured Confounding with Negative Controls: A Minimax Learning Approach},
  author = {Kallus, Nathan and Mao, Xiaojie and Uehara, Masatoshi},
  year = {2021},
  eprint = {2103.14029},
  archivePrefix = {arXiv},
  primaryClass = {stat.ML},
  url = {https://arxiv.org/abs/2103.14029}
}

@inproceedings{kompa2022deep,
  title = {Deep Learning Methods for Proximal Inference via Maximum Moment Restriction},
  author = {Kompa, Benjamin and Bellamy, David and Kolokotrones, Tom and Robins, James M. and Beam, Andrew},
  booktitle = {Advances in Neural Information Processing Systems},
  volume = {35},
  year = {2022},
  url = {https://proceedings.neurips.cc/paper_files/paper/2022/hash/487c9d6ef55e73aa9dfd4b48fe3713a6-Abstract-Conference.html}
}

@inproceedings{wu2024doubly,
  title = {Doubly Robust Proximal Causal Learning for Continuous Treatments},
  author = {Wu, Yong and Fu, Yanwei and Wang, Shouyan and Sun, Xinwei},
  booktitle = {International Conference on Learning Representations},
  year = {2024},
  url = {https://openreview.net/forum?id=TjGJFkU3xL}
}

@article{qi2024proximal,
  title = {Proximal Learning for Individualized Treatment Regimes Under Unmeasured Confounding},
  author = {Qi, Zhengling and Miao, Rui and Zhang, Xiaoke},
  journal = {Journal of the American Statistical Association},
  volume = {119},
  number = {546},
  pages = {915--928},
  year = {2024},
  doi = {10.1080/01621459.2022.2147841},
  url = {https://doi.org/10.1080/01621459.2022.2147841}
}

@inproceedings{shen2023optimal,
  title = {Optimal Treatment Regimes for Proximal Causal Learning},
  author = {Shen, Tao and Cui, Yifan},
  booktitle = {Advances in Neural Information Processing Systems},
  volume = {36},
  pages = {47735--47748},
  year = {2023},
  url = {https://proceedings.neurips.cc/paper_files/paper/2023/hash/94ccfdb2ca14f33a86a0b9b7d0c1bfb1-Abstract-Conference.html}
}

@inproceedings{zou2022counterfactual,
  title = {Counterfactual Prediction for Outcome-Oriented Treatments},
  author = {Zou, Hao and Li, Bo and Han, Jiangang and Chen, Shuiping and Ding, Xuetao and Cui, Peng},
  booktitle = {Proceedings of the 39th International Conference on Machine Learning},
  pages = {27693--27706},
  year = {2022},
  editor = {Chaudhuri, Kamalika and Jegelka, Stefanie and Song, Le and Szepesvari, Csaba and Niu, Gang and Sabato, Sivan},
  volume = {162},
  series = {Proceedings of Machine Learning Research},
  month = {17--23 Jul},
  publisher = {PMLR},
  url = {https://proceedings.mlr.press/v162/zou22a.html}
}

@article{zhao2012estimating,
  title = {Estimating Individualized Treatment Rules Using Outcome Weighted Learning},
  author = {Zhao, Yingqi and Zeng, Donglin and Rush, A. John and Kosorok, Michael R.},
  journal = {Journal of the American Statistical Association},
  volume = {107},
  number = {499},
  pages = {1106--1118},
  year = {2012},
  doi = {10.1080/01621459.2012.695674},
  url = {https://doi.org/10.1080/01621459.2012.695674}
}

@article{fernandez2022causal,
  title = {Causal Decision Making and Causal Effect Estimation Are Not the Same... and Why It Matters},
  author = {Fern{\'a}ndez-Lor{\'i}a, Carlos and Provost, Foster},
  journal = {INFORMS Journal on Data Science},
  volume = {1},
  number = {1},
  pages = {4--16},
  year = {2022},
  doi = {10.1287/ijds.2021.0006},
  url = {https://doi.org/10.1287/ijds.2021.0006}
}

@incollection{hirano2004propensity,
  title = {The Propensity Score with Continuous Treatments},
  author = {Hirano, Keisuke and Imbens, Guido W.},
  booktitle = {Applied Bayesian Modeling and Causal Inference from Incomplete-Data Perspectives},
  editor = {Gelman, Andrew and Meng, Xiao-Li},
  pages = {73--84},
  publisher = {John Wiley \& Sons},
  address = {Hoboken, NJ},
  year = {2004},
  doi = {10.1002/0470090456.ch7},
  url = {https://doi.org/10.1002/0470090456.ch7}
}

@inproceedings{kallus2018policy,
  title = {Policy Evaluation and Optimization with Continuous Treatments},
  author = {Kallus, Nathan and Zhou, Angela},
  booktitle = {Proceedings of the Twenty-First International Conference on Artificial Intelligence and Statistics},
  pages = {1243--1251},
  year = {2018},
  editor = {Storkey, Amos and Perez-Cruz, Fernando},
  volume = {84},
  series = {Proceedings of Machine Learning Research},
  publisher = {PMLR},
  url = {https://proceedings.mlr.press/v84/kallus18a.html}
}

@article{elmachtoub2022smart,
  title = {Smart ``Predict, then Optimize''},
  author = {Elmachtoub, Adam N. and Grigas, Paul},
  journal = {Management Science},
  volume = {68},
  number = {1},
  pages = {9--26},
  year = {2022},
  doi = {10.1287/mnsc.2020.3922},
  url = {https://doi.org/10.1287/mnsc.2020.3922}
}

@article{mandi2024decision,
  title = {Decision-Focused Learning: Foundations, State of the Art, Benchmark and Future Opportunities},
  author = {Mandi, Jayanta and Kotary, James and Berden, Senne and Mulamba, Maxime and Bucarey, Victor and Guns, Tias and Fioretto, Ferdinando},
  journal = {Journal of Artificial Intelligence Research},
  volume = {80},
  year = {2024},
  doi = {10.1613/jair.1.15320},
  url = {https://doi.org/10.1613/jair.1.15320}
}

@article{singh2024kernel,
  title = {Kernel Methods for Causal Functions: Dose, Heterogeneous and Incremental Response Curves},
  author = {Singh, Rahul and Xu, Liyuan and Gretton, Arthur},
  journal = {Biometrika},
  volume = {111},
  number = {2},
  pages = {497--516},
  year = {2024},
  doi = {10.1093/biomet/asad042},
  url = {https://doi.org/10.1093/biomet/asad042}
}

@inproceedings{xu2021deep,
  title = {Deep Proxy Causal Learning and its Application to Confounded Bandit Policy Evaluation},
  author = {Xu, Liyuan and Kanagawa, Heishiro and Gretton, Arthur},
  booktitle = {Advances in Neural Information Processing Systems},
  volume = {34},
  pages = {26264--26275},
  year = {2021},
  url = {https://proceedings.neurips.cc/paper/2021/hash/dcf3219715a7c9cd9286f19db46f2384-Abstract.html}
}

@inproceedings{bozkurt2025densityratio,
  title = {Density Ratio-based Proxy Causal Learning Without Density Ratios},
  author = {Bozkurt, Bariscan and Deaner, Ben and Meunier, Dimitri and Xu, Liyuan and Gretton, Arthur},
  booktitle = {Proceedings of The 28th International Conference on Artificial Intelligence and Statistics},
  pages = {5095--5103},
  year = {2025},
  volume = {258},
  series = {Proceedings of Machine Learning Research},
  publisher = {PMLR},
  url = {https://proceedings.mlr.press/v258/bozkurt25a.html}
}

@inproceedings{bozkurt2025drfree,
  title = {Density Ratio-Free Doubly Robust Proxy Causal Learning},
  author = {Bozkurt, Bariscan and Zenati, Houssam and Meunier, Dimitri and Xu, Liyuan and Gretton, Arthur},
  booktitle = {Advances in Neural Information Processing Systems},
  year = {2025},
  url = {https://openreview.net/forum?id=a9HOg4f9Gh}
}

@inproceedings{dikkala2020minimax,
  title        = {Minimax Estimation of Conditional Moment Models},
  author       = {Dikkala, Nishanth and Lewis, Greg and Mackey, Lester W. and Syrgkanis, Vasilis},
  booktitle    = {Advances in Neural Information Processing Systems},
  volume       = {33},
  pages        = {12248--12262},
  year         = {2020},
  url          = {https://proceedings.neurips.cc/paper/2020/hash/8fcd9e5482a62a5fa130468f4cf641ef-Abstract.html}
}

@article{chen2012nonparametric,
  title        = {Estimation of Nonparametric Conditional Moment Models With Possibly Nonsmooth Generalized Residuals},
  author       = {Chen, Xiaohong and Pouzo, Demian},
  journal      = {Econometrica},
  volume       = {80},
  number       = {1},
  pages        = {277--321},
  year         = {2012},
  doi          = {10.3982/ECTA7888},
  url          = {https://doi.org/10.3982/ECTA7888}
}

@article{bica2020scigan,
  title={Estimating the Effects of Continuous-valued Interventions using Generative Adversarial Networks},
  author={Bica, Ioana and Jordon, James and van der Schaar, Mihaela},
  journal={Advances in neural information processing systems},
  year={2020}
}

@article{weinstein2013cancer,
  title={The Cancer Genome Atlas Pan-Cancer analysis project},
  author={Weinstein, John N. and Collisson, Eric A. and Mills, Gordon B. and Mills Shaw, Kenna R. and Ozenberger, Brad A. and Ellrott, Kyle and Shmulevich, Ilya and Sander, Chris and Stuart, Joshua M. and Cancer Genome Atlas Research Network},
  journal={Nature Genetics},
  volume={45},
  number={10},
  pages={1113},
  year={2013}
}

@inproceedings{bergstra2011algorithms,
  title={Algorithms for Hyper-Parameter Optimization},
  author={Bergstra, James and Bardenet, R{\'e}mi and Bengio, Yoshua and K{\'e}gl, Bal{\'a}zs},
  booktitle={Advances in Neural Information Processing Systems},
  year={2011}
}

@inproceedings{akiba2019optuna,
  title={Optuna: A Next-generation Hyperparameter Optimization Framework},
  author={Akiba, Takuya and Sano, Shotaro and Yanase, Toshihiko and Ohta, Takeru and Koyama, Masanori},
  booktitle={Proceedings of the 25th ACM SIGKDD International Conference on Knowledge Discovery and Data Mining},
  year={2019}
}

@article{rubin1974estimating,
  title={Estimating Causal Effects of Treatments in Randomized and Nonrandomized Studies},
  author={Rubin, Donald B.},
  journal={Journal of Educational Psychology},
  volume={66},
  number={5},
  pages={688--701},
  year={1974}
}

@book{pearl2009causality,
  title={Causality: Models, Reasoning, and Inference},
  author={Pearl, Judea},
  edition={2},
  publisher={Cambridge University Press},
  year={2009}
}

@book{imbens2015causal,
  title={Causal Inference for Statistics, Social, and Biomedical Sciences: An Introduction},
  author={Imbens, Guido W. and Rubin, Donald B.},
  publisher={Cambridge University Press},
  year={2015}
}

@book{hernan2020causal,
  title={Causal Inference: What If},
  author={Hern{\'a}n, Miguel A. and Robins, James M.},
  publisher={Chapman \& Hall/CRC},
  year={2020}
}

@inproceedings{johansson2016learning,
  title={Learning Representations for Counterfactual Inference},
  author={Johansson, Fredrik D. and Shalit, Uri and Sontag, David},
  booktitle={Proceedings of the 33rd International Conference on Machine Learning},
  pages={3020--3029},
  year={2016}
}

@inproceedings{shalit2017estimating,
  title={Estimating Individual Treatment Effect: Generalization Bounds and Algorithms},
  author={Shalit, Uri and Johansson, Fredrik D. and Sontag, David},
  booktitle={Proceedings of the 34th International Conference on Machine Learning},
  pages={3076--3085},
  year={2017}
}

@article{wager2018estimation,
  title={Estimation and Inference of Heterogeneous Treatment Effects using Random Forests},
  author={Wager, Stefan and Athey, Susan},
  journal={Journal of the American Statistical Association},
  volume={113},
  number={523},
  pages={1228--1242},
  year={2018}
}

@inproceedings{bica2020estimating,
  title={Estimating Counterfactual Treatment Outcomes over Time Through Adversarially Balanced Representations},
  author={Bica, Ioana and Alaa, Ahmed M. and Jordon, James and van der Schaar, Mihaela},
  booktitle={Proceedings of the 8th International Conference on Learning Representations},
  year={2020}
}

@inproceedings{melnychuk2022causal,
  title={Causal Transformer for Estimating Counterfactual Outcomes},
  author={Melnychuk, Valentyn and Frauen, Dennis and Feuerriegel, Stefan},
  booktitle={Proceedings of the 39th International Conference on Machine Learning},
  pages={15293--15329},
  year={2022}
}

@article{garriga2026cepae,
  title={{CEPAE}: Conditional Entropy-Penalized Autoencoders for Time Series Counterfactuals},
  author={Garriga, Tom{\`a}s and Sanz, Gerard and Serrahima de Cambra, Eduard and Brando, Axel},
  journal={arXiv preprint arXiv:2602.15546},
  year={2026}
}

@article{almodovar2025decaflow,
  title={{DeCaFlow}: A Deconfounding Causal Generative Model},
  author={Almod{\'o}var, Alejandro and Javaloy, Adri{\'a}n and Parras, Juan and Zazo, Santiago and Valera, Isabel},
  journal={arXiv preprint arXiv:2503.15114},
  year={2025}
}

@ARTICLE{9815055,
  author={Parafita, Álvaro and Vitrià, Jordi},
  journal={IEEE Access}, 
  title={Estimand-Agnostic Causal Query Estimation With Deep Causal Graphs}, 
  year={2022},
  volume={10},
  number={},
  pages={71370-71386},
  doi={10.1109/ACCESS.2022.3188395}}

@inproceedings{NEURIPS2025_fa6d414c,
 author = {Parafita, \'{A}lvaro and Garriga, Tomas and Brando, Axel and Cazorla, Francisco},
 booktitle = {Advances in Neural Information Processing Systems},
 editor = {D. Belgrave and C. Zhang and H. Lin and R. Pascanu and P. Koniusz and M. Ghassemi and N. Chen},
 pages = {171421--171462},
 publisher = {Curran Associates, Inc.},
 title = {Practical do-Shapley Explanations with Estimand-Agnostic Causal Inference},
 url = {https://proceedings.neurips.cc/paper_files/paper/2025/file/fa6d414c39a91ce41152dedd9fe6d144-Paper-Conference.pdf},
 volume = {38},
 year = {2025}
}

@misc{witter2026exactlycomputingdoshapleyvalues,
      title={Exactly Computing do-Shapley Values}, 
      author={R. Teal Witter and Álvaro Parafita and Tomas Garriga and Maximilian Muschalik and Fabian Fumagalli and Axel Brando and Lucas Rosenblatt},
      year={2026},
      eprint={2602.07203},
      archivePrefix={arXiv},
      primaryClass={cs.LG},
      url={https://arxiv.org/abs/2602.07203}, 
}

@misc{hess2026igcnetconditionalaveragepotential,
      title={IGC-Net for conditional average potential outcome estimation over time}, 
      author={Konstantin Hess and Dennis Frauen and Valentyn Melnychuk and Stefan Feuerriegel},
      year={2026},
      eprint={2405.21012},
      archivePrefix={arXiv},
      primaryClass={cs.LG},
      url={https://arxiv.org/abs/2405.21012}, 
}

@inproceedings{li2021g,
  title={G-net: a recurrent network approach to g-computation for counterfactual prediction under a dynamic treatment regime},
  author={Li, Rui and Hu, Stephanie and Lu, Mingyu and Utsumi, Yuria and Chakraborty, Prithwish and Sow, Daby M and Madan, Piyush and Li, Jun and Ghalwash, Mohamed and Shahn, Zach and others},
  booktitle={Machine Learning for Health},
  pages={282--299},
  year={2021},
  organization={PMLR}
}

\newpage
\appendix

\startcontents[appendix]

\section*{Appendix Index}
\printcontents[appendix]{}{1}{%
  \setcounter{tocdepth}{2}%
}

\newpage

\section{Extended Related Work}
\label{app:extended-related-work}

\paragraph{Causal inference and representation learning.}
Causal inference provides formal tools for reasoning about interventions and counterfactuals from randomized or observational data, with foundations in potential outcomes, graphical models, and semiparametric estimation \citep{rubin1974estimating,pearl2009causality,imbens2015causal,hernan2020causal}. Modern machine-learning approaches build on these ideas by learning representations, nuisance functions, or policies for heterogeneous treatment-effect estimation, decision making and explainability \citep{johansson2016learning,shalit2017estimating,wager2018estimation, 9815055, NEURIPS2025_fa6d414c, witter2026exactlycomputingdoshapleyvalues}. Related ideas appear in several domains, including longitudinal and time-series settings \citep{bica2020estimating,melnychuk2022causal, li2021g,garriga2026cepae, hess2026igcnetconditionalaveragepotential}. Our setting differs from this line of work because the relevant dose-response is continuous and hidden confounding is addressed through negative-control proxies rather than by assuming that all confounders are observed.

\paragraph{Decision-oriented causal learning and continuous treatments.}
A large literature studies individualized treatment rules, treatment assignment, and policy learning from observational or logged data. Classical individualized-treatment-rule methods often reduce value maximization to a weighted classification or regression problem; for example, outcome-weighted learning estimates treatment rules by weighting classification decisions according to observed outcomes and treatment propensities \citep{zhao2012estimating}. More broadly, work on causal decision making has emphasized that optimizing a treatment rule is not the same statistical objective as estimating treatment effects or potential-outcome surfaces uniformly well \citep{fernandez2022causal}. This distinction is especially important for continuous treatments, where the learner must recommend a dose or intensity rather than choose among a small number of actions. Early work on generalized propensity scores formalized dose-response estimation for continuous exposures \citep{hirano2004propensity}, while later work developed off-policy evaluation and optimization methods for continuous actions using kernelized inverse-propensity and doubly robust estimators \citep{kallus2018policy}. These approaches motivate our continuous-treatment setting and our focus on regret, but they typically rely on measured-covariate exchangeability rather than proxy-based identification under latent confounding.

\paragraph{Outcome-oriented and decision-focused counterfactual prediction.}
Our motivation is closely related to the broader decision-focused learning principle that prediction errors should be weighted by their downstream decision consequences. In operations and machine learning, predict-then-optimize methods such as SPO train predictors using losses that measure the quality of the induced decision rather than only pointwise prediction error \citep{elmachtoub2022smart}; recent surveys describe decision-focused learning as an end-to-end approach for aligning predictive models with constrained optimization objectives \citep{mandi2024decision}. In causal inference, \citet{zou2022counterfactual} make an analogous point for counterfactual prediction: global accuracy over the treatment space may be poorly aligned with treatment-selection quality, and estimators should emphasize regions that affect the recommended action. Our work adopts this outcome-oriented perspective, but differs in the source of identification. Existing outcome-oriented counterfactual prediction methods generally assume that confounding is removed by observed covariates, whereas our setting requires proximal bridge functions and proxy variables to handle unmeasured confounding.

\paragraph{Proximal causal inference.}
Proximal causal inference addresses latent confounding by using observed proxies, often formalized as treatment-inducing proxies $Z$ and outcome-inducing proxies $W$, to solve bridge-function moment equations. Early nonparametric identification results with proxy variables were established by \citet{miao2018identifying}, while the double-negative-control bridge formulation was developed by \citet{miao2024confounding}. \citet{tchetgen2024proximal} placed these ideas in a potential-outcome framework and derived the proximal g-formula, showing how causal effects can be identified even when exchangeability with respect to measured covariates fails. Subsequent work has expanded the framework to semiparametric and doubly robust inference \citep{cui2024semiparametric,ghassami2022minimax}, multiple-treatment settings \citep{miao2023identifying}, longitudinal studies \citep{ying2023proximal}, and settings where bridge solutions may not be unique \citep{zhang2023proximal}. This literature provides the identification foundation for our method. The key difference is that most proximal estimands and estimators are designed for effect or response-surface recovery, whereas we target the downstream continuous-treatment decision induced by the learned bridge.

\paragraph{Machine learning for bridge estimation.}
A central computational challenge in proximal inference is that bridge functions solve ill-posed conditional moment or Fredholm integral equations. Kernel methods by \citet{mastouri2021proximal} formulate proximal causal learning through two-stage regression and maximum moment restriction in reproducing-kernel Hilbert spaces. Minimax approaches estimate bridge functions through adversarial conditional-moment objectives \citep{kallus2021causal,ghassami2022minimax}, and \citet{kompa2022deep} scale this idea with neural maximum moment restriction. For continuous treatments, \citet{wu2024doubly} propose a kernel-smoothed doubly robust proximal estimator, addressing the difficulty that binary-treatment proximal doubly robust estimators do not transfer directly to continuous actions because exact treatment matching has probability zero. \citet{almodovar2025decaflow} propose a deconfounding normalizing flow that, applied to proximal settings, implicitly solves the bridge equation. These works improve the statistical and computational toolkit for estimating bridges and causal response functions. Our objective is complementary: rather than changing only the bridge estimator class or the moment-restriction solver, we change the loss being optimized so that bridge fitting is concentrated in treatment regions that matter for the induced policy, while retaining a global component for stabilization.

\paragraph{Proximal treatment-rule learning.}
The closest proximal decision-making literature studies optimal treatment regimes under unmeasured confounding. \citet{qi2024proximal} develop proximal learning procedures for individualized treatment regimes by combining bridge-based identification with classification-style policy learning, and \citet{shen2023optimal} characterize optimal treatment regimes using outcome and treatment confounding bridges. These works show that proximal identification can support decision making under latent confounding. They primarily focus on identifying and optimizing treatment-regime values over specified policy classes, often in discrete-action settings. Our focus is different but complementary: we study continuous treatment selection and ask how the proximal bridge-learning objective itself should be shaped when the downstream criterion is dose-selection regret. In this sense, our method connects proximal policy learning with outcome-oriented counterfactual prediction by introducing a localized, policy-aware bridge loss.

\section{Standard proximal inference assumptions}
\label{app:assumptions}

In this appendix, we provide additional discussion of
Assumptions~\ref{ass:consistency-exchangeability}--\ref{ass:positivity},
which are stated in the main text.

\paragraph{Assumption~\ref{ass:consistency-exchangeability}: Consistency and latent exchangeability.}
Assumption~\ref{ass:consistency-exchangeability} says that treatment is
as-if randomized after conditioning on the observed covariates $X$ and the
latent confounders $U$. Since $U$ is unobserved, this condition cannot be
used directly for ordinary covariate adjustment.

\paragraph{Assumption~\ref{ass:negative-control}: Negative-control proxies.}
Assumption~\ref{ass:negative-control} requires that, for every $a\in\cA$,
$Y(a)\perp\!\!\!\perp Z\mid A,U,X$ and
$W\perp\!\!\!\perp (A,Z)\mid U,X$.

The first relation says that, after conditioning on treatment, observed
covariates, and latent confounders, the treatment-inducing proxy $Z$ has
no residual association with the potential outcome. The second says that,
after conditioning on $(U,X)$, the outcome-inducing proxy $W$ has no
residual association with treatment assignment or with $Z$. Thus, the
proxies are useful because they carry information about latent confounding
while satisfying exclusion restrictions that make proximal identification
possible.

\paragraph{Assumption~\ref{ass:completeness}: Completeness.}
Assumption~\ref{ass:completeness} is a proxy-relevance condition. It rules
out nonzero latent or proxy signals that are invisible after conditioning
on the observed proxy system. In discrete settings, it requires the proxies
to contain enough variation to distinguish the relevant latent confounding
states; in continuous settings, it is the nonparametric analogue of an
injectivity condition for conditional-expectation operators.

\paragraph{Assumption~\ref{ass:positivity}: Positivity.}
Assumption~\ref{ass:positivity} ensures that every treatment level in the
target action space is observable with positive probability, both within
latent strata and after marginalizing over the latent confounders.

\section{Sufficient conditions for outcome-bridge existence}
\label{app:bridge-existence}

This appendix records one set of sufficient conditions under which the
outcome bridge used in the main text exists. The main results of the
paper are developed conditional on a valid bridge equation; the conditions
below are included to connect this bridge equation to the standard
proximal identification framework.

Throughout this appendix, assume that
Assumptions~\ref{ass:consistency-exchangeability}--
\ref{ass:positivity} hold. For each $(a,x)$, define the latent outcome
regression
\[
    q_0(a,u,x)
    :=
    \E\{Y\mid A=a,U=u,X=x\}.
\]
By Assumption~\ref{ass:negative-control},
\[
    \E\{Y\mid A=a,Z=z,X=x\}
    =
    \E\{q_0(a,U,x)\mid A=a,Z=z,X=x\}.
\]

We impose the following additional regularity and solvability conditions.

\begin{assumption}[Regular conditional laws and square integrability]
\label{ass:appendix-regularity}
The variables $(Y,A,Z,W,X,U)$ admit regular conditional distributions.
For every $a\in\cA$, the regression $q_0(a,U,X)$ is square-integrable.
Moreover, the conditional laws of $W$ given $(U,X)$ and of $U$ given
$(A,Z,X)$ are such that the conditional expectations appearing below are
well defined as maps between the corresponding $L_2$ spaces.
\end{assumption}

\begin{assumption}[Latent bridge solvability]
\label{ass:latent-bridge-solvability}
For every $a\in\cA$, there exists a square-integrable function
\[
    h_0(a,\cdot,\cdot):\mathcal W\times\cX\to\R
\]
such that, for almost every $(u,x)$,
\begin{equation}
    q_0(a,u,x)
    =
    \E\{h_0(a,W,x)\mid U=u,X=x\}.
    \label{eq:latent-bridge-equation}
\end{equation}
\end{assumption}

Assumption~\ref{ass:latent-bridge-solvability} is a range condition for
the conditional-expectation operator that maps functions of $(W,X)$ to
functions of $(U,X)$. It says that the latent outcome regression
$q_0(a,U,X)$ can be represented by averaging a function of the outcome
proxy $W$ over the conditional distribution of $W$ given $(U,X)$.

\begin{proposition}[Outcome-bridge existence]
\label{prop:appendix-outcome-bridge-existence}
Under Assumptions~\ref{ass:consistency-exchangeability}--
\ref{ass:positivity} and
\ref{ass:appendix-regularity}--\ref{ass:latent-bridge-solvability}, the
function $h_0$ in Assumption~\ref{ass:latent-bridge-solvability} satisfies
the observed outcome-bridge equation
\[
    \E\{Y-h_0(A,W,X)\mid A,Z,X\}=0 .
\]
Equivalently, for every $(a,z,x)$ in the support of $(A,Z,X)$,
\[
    \E\{Y\mid A=a,Z=z,X=x\}
    =
    \E\{h_0(a,W,x)\mid A=a,Z=z,X=x\}.
\]
\end{proposition}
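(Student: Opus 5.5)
The plan is to condition on $(A,Z,X)=(a,z,x)$ and pass through the latent confounder $U$ with the tower property. Using the negative-control conditions of Assumption~\ref{ass:negative-control}, the latent bridge equation~\eqref{eq:latent-bridge-equation} is lifted to the observed bridge equation. I will treat the two sides of the claimed identity separately and show that both reduce to $\E\{q_0(a,U,x)\mid A=a,Z=z,X=x\}$.

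\textbf{Outcome side.} The paper already records, just before Assumption~\ref{ass:appendix-regularity}, that
$\E\{Y\mid A=a,Z=z,X=x\}=\E\{q_0(a,U,x)\mid A=a,Z=z,X=x\}$.
For completeness I would re-derive it. First write $\E\{Y\mid a,z,x\}=\E[\E\{Y\mid A=a,U,Z=z,X=x\}\mid a,z,x]$. By consistency (Assumption~\ref{ass:consistency-exchangeability}), $Y=Y(a)$ on $\{A=a\}$. By $Y(a)\perp\!\!\!\perp Z\mid A,U,X$, the inner expectation drops $Z$ and equals $q_0(a,U,x)$. Regular conditional distributions (Assumption~\ref{ass:appendix-regularity}) make these conditionals well defined pointwise in $a$.

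\textbf{Bridge side.} I would again insert $U$:
\[
\E\{h_0(a,W,x)\mid A=a,Z=z,X=x\}=\E\bigl[\E\{h_0(a,W,x)\mid U,A=a,Z=z,X=x\}\bigm| A=a,Z=z,X=x\bigr].
\]
Since $W\perp\!\!\!\perp (A,Z)\mid U,X$, the inner expectation equals $\E\{h_0(a,W,x)\mid U,X=x\}$. By Assumption~\ref{ass:latent-bridge-solvability} this is $q_0(a,U,x)$ for almost every $(u,x)$. Comparing with the outcome side gives the pointwise identity for each $(a,z,x)$ in the support. Because $h_0(A,W,X)$ is evaluated at the random $A$, the form $\E\{Y-h_0(A,W,X)\mid A,Z,X\}=0$ then follows by substituting $(a,z,x)=(A,Z,X)$.

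\textbf{Main obstacle: null sets.} The latent equation holds only for almost every $(u,x)$ with respect to $P_{U,X}$. When it is integrated against $P_{U\mid A=a,Z=z,X=x}$, the exceptional set must be null under that conditional law too. I would handle this with Assumption~\ref{ass:positivity}: $p(a\mid u,x)>0$ implies that $P_{U\mid A=a,X=x}$ is absolutely continuous with respect to $P_{U\mid X=x}$. Disintegrating over $Z$ then shows that a $P_{U,X}$-null set has $P_{U\mid A,Z,X}$-measure zero for almost every $(a,z,x)$.

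Square integrability of $h_0(a,W,X)$ and $q_0(a,U,X)$, together with the operator well-definedness in Assumption~\ref{ass:appendix-regularity}, justifies the tower-property manipulations. The conclusion therefore holds almost everywhere on the support of $(A,Z,X)$, which is the sense intended.
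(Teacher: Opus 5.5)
Your proposal is correct and follows the paper's proof. Both arguments iterate expectations through $U$ and reduce each side to $\E\{q_0(a,U,x)\mid A=a,Z=z,X=x\}$: the outcome side uses consistency with $Y(a)\perp\!\!\!\perp Z\mid A,U,X$, and the bridge side uses $W\perp\!\!\!\perp(A,Z)\mid U,X$ together with the latent bridge equation. Your null-set bookkeeping refines what the paper states pointwise; there, $P_{U\mid A=a,X=x}\ll P_{U\mid X=x}$ follows by Bayes' rule from the existence of $p(a\mid u,x)$ and $p(a\mid x)>0$, whereas strict positivity of $p(a\mid u,x)$ is what gives the reverse absolute continuity.
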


\begin{proof}
Fix $(a,z,x)$ in the support of $(A,Z,X)$. By iterated expectation and
the first negative-control relation,
\[
\begin{aligned}
    \E\{Y\mid A=a,Z=z,X=x\}
    &=
    \E\!\left[
        \E\{Y\mid A=a,Z=z,U,X=x\}
        \mid A=a,Z=z,X=x
    \right] \\
    &=
    \E\!\left[
        \E\{Y\mid A=a,U,X=x\}
        \mid A=a,Z=z,X=x
    \right] \\
    &=
    \E\{q_0(a,U,x)\mid A=a,Z=z,X=x\}.
\end{aligned}
\]
By Assumption~\ref{ass:latent-bridge-solvability},
\[
    q_0(a,U,x)
    =
    \E\{h_0(a,W,x)\mid U,X=x\}.
\]
Therefore,
\[
\begin{aligned}
    \E\{Y\mid A=a,Z=z,X=x\}
    &=
    \E\!\left[
        \E\{h_0(a,W,x)\mid U,X=x\}
        \mid A=a,Z=z,X=x
    \right].
\end{aligned}
\]
By the second negative-control relation,
$W\perp\!\!\!\perp (A,Z)\mid U,X$, so
\[
    \E\{h_0(a,W,x)\mid U,A=a,Z=z,X=x\}
    =
    \E\{h_0(a,W,x)\mid U,X=x\}.
\]
Applying iterated expectation again gives
\[
\begin{aligned}
    \E\{Y\mid A=a,Z=z,X=x\}
    &=
    \E\!\left[
        \E\{h_0(a,W,x)\mid U,A=a,Z=z,X=x\}
        \mid A=a,Z=z,X=x
    \right] \\
    &=
    \E\{h_0(a,W,x)\mid A=a,Z=z,X=x\}.
\end{aligned}
\]
Hence
\[
    \E\{Y-h_0(A,W,X)\mid A=a,Z=z,X=x\}=0,
\]
and the claim follows.
\end{proof}

\begin{remark}[Observed Fredholm form]
\label{rem:observed-fredholm-form}
The conclusion of Proposition~\ref{prop:appendix-outcome-bridge-existence}
can also be written as the observed Fredholm equation
\[
    \E\{Y\mid A=a,Z=z,X=x\}
    =
    \int h_0(a,w,x)\,dP(w\mid A=a,Z=z,X=x).
\]
Thus the outcome bridge is a solution to a first-kind integral equation.
The range condition in Assumption~\ref{ass:latent-bridge-solvability} is
one way to ensure that this integral equation has a square-integrable
solution.
\end{remark}

\section{Proofs for the Policy-Targeted Risk}
\label{app:proofs}

This appendix proves the population bounds used in
Section~\ref{sec:targeted-risk}. The proof proceeds in three steps. First,
Lemma~\ref{lemma:regret-decomp-surface} reduces policy regret to two
surface errors: one at the treatment selected by the learned bridge and one at
the oracle treatment. Second, Propositions~\ref{prop:kernel-bias-surface}
and~\ref{prop:global-surface-bound} show how these two errors are controlled
by the local-plus-global surface surrogate. Third,
Proposition~\ref{prop:weighted_bridge_to_surface} connects the resulting
weighted surface loss to the weighted proximal bridge loss.

\subsection{Population objects and notation}
\label{app:population-objects}

For completeness, we restate the objects used in the proofs. Let
$O=(Y,A,Z,W,X)$ denote a generic observation, with $A\in\mathcal A=[c_-,c_+]$.
Let $h_0$ be an outcome bridge satisfying
\[
    \E\{Y-h_0(A,W,X)\mid A,Z,X\}=0,
\]
and let $h\in\mathcal H$ be a candidate bridge. We write
\[
    \Delta_h(a,w,x):=h(a,w,x)-h_0(a,w,x)
\]
for its bridge error. The true and bridge-induced causal response surfaces are
\[
    m_0(a,x):=\E\{h_0(a,W,x)\mid X=x\},
    \qquad
    m_h(a,x):=\E\{h(a,W,x)\mid X=x\},
\]
where the conditional expectation over $W$ is with respect to
$P_{W\mid X=x}$. The squared surface error is
\[
    G_h(x,t):=\{m_0(t,x)-m_h(t,x)\}^2 .
\]
The policy induced by $h$ and the oracle policy are
\[
    \pi_h(x)\in\argmax_{t\in\mathcal A}m_h(t,x),
    \qquad
    a^\star(x)\in\argmax_{t\in\mathcal A}m_0(t,x),
\]
with the same fixed tie-breaking convention as in the main text. The regret is
\[
    \Reg(h)
    :=
    \E\{m_0(a^\star(X),X)-m_0(\pi_h(X),X)\}.
\]

We use two population laws for $(A,W,X)$. The first is the observed joint law
\[
    \nu:=P_{A,W,X}.
\]
The second is the product conditional law
\[
    \mu:=P_{W\mid X}\otimes P_{A\mid X}\otimes P_X,
\]
meaning that, under $\mu$, $X\sim P_X$ and then $A$ and $W$ are drawn
independently from their observed conditional laws $P_{A\mid X}$ and
$P_{W\mid X}$. Thus $\mu$ is the law naturally associated with the surface
quantity $\E\{h(a,W,x)\mid X=x\}$ evaluated at treatments distributed as
$P_{A\mid X}$, while $\nu$ is the observed law under which the bridge function
is learned. We assume $\mu\ll\nu$ when invoking the bridge-to-surface result
and write
\[
    C_\rho
    :=
    \left\|\frac{d\mu}{d\nu}\right\|_\infty
    <\infty,
\]
where the essential supremum is taken with respect to $\nu$. Equivalently, for
all nonnegative measurable $q$,
\[
    \int q\,d\mu\le C_\rho\int q\,d\nu .
\]

For a nonnegative measurable weight
$\omega:\mathcal A\times\mathcal X\to[0,\infty)$, define weighted norms by
\[
    \|f\|_{L_2(\omega\,d\nu)}^2
    :=
    \int \omega(a,x) f(a,w,x)^2\,d\nu(a,w,x),
\]
and analogously for $L_2(\omega\,d\mu)$ and
$L_2(\omega\,dP_{A,Z,X})$. The outcome-bridge operator is
\[
    (T\Delta)(a,z,x)
    :=
    \E\{\Delta(a,W,x)\mid A=a,Z=z,X=x\}.
\]
The weighted surface and bridge risks are
\begin{align*}
    \mathcal L_{\mathrm{surf},\omega}(h)
    &:=
    \E\!\left[
    \omega(A,X)\{m_h(A,X)-m_0(A,X)\}^2
    \right], \\
    \mathcal L_{\mathrm{br},\omega}(h)
    &:=
    \E\!\left[
    \omega(A,X)
    \E\{Y-h(A,W,X)\mid A,Z,X\}^2
    \right].
\end{align*}
Finally, the weighted proximal ill-posedness constant is
\[
    \tau_\omega
    :=
    \sup_{g\in\mathcal H:\,\|T\Delta_g\|_{L_2(\omega\,dP_{A,Z,X})}>0}
    \frac{\|\Delta_g\|_{L_2(\omega\,d\nu)}}
         {\|T\Delta_g\|_{L_2(\omega\,dP_{A,Z,X})}} .
\]
The condition $\tau_\omega<\infty$ is a weighted stability condition for the
proximal inverse problem: it rules out bridge-error directions that are large in
the weighted bridge norm but nearly invisible through the weighted conditional
moment operator.

When expectations of the form $\E_{X,A}$ appear in the surface bounds, $A$ is a
generic treatment draw from $P_{A\mid X}$ conditional on $X$. This notation
is only a device for writing integrals over treatment values. With this convention,
\begin{align*}
    A_\tau(h)
    &:=
    \E_{X,A}\!\left[
    \frac{K((\pi_h(X)-A)/\tau)}{\tau p(A\mid X)}G_h(X,A)
    \right], \\
    B(h)
    &:=
    \E_X\!\left[
    \frac{1}{c_+-c_-}\int_{c_-}^{c_+}G_h(X,t)\,dt
    \right].
\end{align*}

\subsection{Regret decomposition}
\label{app:regret-decomposition-proof}

We first prove the basic regret decomposition. This step explains why the
subsequent local and global surface bounds focus on the two treatment values
$\pi_h(X)$ and $a^\star(X)$.

\begin{proof}[Proof of Lemma~\ref{lemma:regret-decomp-surface}]
Fix $x$ and abbreviate $\pi_h(x)$ by $\pi_h$ and $a^\star(x)$ by $a^\star$.
Since $\pi_h$ maximizes $m_h(\cdot,x)$ over $\mathcal A$,
\[
    m_h(\pi_h,x)\ge m_h(a^\star,x).
\]
Therefore,
\begin{align*}
    m_0(a^\star,x)-m_0(\pi_h,x)
    &\le
    m_0(a^\star,x)-m_h(a^\star,x)
    +m_h(\pi_h,x)-m_0(\pi_h,x) \\
    &\le
    \left|m_0(a^\star,x)-m_h(a^\star,x)\right|
    +
    \left|m_h(\pi_h,x)-m_0(\pi_h,x)\right| .
\end{align*}
Taking expectation over $X$ and applying Cauchy--Schwarz to each absolute-error
term gives
\[
    \Reg(h)
    \le
    \sqrt{\E\!\left[G_h\!\left(X,a^\star(X)\right)\right]}
    +
    \sqrt{\E\!\left[G_h\!\left(X,\pi_h(X)\right)\right]} .
\]
\end{proof}

\subsection{Kernel approximation and global surface bounds}
\label{app:surface-bounds}

The local term \(A_\tau(h)\) is a kernel approximation to the squared surface
error at the bridge-induced treatment \(\pi_h(X)\). For the local approximation
argument, we take \(K\) to be a nonnegative symmetric kernel supported on
\([-1,1]\), satisfying
\[
\int_{-1}^{1} K(u)\,du = 1,
\qquad
\int_{-1}^{1} uK(u)\,du = 0,
\qquad
\mu_2(K):=\int_{-1}^{1} u^2K(u)\,du < \infty .
\]
We write
\[
K_\tau(u):=\tau^{-1}K(u/\tau).
\]
In the implementation we use the truncated normalized Gaussian kernel
\[
K_{\mathrm{TG}}(u)
=
\frac{\varphi(u)\mathbf 1\{|u|\le 1\}}
{\int_{-1}^{1}\varphi(v)\,dv},
\qquad
\varphi(u)
=
\frac{1}{\sqrt{2\pi}}\exp\!\left(-\frac{u^2}{2}\right).
\]
This kernel is nonnegative, symmetric, integrates to one, has zero first
moment, and has finite second moment.

The following proposition makes the kernel approximation explicit. The interior
condition \(\pi_h(X)\in[c_-+\kappa,c_+-\kappa]\) ensures that the support of
\(K_\tau(\pi_h(X)-\cdot)\) remains inside the treatment interval when
\(\tau<\kappa\). This avoids boundary bias and boundary-tail terms. Standard
boundary kernels or boundary corrections could be used instead, but we keep the
compact statement below for clarity.

\begin{proposition}[Kernel approximation bias]
\label{prop:kernel-bias-surface}
Assume there exists $\kappa>0$ such that
\[
    \pi_h(X)\in[c_-+\kappa,c_+-\kappa]
    \qquad\text{a.s.}
\]
Assume moreover that, for almost every $x$, the map $t\mapsto G_h(x,t)$ is twice
continuously differentiable on
$[\pi_h(x)-\kappa,\pi_h(x)+\kappa]$, and that there exists an integrable envelope
$M_h(X)$ such that
\[
    \sup_{|u|\le \kappa}
    \left|
    \partial_t^2 G_h\!\left(X,\pi_h(X)+u\right)
    \right|
    \le M_h(X)
    \qquad\text{a.s.}
\]
If $\tau<\kappa$ and $K$ is supported on $[-1,1]$, then
\begin{equation}
    \left|
    A_\tau(h)-\E\!\left[G_h\!\left(X,\pi_h(X)\right)\right]
    \right|
    \le
    \frac{\mu_2(K)}{2}\,\tau^2\,\E[M_h(X)].
    \label{eq:kernel-bias-surface}
\end{equation}
Consequently,
\[
    \E\!\left[G_h\!\left(X,\pi_h(X)\right)\right]
    \le
    A_\tau(h)+C_h\tau^2,
    \qquad
    C_h:=\frac{\mu_2(K)}{2}\E[M_h(X)].
\]
\end{proposition}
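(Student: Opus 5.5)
The plan is to reduce $A_\tau(h)$ to a deterministic kernel smoothing integral in $t$ for each fixed $x$, and then apply a second-order Taylor expansion around $\pi_h(x)$.

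\textbf{Step 1: rewrite $A_\tau(h)$ as a convolution.} Conditional on $X=x$, the draw $A$ has density $p(\cdot\mid x)$ on $\mathcal A$, and overlap gives $p>0$ there. The inverse-density factor therefore cancels:
\[
    \E\!\left[\frac{K((\pi_h(x)-A)/\tau)}{\tau\,p(A\mid x)}\,G_h(x,A)\,\Big|\,X=x\right]
    =\int_{c_-}^{c_+} K_\tau\bigl(\pi_h(x)-t\bigr)\,G_h(x,t)\,dt .
\]
The kernel is supported on $[-1,1]$, so the integrand vanishes unless $|t-\pi_h(x)|\le\tau$. Since $\tau<\kappa$ and $\pi_h(x)\in[c_-+\kappa,c_+-\kappa]$, this window lies inside $\mathcal A$. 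Substituting $t=\pi_h(x)+\tau u$ and using symmetry of $K$ gives
\[
    \int_{-1}^{1}K(u)\,G_h\bigl(x,\pi_h(x)+\tau u\bigr)\,du .
\]

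\textbf{Step 2: Taylor expansion.} On $[\pi_h(x)-\kappa,\pi_h(x)+\kappa]$ the map $t\mapsto G_h(x,t)$ is $C^2$, so
\[
    G_h(x,\pi_h+\tau u)=G_h(x,\pi_h)+\tau u\,\partial_tG_h(x,\pi_h)+\tfrac12\tau^2u^2\,\partial_t^2G_h(x,\pi_h+\xi),
    \qquad |\xi|\le\tau|u|\le\kappa .
\]
Integrate against $K$ and use the kernel moments. The zeroth-order term returns $G_h(x,\pi_h(x))$ because $\int K=1$. The first-order term vanishes because $\int uK=0$. The remainder is bounded in absolute value by $\tfrac12\tau^2\mu_2(K)M_h(x)$, using $K\ge0$ and the envelope bound on $\partial_t^2G_h$.

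\textbf{Step 3: integrate over $X$.} Taking expectation over $X$ gives \eqref{eq:kernel-bias-surface}, and the consequence $\E[G_h(X,\pi_h(X))]\le A_\tau(h)+C_h\tau^2$ follows immediately. This step needs measurability of the intermediate point $\xi$. I would avoid it by using the integral form of the remainder, $\int_0^1(1-s)\,\partial_t^2G_h(x,\pi_h+s\tau u)\,ds\,\tau^2u^2$, which is jointly measurable and bounded by $\tfrac12\tau^2u^2M_h(x)$.

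Integrability of $M_h$ then justifies the exchange of integrals via Fubini. Integrability of $G_h(X,\pi_h(X))$ is needed so that $A_\tau(h)$ is finite and the difference in \eqref{eq:kernel-bias-surface} is well defined. I would take this as implicit in the square-integrability of $h,h_0$, or note that the bound holds trivially otherwise.

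The main obstacle is bookkeeping rather than analysis. The interpretation of $\E_{X,A}$ as a draw from $P_{A\mid X}$ must be used so that the $1/p$ factor cancels exactly. The interior/support condition must be invoked so that no boundary truncation of the kernel integral occurs. With these in place, the Taylor step is routine.
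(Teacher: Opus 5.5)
Your proposal is correct and matches the paper's proof step for step. Both cancel the inverse-density factor to write $A_\tau(h)=\E_X\bigl[\int K_\tau(\pi_h(X)-t)\,G_h(X,t)\,dt\bigr]$, use $\tau<\kappa$ with the interior condition to keep the kernel window inside $\mathcal A$, change variables, and Taylor-expand to second order, with the kernel moments removing the leading terms and the envelope $M_h$ bounding the remainder; your integral-form remainder and your Fubini/finiteness remarks are small rigor improvements over the paper's Lagrange-form argument, not a different route.
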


\begin{proof}
By the definition of $A_\tau(h)$ and overlap,
\[
    A_\tau(h)
    =
    \E_X\!\left[
    \int_{c_-}^{c_+}K_\tau(\pi_h(X)-t)G_h(X,t)\,dt
    \right].
\]
The support condition on $K$ and the requirement $\tau<\kappa$ imply that, for
almost every $X$, the integrand is nonzero only when
$t\in[\pi_h(X)-\tau,\pi_h(X)+\tau]\subset[c_-,c_+]$. Fix such an $x$, write
$\pi=\pi_h(x)$ and $g_x(t)=G_h(x,t)$, and change variables
$u=(\pi-t)/\tau$. Then
\[
    \int K_\tau(\pi-t)g_x(t)\,dt
    =
    \int K(u)g_x(\pi-\tau u)\,du .
\]
For each $u$ in the support of $K$, Taylor's theorem gives
\[
    g_x(\pi-\tau u)
    =
    g_x(\pi)
    -\tau u g_x'(\pi)
    +\frac{\tau^2u^2}{2}g_x''(\pi-\xi_{x,u}\tau u)
\]
for some $\xi_{x,u}\in(0,1)$. Since $K$ is symmetric and integrates to one,
$\int uK(u)\,du=0$, and therefore the first-order term vanishes after
integration. Hence
\[
    \left|
    \int K(u)g_x(\pi-\tau u)\,du-g_x(\pi)
    \right|
    \le
    \frac{\tau^2}{2}\mu_2(K)
    \sup_{|v|\le\tau}|g_x''(\pi+v)|.
\]
Taking expectations and using the envelope condition yields
\eqref{eq:kernel-bias-surface}.
\end{proof}

The next proposition controls the second term in the regret decomposition, the
surface error at the oracle treatment $a^\star(X)$. Because $a^\star(X)$ is not
available to the learner, we use the global average surface loss $B(h)$ as a
fallback. The resulting bound is intentionally coarse: it is a stabilization
term rather than a sharp localization result.

\begin{proposition}[Global average surface bound]
\label{prop:global-surface-bound}
Assume that, for every $x$, the map $t\mapsto G_h(x,t)$ is $L$-Lipschitz on
$[c_-,c_+]$. Then
\begin{equation}
    \E\!\left[G_h\!\left(X,a^\star(X)\right)\right]
    \le
    B(h)+\frac{L}{2}(c_+-c_-).
    \label{eq:prop44-surface}
\end{equation}
\end{proposition}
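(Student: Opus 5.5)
The plan is to compare the value of $G_h(x,\cdot)$ at the single point $a^\star(x)$ with its average over $[c_-,c_+]$, pointwise in $x$, using only the Lipschitz property. Then we integrate over $X$ and use the identity $B(h)=\E_X[(c_+-c_-)^{-1}\int G_h(X,t)\,dt]$ from \eqref{eq:B-surface}. That identity is valid under overlap, since the inverse-propensity form and the uniform-average form coincide.

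Fix $x$, write $a^\star=a^\star(x)$ and $g(t)=G_h(x,t)$. By the $L$-Lipschitz assumption, for every $t\in[c_-,c_+]$ we have $g(a^\star)\le g(t)+L|t-a^\star|$. Averaging this inequality uniformly over $t\in[c_-,c_+]$ gives
\[
    g(a^\star)\le \frac{1}{c_+-c_-}\int_{c_-}^{c_+}g(t)\,dt+\frac{L}{c_+-c_-}\int_{c_-}^{c_+}|t-a^\star|\,dt .
\]
The remaining integral is elementary: $\int_{c_-}^{c_+}|t-a^\star|\,dt=\tfrac12\{(a^\star-c_-)^2+(c_+-a^\star)^2\}$. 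Since $a^\star\in[c_-,c_+]$, this quantity is convex in $a^\star$ and is maximized at an endpoint, where it equals $(c_+-c_-)^2/2$. Dividing by $c_+-c_-$ bounds the remainder by $\frac{L}{2}(c_+-c_-)$, uniformly in $x$.

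Taking expectations over $X$ then yields \eqref{eq:prop44-surface}. The one point needing care is measurability of $x\mapsto G_h(x,a^\star(x))$, which I would take as given, consistent with the measurable-policy convention and fixed tie-breaking used in the main text. Note also that the endpoint bound is the worst case. If $a^\star$ were interior one could sharpen the remainder to $\frac{L}{4}(c_+-c_-)$ at the midpoint, but the stated coarse constant suffices.
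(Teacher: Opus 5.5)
Your proof is correct and is essentially the paper's argument. You compare $G_h(x,a^\star(x))$ pointwise to its uniform average over $[c_-,c_+]$ via the Lipschitz bound, control $\frac{1}{c_+-c_-}\int_{c_-}^{c_+}|t-a^\star(x)|\,dt$ by $\frac{c_+-c_-}{2}$ (you verify this through the convex quadratic, where the paper simply asserts it), and then take expectations over $X$. One small correction to your closing aside: being interior is not enough to get the $\frac{L}{4}(c_+-c_-)$ remainder, since only $a^\star(x)$ at the midpoint achieves it, and a general interior point gives a remainder between $\frac{L}{4}(c_+-c_-)$ and $\frac{L}{2}(c_+-c_-)$.
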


\begin{proof}
Fix $x$ and write $t^\star=a^\star(x)$. By the Lipschitz condition,
\[
    G_h(x,t^\star)
    \le
    G_h(x,t)+L|t-t^\star|,
    \qquad t\in[c_-,c_+].
\]
Equivalently,
\[
    G_h(x,t)
    \ge
    G_h(x,t^\star)-L|t-t^\star|.
\]
Integrating over $[c_-,c_+]$ and dividing by the interval length gives
\[
    \frac{1}{c_+-c_-}\int_{c_-}^{c_+}G_h(x,t)\,dt
    \ge
    G_h(x,t^\star)
    -
    \frac{L}{c_+-c_-}\int_{c_-}^{c_+}|t-t^\star|\,dt .
\]
For any $t^\star\in[c_-,c_+]$,
\[
    \frac{1}{c_+-c_-}\int_{c_-}^{c_+}|t-t^\star|\,dt
    \le
    \frac{c_+-c_-}{2}.
\]
Rearranging and taking expectation over $X$ proves the claim.
\end{proof}

\subsection{Proof of the weighted surface surrogate}
\label{app:weighted-surface-surrogate-proof}

We now combine the regret decomposition with the local kernel approximation and
the global average surface bound.

\begin{proof}[Proof of Proposition~\ref{prop:weighted-surrogate-surface}]
Lemma~\ref{lemma:regret-decomp-surface} gives
\[
    \Reg(h)
    \le
    \sqrt{\E\!\left[G_h\!\left(X,\pi_h(X)\right)\right]}
    +
    \sqrt{\E\!\left[G_h\!\left(X,a^\star(X)\right)\right]}.
\]
Applying Propositions~\ref{prop:kernel-bias-surface}
and~\ref{prop:global-surface-bound} to the two terms yields
\[
    \Reg(h)
    \le
    \sqrt{A_\tau(h)+C_h\tau^2}
    +
    \sqrt{B(h)+\frac{L}{2}(c_+-c_-)}.
\]
Using $\sqrt{u} + \sqrt{v}\le \sqrt{2(u+v)}$ for $u,v\ge0$,
\[
    \Reg(h)
    \le
    \sqrt{2\!\left(A_\tau(h)+B(h)+C_h\tau^2+\frac{L}{2}(c_+-c_-)\right)}.
\]
Since $c_\gamma=\max\{1,\gamma^{-1}\}$,
\[
    A_\tau(h)+B(h)
    \le
    c_\gamma\{\gamma A_\tau(h)+B(h)\}.
\]
Thus
\[
    \Reg(h)
    \le
    \sqrt{2c_\gamma\{\gamma A_\tau(h)+B(h)\}
    +2\!\left(C_h\tau^2+\frac{L}{2}(c_+-c_-)\right)}.
\]
Finally applying $\sqrt{u+v}\le\sqrt u+\sqrt v$ gives
\[
    \Reg(h)
    \le
    \sqrt{2c_\gamma}\sqrt{\gamma A_\tau(h)+B(h)}
    +
    \sqrt{2\!\left(C_h\tau^2+\frac{L}{2}(c_+-c_-)\right)},
\]
which is the desired bound.
\end{proof}

\subsection{Bridge-to-surface control}
\label{app:bridge-to-surface-proof}

The preceding arguments control regret by a weighted surface loss involving
$m_h-m_0$. This final step shows that, under a weighted ill-posedness condition,
the same weighted surface loss is controlled by the corresponding weighted
proximal bridge moment loss.

\begin{proof}[Proof of Proposition~\ref{prop:weighted_bridge_to_surface}]
For every $(a,x)$, define the surface error induced by the bridge error as
\[
    \bar\Delta_h(a,x)
    :=
    m_h(a,x)-m_0(a,x)
    =
    \int \Delta_h(a,w,x)\,dP_{W\mid X=x}(w).
\]
Thus
\[
    \mathcal L_{\mathrm{surf},\omega}(h)
    =
    \E\!\left[
    \omega(A,X)\bar\Delta_h(A,X)^2
    \right].
\]
By Jensen's inequality, for every $(a,x)$,
\[
    \bar\Delta_h(a,x)^2
    \le
    \int \Delta_h(a,w,x)^2\,dP_{W\mid X=x}(w).
\]
It follows that
\begin{align*}
    \mathcal L_{\mathrm{surf},\omega}(h)
    &\le
    \E_X\!\left[
    \int \omega(a,X)
    \left\{
        \int \Delta_h(a,w,X)^2\,dP_{W\mid X}(w)
    \right\}
    dP_{A\mid X}(a\mid X)
    \right] \\
    &=
    \int \omega(a,x)\Delta_h(a,w,x)^2\,d\mu(w,a,x)
    =
    \|\Delta_h\|_{L_2(\omega\,d\mu)}^2 .
\end{align*}
By the bounded density-ratio assumption $d\mu/d\nu\le C_\rho$,
\[
    \|\Delta_h\|_{L_2(\omega\,d\mu)}^2
    \le
    C_\rho\|\Delta_h\|_{L_2(\omega\,d\nu)}^2 .
\]
Since $h_0$ satisfies the outcome bridge restriction,
\[
    \E[Y-h(A,W,X)\mid A,Z,X]
    =
    -\E[h(A,W,X)-h_0(A,W,X)\mid A,Z,X]
    =
    -T\Delta_h(A,Z,X),
\]
and hence
\[
    \mathcal L_{\mathrm{br},\omega}(h)
    =
    \|T\Delta_h\|_{L_2(\omega\,dP_{A,Z,X})}^2 .
\]
The definition of $\tau_\omega$ gives
\[
    \|\Delta_h\|_{L_2(\omega\,d\nu)}
    \le
    \tau_\omega\|T\Delta_h\|_{L_2(\omega\,dP_{A,Z,X})}.
\]
Combining the last four displays proves
\[
    \mathcal L_{\mathrm{surf},\omega}(h)
    \le
    C_\rho\tau_\omega^2\mathcal L_{\mathrm{br},\omega}(h).
\]
\end{proof}

\begin{proof}[Proof of Theorem~\ref{thm:self_weighted_bridge_to_regret}]
By definition of $\omega_h$,
\[
    \mathcal L^{\mathrm{surf}}_{\tau,\lambda}(h)
    =
    \mathcal L_{\mathrm{surf},\omega_h}(h).
\]
Applying Proposition~\ref{prop:weighted_bridge_to_surface} with
$\omega=\omega_h$ yields
\[
    \mathcal L^{\mathrm{surf}}_{\tau,\lambda}(h)
    \le
    C_\rho\tau_{\omega_h}^2\mathcal L_{\mathrm{br},\omega_h}(h).
\]
Substituting this inequality into Proposition~\ref{prop:weighted-surrogate-surface}
proves the theorem.
\end{proof}

\subsection{The weighted ill-posedness condition}
\label{app:weighted-illposedness-sufficient}

Proposition~\ref{prop:weighted_bridge_to_surface} assumes the finite weighted
ill-posedness constant
\[
\tau_\omega
:=
\sup_{g\in\mathcal H:\,
\|T\Delta_g\|_{L_2(\omega\,dP_{A,Z,X})}>0}
\frac{
\|\Delta_g\|_{L_2(\omega\,d\nu)}
}{
\|T\Delta_g\|_{L_2(\omega\,dP_{A,Z,X})}
}
<\infty .
\]
This condition is the direct stability assumption needed to control the
weighted surface target by the weighted bridge loss. It is also implied by
the usual unweighted restricted ill-posedness condition whenever the weights
are bounded away from zero and infinity.

\begin{proposition}[Bounded weights imply weighted stability]
\label{prop:bounded_weights_weighted_stability}
Assume that the global restricted ill-posedness constant
\[
\tau_{\mathrm{glob}}
:=
\sup_{g\in\mathcal H:\,
\|T\Delta_g\|_{L_2(dP_{A,Z,X})}>0}
\frac{
\|\Delta_g\|_{L_2(d\nu)}
}{
\|T\Delta_g\|_{L_2(dP_{A,Z,X})}
}
\]
is finite. Suppose further that, for some constants
$0<\omega_{\min}\le \omega_{\max}<\infty$,
\[
\omega_{\min}\le \omega(a,x)\le \omega_{\max}
\]
almost surely under the relevant laws. Then
\[
\tau_\omega
\le
\sqrt{\frac{\omega_{\max}}{\omega_{\min}}}\,
\tau_{\mathrm{glob}}.
\]
Equivalently,
\[
\tau_\omega^2
\le
\frac{\omega_{\max}}{\omega_{\min}}\,
\tau_{\mathrm{glob}}^2 .
\]
\end{proposition}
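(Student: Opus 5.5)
The plan is a direct sandwich argument that moves the weight out of each weighted norm in the definition of $\tau_\omega$ and replaces it by $\omega_{\max}$ in the numerator and $\omega_{\min}$ in the denominator. Fix any $g\in\cH$ with $\|T\Delta_g\|_{L_2(\omega\,dP_{A,Z,X})}>0$. Since the weight is a function of $(a,x)$ only, and both $\nu=P_{A,W,X}$ and $P_{A,Z,X}$ have the same $(A,X)$-marginal, the almost-sure bounds $\omega_{\min}\le\omega\le\omega_{\max}$ hold under both laws. This is the meaning I will give to ``almost surely under the relevant laws''.

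First I bound the numerator pointwise:
\[
\|\Delta_g\|^2_{L_2(\omega\,d\nu)}=\int\omega\,\Delta_g^2\,d\nu\le\omega_{\max}\|\Delta_g\|^2_{L_2(d\nu)}.
\]
In the same way I bound the denominator from below:
\[
\|T\Delta_g\|^2_{L_2(\omega\,dP_{A,Z,X})}\ge\omega_{\min}\|T\Delta_g\|^2_{L_2(dP_{A,Z,X})}.
\]
Because $\omega_{\min}>0$, the weighted denominator is positive exactly when the unweighted one is. So $g$ also belongs to the index set of the supremum defining $\tau_{\mathrm{glob}}$.

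Taking the ratio then gives
\[
\frac{\|\Delta_g\|_{L_2(\omega\,d\nu)}}{\|T\Delta_g\|_{L_2(\omega\,dP_{A,Z,X})}}\le\sqrt{\frac{\omega_{\max}}{\omega_{\min}}}\,\frac{\|\Delta_g\|_{L_2(d\nu)}}{\|T\Delta_g\|_{L_2(dP_{A,Z,X})}}\le\sqrt{\frac{\omega_{\max}}{\omega_{\min}}}\,\tau_{\mathrm{glob}}.
\]
Taking the supremum over admissible $g$ yields the stated bound on $\tau_\omega$, and squaring gives the equivalent form.

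The only delicate step is the bookkeeping of null sets and of index sets. I need to confirm that the weight bounds hold almost surely under both $\nu$ and $P_{A,Z,X}$, which follows from the shared $(A,X)$-marginal. I also need the two suprema to range over the same $g$, which the positivity of $\omega_{\min}$ ensures. Everything else is elementary monotonicity of the integral.
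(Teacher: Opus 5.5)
Your proof is correct and is the same sandwich argument the paper uses: you bound the numerator above by $\sqrt{\omega_{\max}}$ times the unweighted norm and the denominator below by $\sqrt{\omega_{\min}}$ times the unweighted norm, then take the ratio and the supremum over $g$. Your extra bookkeeping makes explicit what the paper leaves implicit, namely that $\nu$ and $P_{A,Z,X}$ share the $(A,X)$-marginal so the weight bounds hold under both, and that the index sets match. One small correction there: the direction you actually need (weighted denominator positive implies unweighted positive) holds because $T\Delta_g=0$ a.s. forces the weighted norm to vanish, not because $\omega_{\min}>0$.
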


\begin{proof}
For every $g\in\mathcal H$,
\[
\|\Delta_g\|_{L_2(\omega\,d\nu)}
\le
\sqrt{\omega_{\max}}\,
\|\Delta_g\|_{L_2(d\nu)}.
\]
Similarly,
\[
\|T\Delta_g\|_{L_2(\omega\,dP_{A,Z,X})}
\ge
\sqrt{\omega_{\min}}\,
\|T\Delta_g\|_{L_2(dP_{A,Z,X})}.
\]
Therefore, whenever the denominator is nonzero,
\[
\frac{
\|\Delta_g\|_{L_2(\omega\,d\nu)}
}{
\|T\Delta_g\|_{L_2(\omega\,dP_{A,Z,X})}
}
\le
\sqrt{\frac{\omega_{\max}}{\omega_{\min}}}
\frac{
\|\Delta_g\|_{L_2(d\nu)}
}{
\|T\Delta_g\|_{L_2(dP_{A,Z,X})}
}.
\]
Taking the supremum over $g\in\mathcal H$ gives the result.
\end{proof}

Proposition~\ref{prop:bounded_weights_weighted_stability} shows that the
weighted stability condition is not a qualitatively new type of assumption
when bounded or clipped weights are used. It follows from the standard global
restricted ill-posedness condition by equivalence of weighted and unweighted
$L_2$ norms.

\subsection{Proof of Population Target Preservation}
\label{app:proof-population-target-preservation}

Let
\[
    U:=(A,W,X), \qquad V:=(A,Z,X), \qquad r_h:=Y-h(U).
\]
Then
\[
    \mathcal L_{\mathrm{br},\omega}(h)
    =
    \E\!\left[\omega(A,X)\{\E[r_h\mid V]\}^2\right].
\]
Since $h_0$ satisfies the outcome bridge restriction,
\[
    \E[r_{h_0}\mid V]=0
    \quad\text{a.s.}
\]
and hence $\mathcal L_{\mathrm{br},\omega}(h_0)=0$. Because
$\mathcal L_{\mathrm{br},\omega}(h)\ge 0$ for all $h\in\cH$, this shows that
$h_0$ is a population minimizer.

Now fix any $h\in\cH$. If $\mathcal L_{\mathrm{br},\omega}(h)=0$, then the
nonnegative random variable
\[
    \omega(A,X)\{\E[r_h\mid V]\}^2
\]
has expectation zero, and therefore equals zero almost surely. Since
$\omega(A,X)>0$ almost surely, it follows that
\[
    \E[r_h\mid V]=0
    \quad\text{a.s.}
\]
Equivalently,
\[
    \E\{Y-h(A,W,X)\mid A,Z,X\}=0
    \quad\text{a.s.}
\]
The converse implication is immediate: if the conditional moment restriction
holds, then $\mathcal L_{\mathrm{br},\omega}(h)=0$. Therefore,
\[
    \mathcal L_{\mathrm{br},\omega}(h)=0
    \quad\Longleftrightarrow\quad
    \E\{Y-h(A,W,X)\mid A,Z,X\}=0
    \quad\text{a.s.}
\]
The same equivalence holds for the unweighted bridge risk, corresponding to
$\omega\equiv 1$. Hence the weighted and unweighted population bridge risks have
the same zero-risk solution.

\section{Practical Implementation Template for Policy-Targeted Proximal Solvers}
\label{app:proximal-solvers}
\label{app:mmr}

This appendix collects the implementation details for the proximal bridge solvers used in the experiments.
The goal is to separate the common policy-targeted weighting template from the solver-specific algebra.
Throughout this appendix, write
\[
    U := (A,W,X),
    \qquad
    V := (A,Z,X),
\]
and, for a candidate outcome bridge $h$, define the residual
\[
    r_h := Y-h(U).
\]
The outcome bridge equation is the conditional moment restriction
\[
    \E[r_h\mid V]=0.
\]
All solvers below estimate this same bridge equation. They differ only in how they approximate or penalize the conditional moment restriction.

\subsection{Policy-targeted weights}
\label{app:solver_template_weights}

At iteration $s$, the bridge fit is projected into a response-surface estimate
using bridge pseudo-outcomes over a treatment grid
$\mathcal G=\{a_1,\ldots,a_M\}\subset\mathcal A$. To construct the localization
weights, these pseudo-outcomes are formed by cross-fitting. For each fold $I_k$
in a partition of $\{1,\ldots,n\}$, the bridge $\widehat h_{-k}^{(s)}$ is trained
on the observations outside $I_k$ using the current weights and evaluated on the
held-out observations, giving
\[
    \widetilde H_{im}^{(s)}
    =
    \widehat h_{-k}^{(s)}(a_m,W_i,X_i),
    \qquad i\in I_k,\quad m=1,\ldots,M .
\]
We pool these out-of-fold pseudo-outcomes across folds and regress them on
$(a_m,X_i)$ to obtain a single response-surface estimate $\widehat m^{(s)}$.
The observation-specific pseudo-optimal treatment used for weighting is
\[
    \widehat\pi_i^{(s)}
    \in
    \argmax_{a\in\mathcal G}\widehat m^{(s)}(a,X_i),
\]
or the corresponding maximizer over $\mathcal A$ when continuous optimization is feasible.

The next bridge fit uses the fixed weights
\begin{equation}
    \widehat\omega_i^{(s+1)}
    =
    \frac{
        1+
        \lambda\,K\!\left((\widehat\pi_i^{(s)}-A_i)/\tau\right)
    }{
        (c_+-c_-)\widehat p(A_i\mid X_i)
    },
    \label{eq:appendix_policy_weight}
\end{equation}
where $\widehat p(a\mid x)$ is an estimate of the generalized propensity score,
i.e., the conditional treatment density for the continuous treatment $A$. In our
experiments, this density is estimated using a conditional normalizing-flow
model. The function $K$ is the localization kernel, $\tau$ is the bandwidth, and
$\lambda\ge 0$ controls the strength of localization. The constant $1$ in the
numerator is the global stabilizing component.

After the final weighting iteration, the solver is refit on the full sample
using the last cross-fitted weights.

\subsection{Two ways in which the weights enter}
\label{app:solver_adapter_table}

The same policy-targeted weights are used for every solver, but they enter the numerical objective in two different ways. For maximum-moment solvers, the weights multiply pairwise residual products. For two-stage regression solvers, the weights are applied in the second-stage bridge regression.

\begin{table}[h]
\centering
\caption{Solver-specific implementation of the policy-targeted bridge weights.}
\label{tab:solver_weight_adapters}
\begin{tabular}{lll}
\toprule
Solver & Baseline object & Policy-targeted replacement \\
\midrule
PMMR & $r^\top K_V r$ & $r^\top D_\omega^{1/2}K_VD_\omega^{1/2}r$ \\
NMMR & $r(\theta)^\top K_V r(\theta)$ & $r(\theta)^\top D_\omega^{1/2}K_VD_\omega^{1/2}r(\theta)$ \\
KPV & second-stage kernel ridge & weighted second-stage ridge with $D_\omega$ \\
DFPV & second-stage feature ridge & weighted second-stage ridge with $D_\omega$ \\
\bottomrule
\end{tabular}
\end{table}

Here $K_V$ is the moment-kernel Gram matrix on $V=(A,Z,X)$ and
\[
    D_\omega := \operatorname{diag}(\widehat\omega_1,\ldots,\widehat\omega_n).
\]
For PMMR and NMMR, the algebra is equivalent to replacing each residual by $\sqrt{\widehat\omega_i}r_i$. For KPV and DFPV, the actual sample weight in the second-stage weighted regression is $\widehat\omega_i$, not $\sqrt{\widehat\omega_i}$.

\subsection{Why the weighted objective still targets the bridge equation}
\label{app:mmr_weighted_bridge_equation}

The policy-targeted objective does not change the population bridge equation when the weights are strictly positive. It changes the geometry in which violations of the bridge equation are penalized. Indeed, for any positive measurable weight $\omega(V)$,
\[
    \omega(V)\{\E[r_h\mid V]\}^2=0
    \quad\Longleftrightarrow\quad
    \E[r_h\mid V]=0.
\]
Equivalently,
\[
    \sqrt{\omega(V)}\,\E[r_h\mid V]
    =
    \E[\sqrt{\omega(V)}\,r_h\mid V],
\]
because $\omega(V)$ is measurable with respect to $V$. Thus, in a well-specified population problem, the weighted and unweighted conditional moment restrictions have the same bridge solutions. The weights matter in finite samples, under regularization, and under misspecification, where they determine which moment violations receive more emphasis.

\section{MMR-Based Solvers: PMMR and NMMR}
\label{app:pmmr}

PMMR and NMMR both estimate the outcome bridge by penalizing a kernel maximum moment restriction. The difference is the bridge class: PMMR uses an RKHS bridge, whereas NMMR uses a neural bridge. This section first derives the common weighted MMR criterion and then gives the PMMR and NMMR specializations.

\subsection{Weighted maximum moment restriction}
\label{app:mmr_basic}

Let $k$ be a positive definite kernel on $\cV$ and let $\mathcal H_k$ be its RKHS. The unweighted kernel maximum moment criterion is
\[
    \mathcal R_k(h)
    :=
    \left\|
        \E[r_h k(V,\cdot)]
    \right\|_{\mathcal H_k}^2.
\]
For an independent copy $(Y',U',V')$, with $r'_h=Y'-h(U')$, this can be written as
\[
    \mathcal R_k(h)
    =
    \E\!\big[r_h r'_h k(V,V')\big].
\]
Given observations $\{(y_i,u_i,v_i)\}_{i=1}^n$, define
\[
    r_i(h):=y_i-h(u_i),
    \qquad
    r(h):=(r_1(h),\ldots,r_n(h))^\top,
\]
and let
\[
    (K_V)_{ij}:=k(v_i,v_j).
\]
The empirical V-statistic is
\[
    \widehat{\mathcal R}_{k,V,n}(h)
    =
    \frac{1}{n^2}r(h)^\top K_V r(h).
\]

For policy-targeted learning, define
\[
    K_{V,\widehat\omega}
    :=
    D_{\widehat\omega}^{1/2}K_VD_{\widehat\omega}^{1/2},
    \qquad
    D_{\widehat\omega}^{1/2}
    :=
    \operatorname{diag}(\sqrt{\widehat\omega_1},\ldots,\sqrt{\widehat\omega_n}).
\]
The weighted empirical MMR criterion is
\begin{equation}
    \widehat{\mathcal R}_{k,V,n,\widehat\omega}(h)
    =
    \frac{1}{n^2}
    r(h)^\top K_{V,\widehat\omega}r(h).
    \label{eq:appendix_weighted_mmr}
\end{equation}
Equivalently, \eqref{eq:appendix_weighted_mmr} is the ordinary MMR criterion applied to the transformed residuals
\[
    \widetilde r_i(h)
    :=
    \sqrt{\widehat\omega_i}\{y_i-h(u_i)\}.
\]

\subsection{Policy-targeted PMMR}
\label{app:pmmr_weighted}

Let $\ell$ be a positive definite kernel on $\cU$ with RKHS $\mathcal H_\ell$. PMMR restricts $h\in\mathcal H_\ell$ and uses the representer form
\[
    h_\alpha(\cdot)
    =
    \sum_{j=1}^n \alpha_j\ell(u_j,\cdot).
\]
Let
\[
    (L_U)_{ij}:=\ell(u_i,u_j),
    \qquad
    y:=(y_1,\ldots,y_n)^\top.
\]
Then the residual vector is
\[
    r(\alpha)=y-L_U\alpha.
\]
The unweighted PMMR objective is
\[
    J_{\mathrm{PMMR}}(\alpha)
    =
    (y-L_U\alpha)^\top K_V(y-L_U\alpha)
    +
    \eta\,\alpha^\top L_U\alpha,
\]
where the common factor $n^{-2}$ is absorbed into the regularization parameter. Its stabilized solution is
\[
    \widehat\alpha
    =
    \left(
        L_UK_VL_U+
        \eta L_U+
        \delta I_n
    \right)^{-1}
    L_UK_Vy,
\]
with numerical jitter $\delta>0$.

The policy-targeted PMMR objective is obtained by replacing $K_V$ with the weighted moment Gram matrix $K_{V,\widehat\omega}$:
\[
    J_{\mathrm{PMMR},\widehat\omega}(\alpha)
    =
    (y-L_U\alpha)^\top
    K_{V,\widehat\omega}
    (y-L_U\alpha)
    +
    \eta\,\alpha^\top L_U\alpha.
\]
Therefore
\begin{equation}
    \widehat\alpha_{\widehat\omega}
    =
    \left(
        L_UK_{V,\widehat\omega}L_U+
        \eta L_U+
        \delta I_n
    \right)^{-1}
    L_UK_{V,\widehat\omega}y.
    \label{eq:appendix_weighted_pmmr_solution}
\end{equation}
The fitted bridge is
\[
    \widehat h_{\widehat\omega}(a,w,x)
    =
    \sum_{j=1}^n
    \widehat\alpha_{\widehat\omega,j}
    \ell\{u_j,(a,w,x)\}.
\]
Thus policy-targeted PMMR is implemented by the single replacement
\[
    K_V
    \quad\leadsto\quad
    D_{\widehat\omega}^{1/2}K_VD_{\widehat\omega}^{1/2}.
\]

\subsection{Policy-targeted NMMR}
\label{app:nmmr}

NMMR uses the same conditional moment restriction, but represents the bridge by a neural network $h_\theta:\cU\to\R$. Define
\[
    r_i(\theta):=y_i-h_\theta(u_i),
    \qquad
    r(\theta):=(r_1(\theta),\ldots,r_n(\theta))^\top.
\]
The unweighted V-statistic NMMR loss is
\[
    \widehat{\mathcal R}_{k,V,n}(\theta)
    =
    \frac{1}{n^2}r(\theta)^\top K_Vr(\theta).
\]
The corresponding U-statistic version removes diagonal terms:
\[
    \widehat{\mathcal R}_{k,U,n}(\theta)
    =
    \frac{1}{n(n-1)}
    r(\theta)^\top
    \{K_V-\operatorname{diag}(K_V)\}
    r(\theta).
\]

The policy-targeted NMMR loss replaces $r_i(\theta)$ by $\sqrt{\widehat\omega_i}r_i(\theta)$. The weighted V-statistic objective is
\begin{equation}
    \widehat{\mathcal R}_{k,V,n,\widehat\omega}(\theta)
    =
    \frac{1}{n^2}
    r(\theta)^\top
    D_{\widehat\omega}^{1/2}K_VD_{\widehat\omega}^{1/2}
    r(\theta),
    \label{eq:appendix_weighted_nmmr_v}
\end{equation}
and the weighted U-statistic objective is
\begin{equation}
    \widehat{\mathcal R}_{k,U,n,\widehat\omega}(\theta)
    =
    \frac{1}{n(n-1)}
    r(\theta)^\top
    D_{\widehat\omega}^{1/2}
    \{K_V-\operatorname{diag}(K_V)\}
    D_{\widehat\omega}^{1/2}
    r(\theta).
    \label{eq:appendix_weighted_nmmr_u}
\end{equation}
The estimator minimizes one of these losses plus the usual network regularization penalty:
\[
    \widehat\theta_{\widehat\omega}
    \in
    \argmin_\theta
    \left\{
        \widehat{\mathcal R}_{k,V,n,\widehat\omega}(\theta)
        +
        \operatorname{pen}(\theta)
    \right\},
\]
or analogously with \eqref{eq:appendix_weighted_nmmr_u}.

For mini-batch training, the same formula is applied within each batch: compute the batch-level moment kernel on the batch values of $V$, form the residual vector, multiply residuals by $\sqrt{\widehat\omega_i}$, and evaluate the quadratic form. Full-batch training is preferable when feasible because the NMMR objective is pairwise.

\section{Two-Stage Solvers: KPV and DFPV}
\label{app:kpv}

KPV and DFPV estimate the bridge through two stages. The first stage estimates a nuisance conditional object, such as a conditional mean embedding or conditional feature. The second stage solves the bridge equation by regressing $Y$ on that estimated conditional object. In the policy-targeted implementation used here, the first stage is left unweighted and the policy weights are applied only in the second-stage bridge regression.

This choice matches the role of the weights in the theory. The weights define the norm over moment-view points $V=(A,Z,X)$ in which bridge residuals are penalized. The first stage estimates the nuisance conditional object itself, whereas the second stage is the step that directly enforces the bridge equation.

\subsection{Policy-targeted KPV}
\label{app:kpv_weighted}

KPV uses kernels to estimate the conditional mean embedding of $U$ given $V$ \citep{mastouri2021proximal}. Let $\ell$ be a positive definite kernel on $\cU$ with RKHS $\mathcal H_\ell$, and let $k$ be a positive definite kernel on $\cV$. The bridge equation can be written as
\[
    \E[Y\mid V=v]
    =
    \E[h(U)\mid V=v]
    =
    \left\langle h,\mu_{U\mid v}\right\rangle_{\mathcal H_\ell},
\]
where
\[
    \mu_{U\mid v}:=\E\{\ell(U,\cdot)\mid V=v\}
\]
is the conditional mean embedding.

Split the sample into a first-stage sample $\mathcal I_1$ of size $m_1$ and a second-stage sample $\mathcal I_2$ of size $m_2$. Define
\[
    (K_{V,11})_{ij}:=k(v_i,v_j),\quad i,j\in\mathcal I_1,
    \qquad
    (K_{V,12})_{ij}:=k(v_i,v_j),\quad i\in\mathcal I_1,\ j\in\mathcal I_2.
\]
With first-stage regularization $\lambda_1>0$, the estimated conditional embedding at a second-stage point $v_j$, $j\in\mathcal I_2$, is
\[
    \widehat\mu_{U\mid v_j}
    =
    \sum_{i\in\mathcal I_1}
    \widehat\Gamma_{ij}\ell(u_i,\cdot),
\]
where
\[
    \widehat\Gamma
    :=
    \left(K_{V,11}+m_1\lambda_1 I_{m_1}\right)^{-1}K_{V,12}.
\]
Let
\[
    (L_{U,11})_{ij}:=\ell(u_i,u_j),
    \qquad i,j\in\mathcal I_1,
\]
and define the second-stage embedding Gram matrix
\[
    G_\mu
    :=
    \left(
        \left\langle
            \widehat\mu_{U\mid v_i},
            \widehat\mu_{U\mid v_j}
        \right\rangle_{\mathcal H_\ell}
    \right)_{i,j\in\mathcal I_2}
    =
    \widehat\Gamma^\top L_{U,11}\widehat\Gamma.
\]
Writing $y_2=(y_j:j\in\mathcal I_2)$, the unweighted second-stage KPV objective is
\[
    J_{\mathrm{KPV}}(\beta)
    =
    (y_2-G_\mu\beta)^\top(y_2-G_\mu\beta)
    +
    m_2\lambda_2\beta^\top G_\mu\beta.
\]
The corresponding stabilized solution is
\[
    \widehat\beta
    =
    \left(G_\mu+m_2\lambda_2 I_{m_2}+\delta I_{m_2}\right)^{-1}y_2.
\]
The fitted bridge is
\[
    \widehat h(a,w,x)
    =
    \sum_{j\in\mathcal I_2}
    \widehat\beta_j\widehat\mu_{U\mid v_j}(a,w,x),
\]
where
\[
    \widehat\mu_{U\mid v_j}(a,w,x)
    =
    \sum_{i\in\mathcal I_1}
    \widehat\Gamma_{ij}\ell\{u_i,(a,w,x)\}.
\]

For policy-targeted KPV, the first-stage embedding estimate is unchanged. The second-stage bridge regression becomes
\[
    \widehat h_{\lambda_2,\widehat\omega}
    \in
    \argmin_{h\in\mathcal H_\ell}
    \left\{
        \frac{1}{m_2}
        \sum_{j\in\mathcal I_2}
        \widehat\omega_j
        \left(
            y_j-
            \left\langle h,\widehat\mu_{U\mid v_j}\right\rangle_{\mathcal H_\ell}
        \right)^2
        +
        \lambda_2\|h\|_{\mathcal H_\ell}^2
    \right\}.
\]
In finite-dimensional form,
\begin{equation}
    J_{\mathrm{KPV},\widehat\omega}(\beta)
    =
    (y_2-G_\mu\beta)^\top
    D_{\widehat\omega}
    (y_2-G_\mu\beta)
    +
    m_2\lambda_2\beta^\top G_\mu\beta,
    \label{eq:appendix_weighted_kpv_objective}
\end{equation}
where now
\[
    D_{\widehat\omega}
    :=
    \operatorname{diag}(\widehat\omega_j:j\in\mathcal I_2).
\]
A symmetric stabilized implementation is obtained by defining
\[
    G_{\mu,\widehat\omega}
    :=
    D_{\widehat\omega}^{1/2}G_\mu D_{\widehat\omega}^{1/2}
\]
and solving
\[
    \widehat\eta_{\widehat\omega}
    =
    \left(
        G_{\mu,\widehat\omega}
        +
        m_2\lambda_2 I_{m_2}
        +
        \delta I_{m_2}
    \right)^{-1}
    D_{\widehat\omega}^{1/2}y_2,
\]
then setting
\[
    \widehat\beta_{\widehat\omega}
    :=
    D_{\widehat\omega}^{1/2}\widehat\eta_{\widehat\omega}.
\]
The policy-targeted KPV bridge is
\[
    \widehat h_{\widehat\omega}(a,w,x)
    =
    \sum_{j\in\mathcal I_2}
    \widehat\beta_{\widehat\omega,j}
    \widehat\mu_{U\mid v_j}(a,w,x).
\]
When $D_{\widehat\omega}=I_{m_2}$, this reduces to ordinary KPV.

\subsection{Policy-targeted DFPV}
\label{app:dfpv_weighted}

DFPV replaces kernel conditional embeddings by learned finite-dimensional features. Let
\[
    \phi_\theta:\cU\to\R^d
\]
be a learned bridge-side feature map and write
\[
    h_{\beta,\theta}(u)=\beta^\top\phi_\theta(u).
\]
The bridge equation becomes
\[
    \E(Y\mid V)
    =
    \beta^\top \E\{\phi_\theta(U)\mid V\}.
\]
DFPV therefore first estimates the conditional feature
\[
    \mu_\theta(v):=\E\{\phi_\theta(U)\mid V=v\},
\]
and then regresses $Y$ on the estimated conditional feature.

Let $I_1$ and $I_2$ denote the first-stage and second-stage samples, with cardinalities $m_1$ and $m_2$. The first stage solves
\[
    \widehat g
    \in
    \argmin_{g\in\mathcal G}
    \left\{
        \frac{1}{m_1}
        \sum_{i\in I_1}
        \|\phi_\theta(u_i)-g(v_i)\|_2^2
        +
        \lambda_1\Omega_1(g)
    \right\}.
\]
For $j\in I_2$, write
\[
    \widehat\mu_j:=\widehat g(v_j).
\]
Let $\widehat M\in\R^{m_2\times d}$ be the matrix whose $j$th row is $\widehat\mu_j^\top$, and let $y_2=(y_j:j\in I_2)$.

The unweighted second-stage ridge problem is
\[
    \widehat\beta
    \in
    \argmin_{\beta\in\R^d}
    \left\{
        \frac{1}{m_2}
        \sum_{j\in I_2}
        (y_j-\beta^\top\widehat\mu_j)^2
        +
        \lambda_2\|\beta\|_2^2
    \right\},
\]
with stabilized solution
\[
    \widehat\beta
    =
    \left(
        \widehat M^\top\widehat M
        +
        m_2\lambda_2 I_d
        +
        \delta I_d
    \right)^{-1}
    \widehat M^\top y_2.
\]
The fitted bridge is
\[
    \widehat h(a,w,x)=\widehat\beta^\top\phi_\theta(a,w,x).
\]

For policy-targeted DFPV, the first-stage feature regression is left unweighted and the second-stage ridge problem becomes
\[
    \widehat\beta_{\widehat\omega}
    \in
    \argmin_{\beta\in\R^d}
    \left\{
        \frac{1}{m_2}
        \sum_{j\in I_2}
        \widehat\omega_j
        (y_j-\beta^\top\widehat\mu_j)^2
        +
        \lambda_2\|\beta\|_2^2
    \right\}.
\]
Equivalently, with
\[
    D_{\widehat\omega}
    :=
    \operatorname{diag}(\widehat\omega_j:j\in I_2),
\]
we minimize
\[
    J_{\mathrm{DFPV},\widehat\omega}(\beta)
    =
    (y_2-\widehat M\beta)^\top
    D_{\widehat\omega}
    (y_2-\widehat M\beta)
    +
    m_2\lambda_2\|\beta\|_2^2.
\]
The stabilized solution is
\begin{equation}
    \widehat\beta_{\widehat\omega}
    =
    \left(
        \widehat M^\top D_{\widehat\omega}\widehat M
        +
        m_2\lambda_2 I_d
        +
        \delta I_d
    \right)^{-1}
    \widehat M^\top D_{\widehat\omega}y_2.
    \label{eq:appendix_weighted_dfpv_solution}
\end{equation}
The policy-targeted DFPV bridge is
\[
    \widehat h_{\widehat\omega}(a,w,x)
    =
    \widehat\beta_{\widehat\omega}^\top\phi_\theta(a,w,x).
\]
When $D_{\widehat\omega}=I_{m_2}$, this reduces to ordinary DFPV.

\section{Implementation Details and Pitfalls}
\label{app:implementation_details}
\label{app:dfpv}

This section records practical details shared by the solver implementations.

\paragraph{Cross-fitting of bridge pseudo-outcomes.}
Within each policy-targeted iteration, cross-fitting is used to construct the
bridge pseudo-outcomes that train the projected response surface. For each fold
$I_k$ in a partition of $\{1,\ldots,n\}$, the bridge
$\widehat h_{-k}^{(s)}$ is trained on the observations outside $I_k$ using the
current weights and evaluated on the held-out observations over the treatment
grid. The resulting out-of-fold pseudo-outcomes
$\widetilde H_{im}^{(s)}=\widehat h_{-k}^{(s)}(a_m,W_i,X_i)$, for
$i\in I_k$, are pooled across folds and used to train a single response-surface
estimator $\widehat m^{(s)}$. The observation-specific pseudo-optimal treatments
$\widehat\pi_i^{(s)}$ obtained from this surface are then used to form the next
weights. This prevents each observation's bridge pseudo-outcome from being
computed by a bridge fit that was trained on that observation.

\paragraph{Projection from bridge to response surface.}
After fitting a bridge, the induced causal response surface is
\[
    m_h(a,x)=\E\{h(a,W,x)\mid X=x\}.
\]
In practice, this expectation is estimated by evaluating the fitted bridge on a
treatment grid and then regressing the bridge pseudo-outcomes on $(a,X)$. During
weight construction, these pseudo-outcomes are the out-of-fold values described
above. The pseudo-optimal treatment is obtained by maximizing the resulting
surface estimate over the treatment grid.

\paragraph{Computational resources} All experiments were run on CPU. The implementation supports parallel execution across independent random seeds, models, and experimental configurations, which makes the evaluation pipeline naturally scalable on multi-core machines. The computational overhead introduced by the proposed decision-aware weighting strategy is linear in the number of weighting rounds: if the base proximal learner has training cost $T_{\mathrm{base}}$, then running $n_{rounds}$ weighting rounds requires approximately $\mathcal{O}(n_{rounds}\,T_{\mathrm{base}})$ time, up to lower-order costs for computing and normalizing the weights.

\section{Dataset Details}
\label{app:dataset-details}

This appendix describes the synthetic and semi-synthetic benchmarks used in our experiments. Both datasets are designed to evaluate proximal learning for individualized treatment selection with continuous treatments, hidden confounding, and proxy variables. In each case, the observed data consist of
\begin{equation}
O=(X,Z,W,A,Y),
\end{equation}
where $X$ denotes observed covariates, $Z$ is a treatment-inducing proxy, $W$ is an outcome-inducing proxy, $A\in\mathcal{A}$ is a continuous treatment, and $Y$ is the outcome. The latent confounders are unobserved by the learner but affect both treatment assignment and the outcome.

The key property of both benchmarks is that the conditional causal response
\begin{equation}
m_0(a,x)=\mathbb{E}\{Y(a)\mid X=x\}
\end{equation}
is non-monotone in $a$, with an interior and covariate-dependent optimal treatment. This makes the datasets suitable for evaluating decision-aware bridge learning: accurate estimation near the individualized optimum is more important for policy regret than uniform accuracy over clearly suboptimal treatment regions. The following subsections describe the fully synthetic DGP, the TCGA semi-synthetic DGP, and the Monte Carlo procedure used to compute ground-truth causal curves and policy values.

\subsection{Synthetic proximal continuous-treatment benchmark}
\label{app:synthetic-dgp}

We use a synthetic benchmark adapted from the proximal continuous-treatment graph of
\citet{wu2024doubly}. The original Wu et al. simulator is designed for proximal causal
effect estimation with continuous treatments; we retain its proxy-confounding structure,
but modify the outcome surface so that the conditional causal response has an interior, covariate-dependent maximizer.

\begin{figure}[h]
\centering
\tikzset{
  observed/.style={draw, rounded corners, fill=gray!20, inner sep=2pt},
  hidden/.style={draw, dashed, rounded corners, fill=white, inner sep=2pt}
}
\begin{tikzcd}[
    column sep=small,
    row sep=small
]
|[hidden]| \epsilon_Z \arrow[d, dashed] &
|[hidden]| \epsilon_1 \arrow[d, dashed] &
|[hidden]| \epsilon_2 \arrow[dr, dashed] &
|[hidden]| \epsilon_3 \arrow[dll, dashed] \arrow[d, dashed] &
|[hidden]| \epsilon_W \arrow[d, dashed] \\
|[observed]| Z \arrow[ddr] &
|[hidden]| U_Z \arrow[l, dashed] \arrow[ddrr, bend left, dashed] &&
|[hidden]| U_W \arrow[r, dashed] \arrow[ddll, bend right, dashed] &
|[observed]| W \arrow[ddl] \\
&& |[observed]| X \arrow[dl] \arrow[dr] && \\
& |[observed]| A \arrow[rr] && |[observed]| Y &
\end{tikzcd}
\caption{Synthetic proximal graph. Observed variables are shown in gray with solid nodes,
while unobserved variables are shown in white with dashed nodes. The treatment-inducing
proxy is $Z$, the outcome-inducing proxy is $W$, $X$ is observed, and
$U_Z,U_W$ are latent confounding components.}
\label{fig:synthetic-proximal-graph}
\end{figure}

Let $d_X,d_Z,d_W$ denote the dimensions of $X,Z,W$, and define
\begin{equation}
\theta_d=(1,1/2^2,\ldots,1/d^2)^\top .
\end{equation}
We sample
\begin{equation}
X\sim \mathcal{N}(0,\Sigma_X),\qquad
(\Sigma_X)_{jk}=
\begin{cases}
1, & j=k,\\
\rho_X, & |j-k|=1,\\
0, & \text{otherwise},
\end{cases}
\end{equation}
and independent noises
\begin{equation}
\epsilon_1,\epsilon_2,\epsilon_3\sim \mathcal{N}(0,1),\qquad
V_Z\sim \mathrm{Unif}([-1,1]^{d_Z}),\qquad
V_W\sim \mathrm{Unif}([-1,1]^{d_W}).
\end{equation}
The latent variables and proxies are
\begin{equation}
U_Z=\epsilon_1+\epsilon_3,\qquad
U_W=\epsilon_2+\epsilon_3,
\end{equation}
\begin{equation}
Z=V_Z+c_P U_Z\mathbf{1}_{d_Z},\qquad
W=V_W+c_P U_W\mathbf{1}_{d_W}.
\end{equation}
The treatment assignment mean is
\begin{equation}
\mu_A(X,Z)=\Lambda\!\left(3X^\top\theta_{d_X}+3Z^\top\theta_{d_Z}\right),
\qquad
\Lambda(t)=0.1+0.8\frac{\exp(t)}{1+\exp(t)}.
\end{equation}
The factual treatment is
\begin{equation}
A=\Pi_{\mathcal{A}}\{\mu_A(X,Z)+\sigma_A U_W\},
\end{equation}
where $\Pi_{\mathcal{A}}$ denotes clipping to the compact treatment support
$\mathcal{A}=[a_{\min},a_{\max}]$. In our experiments, $\mathcal{A}$ is estimated once as
the empirical $(q_{\min},q_{\max})$-quantiles of the unclipped assignment distribution using
$n_{\mathrm{pilot}}$ pilot samples.

The outcome is
\begin{equation}
Y=g(A,X_0)+\beta_{XW}\{X^\top\theta_{d_X}+W^\top\theta_{d_W}\}
+\psi(A,X,U_Z)+\epsilon_Y,
\qquad
\epsilon_Y\sim\mathcal{N}(0,\sigma_Y^2).
\end{equation}
The structural response is
\begin{equation}
g(a,x_0)=
\alpha(x_0)\exp\!\left[-\frac{1}{2}r(a,x_0)^2\right]
\left\{1+\kappa(x_0)\tanh\!\big(1.5\,r(a,x_0)\big)\right\}
-\lambda_{\mathrm{tail}}r(a,x_0)^2,
\end{equation}
where
\begin{equation}
r(a,x_0)=\frac{a-c(x_0)}{s(x_0)},\qquad
c(x_0)=\frac{a_{\min}+a_{\max}}{2}+\delta_c\tanh(\eta_c x_0),
\end{equation}
\begin{equation}
s(x_0)=\delta_s(a_{\max}-a_{\min})\{1+0.10\tanh(0.8x_0)\},
\end{equation}
\begin{equation}
\alpha(x_0)=1+\delta_\alpha\tanh(x_0),\qquad
\kappa(x_0)=\delta_\kappa\tanh(x_0).
\end{equation}
The latent outcome-confounding term is
\begin{equation}
\psi(a,x,U_Z)
=
\left[
\gamma
\left\{
1+\beta_A
\left(
\frac{a-\frac{a_{\min}+a_{\max}}{2}}
{\delta_\psi(a_{\max}-a_{\min})}
\right)^2
\right\}
+\gamma_0
\right]U_Z .
\end{equation}
Thus the latent contribution to the outcome is treatment-dependent, so failing to account
for hidden confounding can distort the shape of the estimated dose-response curve.

Under an intervention $A=a$,
\begin{equation}
Y(a)=g(a,X_0)+\beta_{XW}\{X^\top\theta_{d_X}+W^\top\theta_{d_W}\}
+\psi(a,X,U_Z)+\epsilon_Y.
\end{equation}
Since $X$ is independent of $(W,U_Z,\epsilon_Y)$, and these variables are mean-zero,
the conditional causal response is
\begin{equation}
m_0(a,x)=\mathbb{E}\{Y(a)\mid X=x\}
=
g(a,x_0)+\beta_{XW}x^\top\theta_{d_X}.
\end{equation}
Consequently, the individualized optimal treatment
\begin{equation}
a^\star(x)\in\arg\max_{a\in\mathcal{A}}m_0(a,x)
\end{equation}
is interior and varies with $x_0$. This property makes the benchmark appropriate for
evaluating decision-aware bridge learning: errors near $a^\star(x)$ affect policy regret,
whereas errors in clearly suboptimal regions are less consequential.

Figure~\ref{fig:synthetic-reference-curves} illustrates the effect of hidden confounding in the
synthetic benchmark. For both an individual covariate value and the population average, the
associational curves differ substantially from the causal curves, and their maximizers can occur at
different treatment values. This confirms that selecting treatments using $\mathbb{E}[Y\mid A=a,X=x]$
or $\mathbb{E}[Y\mid A=a]$ is not equivalent to optimizing the interventional responses
$\mathbb{E}\{Y(a)\mid X=x\}$ or $\mathbb{E}\{Y(a)\}$.

\begin{figure}[t]
\centering
\begin{subfigure}{0.48\linewidth}
    \centering
    \includegraphics[width=\linewidth]{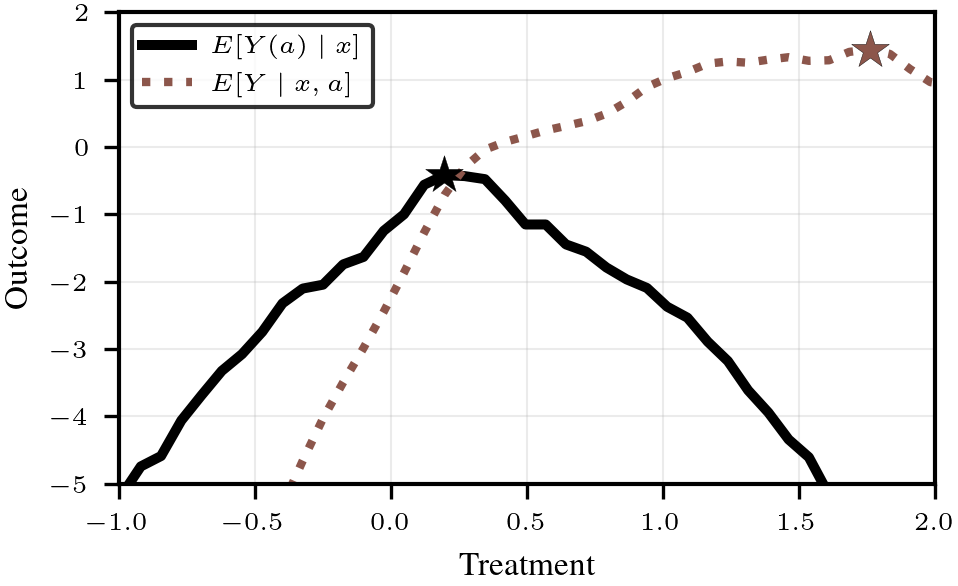}
    \caption{Individual-level curves for one randomly selected test point.}
    \label{fig:synthetic-individual-reference-curves}
\end{subfigure}
\hfill
\begin{subfigure}{0.48\linewidth}
    \centering
    \includegraphics[width=\linewidth]{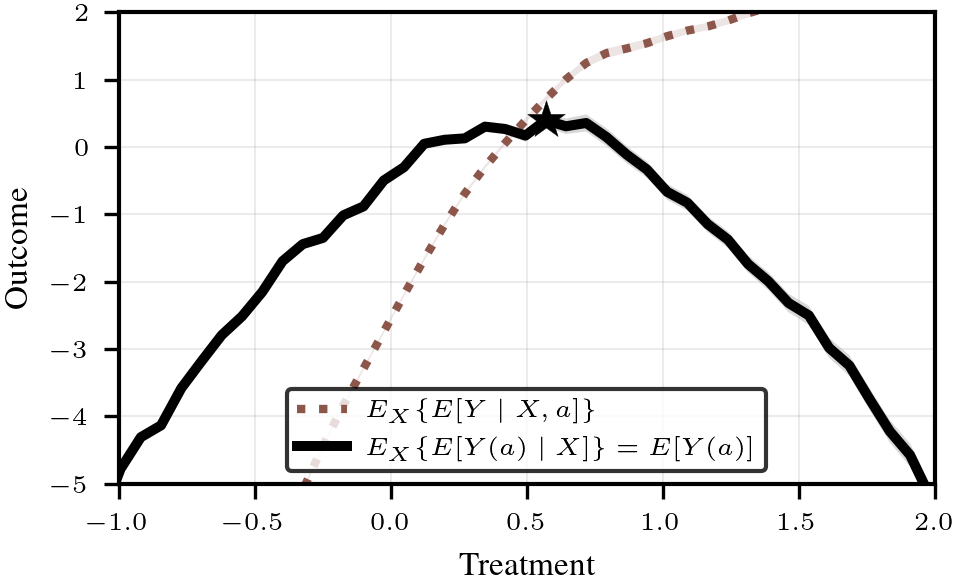}
    \caption{Population-level curves.}
    \label{fig:synthetic-population-reference-curves}
\end{subfigure}
\caption{
Reference causal and associational curves in the synthetic benchmark.
Left: for a fixed covariate value $x$, the solid curve shows the interventional response
$\mathbb{E}\{Y(a)\mid X=x\}$, while the dotted curve shows the associational response
$\mathbb{E}\{Y\mid A=a,X=x\}$. Right: the corresponding population quantities
$\mathbb{E}\{Y(a)\}$ and $\mathbb{E}\{Y\mid A=a\}$ are obtained by averaging over $X$.
Stars mark the maximizers of each curve. The separation between causal and associational
curves illustrates the effect of hidden confounding; in particular, optimizing the
associational curve can lead to a different treatment recommendation than optimizing the
causal curve.
}
\label{fig:synthetic-reference-curves}
\end{figure}

For reproducibility, the values used in all synthetic experiments are
\begin{equation}
\begin{gathered}
d_X=5,\quad d_Z=d_W=2,\quad \rho_X=0.5,\quad
c_P=0.35,\quad \sigma_A=0.35,\quad \sigma_Y=0,\\
q_{\min}=0.001,\quad q_{\max}=0.999,\quad
n_{\mathrm{pilot}}=50{,}000,\quad \text{support seed}=2718,\\
\beta_{XW}=1.2,\quad
\delta_c=0.4\times 0.5=0.2,\quad
\eta_c=2.7,\quad
\delta_s=0.1\times 0.45=0.045,\\
\delta_\alpha=0.25+0.1=0.35,\quad
\delta_\kappa=0.7,\quad
\lambda_{\mathrm{tail}}=0.06,\\
\gamma=0.7,\quad
\beta_A=0.25,\quad
\delta_\psi=0.18,\quad
\gamma_0=1.2,\\
n_{\mathrm{train}}=3000,\quad n_{\mathrm{test}}=1000,\quad
|\mathcal{G}_A|=51.
\end{gathered}
\end{equation}
Here $|\mathcal{G}_A|$ denotes the number of equally spaced grid points used for policy
optimization over $\mathcal{A}$.

\subsection{Semi-synthetic TCGA proximal benchmark}
\label{app:tcga-dgp}

We also evaluate on a semi-synthetic benchmark based on TCGA gene-expression
covariates. The raw gene-expression matrix is taken from the TCGA benchmark
used in SCIGAN \citep{bica2020scigan}, which is derived from the Cancer Genome
Atlas pan-cancer project \citep{weinstein2013cancer}. We use the real
gene-expression features to construct observed covariates and proxy-relevant
features, while the treatment, hidden confounder, proxies, and outcome are
generated from a controlled proximal structural model. This gives a benchmark
with realistic covariate structure and known counterfactual outcomes.

\begin{figure}[h]
\centering
\tikzset{
  observed/.style={draw, rounded corners, fill=gray!20, inner sep=2pt},
  hidden/.style={draw, dashed, rounded corners, fill=white, inner sep=2pt}
}
\begin{tikzcd}[
    column sep=large,
    row sep=large
]
& |[hidden]| U \arrow[dl, dashed] \arrow[dr, dashed]
    \arrow[ddr, dashed] \arrow[ddl, dashed]& \\
|[observed]| Z \arrow[d] &
|[observed]| X \arrow[dr] \arrow[dl] &
|[observed]| W \arrow[d] \\
|[observed]| A \arrow[rr] & & |[observed]| Y
\end{tikzcd}
\caption{Semi-synthetic TCGA proximal graph. Observed variables are shown in gray,
while the latent confounder is shown in white with a dashed border. The proxy $Z$
is treatment-inducing and affects treatment assignment but not the outcome directly.
The proxy $W$ is outcome-inducing and affects the outcome but not treatment
assignment directly.}
\label{fig:tcga-proximal-graph}
\end{figure}

Let $G^{\mathrm{raw}}\in\mathbb{R}^{n\times p}$ denote the raw TCGA
gene-expression matrix. We first standardize each gene,
\begin{equation}
G_{ij}
=
\frac{G^{\mathrm{raw}}_{ij}-\bar G_j}{s_j+10^{-8}},
\end{equation}
and select the $p_0=180$ genes with largest empirical variance. The selected
genes are split into disjoint blocks used to construct observed covariates and
proxy-specific features. We apply PCA to obtain
\begin{equation}
X_i=\mathrm{PCA}_5(G_i^X)\in\mathbb{R}^5,\qquad
\widetilde Z_i=\mathrm{PCA}_3(G_i^Z)\in\mathbb{R}^3,\qquad
\widetilde W_i=\mathrm{PCA}_3(G_i^W)\in\mathbb{R}^3.
\end{equation}
The variables observed by the learner are
\begin{equation}
O_i=(X_i,Z_i,W_i,A_i,Y_i),
\end{equation}
where $Z_i\in\mathbb{R}^3$, $W_i\in\mathbb{R}^3$, and $A_i\in(0,1)$.

The latent confounder is two-dimensional,
\begin{equation}
U_i=(U_{i1},U_{i2})^\top,\qquad
U_{i1}\sim \mathcal{N}(0,1),\qquad
U_{i2}\sim \mathrm{Exp}(1)-1.
\end{equation}
The treatment and outcome proxies are generated as noisy nonlinear measurements
of $U_i$, $X_i$, and the TCGA-derived proxy features:
\begin{equation}
Z_i=
\tanh(B_ZU_i+C_ZX_i+R_Z\widetilde Z_i)+\epsilon_i^Z,
\qquad
\epsilon_i^Z\sim\mathcal{N}(0,\sigma_Z^2 I_3),
\end{equation}
\begin{equation}
W_i=
\tanh(B_WU_i+C_WX_i+R_W\widetilde W_i)+\epsilon_i^W,
\qquad
\epsilon_i^W\sim\mathcal{N}(0,\sigma_W^2 I_3).
\end{equation}
Here $\sigma_Z=\sigma_W=0.2$, and
\begin{equation}
B_Z
=
0.8
\begin{pmatrix}
1 & 0\\
0 & 1\\
1/\sqrt{2} & -1/\sqrt{2}
\end{pmatrix},
\qquad
B_W
=
0.8
\begin{pmatrix}
1 & 0\\
0 & 1\\
1/\sqrt{2} & 1/\sqrt{2}
\end{pmatrix},
\end{equation}
\begin{equation}
C_Z
=
0.2
\begin{pmatrix}
1&0&0&0&0\\
0&1&0&0&0\\
0&0&1&0&0
\end{pmatrix},
\qquad
C_W
=
0.2
\begin{pmatrix}
0&0&1&0&0\\
0&0&0&1&0\\
0&0&0&0&1
\end{pmatrix},
\end{equation}
\begin{equation}
R_Z=0.5I_3,\qquad R_W=0.5I_3.
\end{equation}
Since both $B_Z$ and $B_W$ have rank two, both proxies carry information
about the two-dimensional latent confounder. By construction, $Z$ affects
treatment assignment but does not enter the outcome equation directly, while
$W$ enters the outcome equation but does not enter the treatment assignment
equation.

Each unit has a latent center for the dose-response curve,
\begin{equation}
s_i^\star=\gamma_0+\gamma_X^\top X_i+\gamma_U^\top U_i,\qquad
\bar s_i^\star=\frac{s_i^\star-\mu_s}{\sigma_s},
\end{equation}
\begin{equation}
a_i^\star=0.15+0.70\,\sigma(\bar s_i^\star),
\qquad
\sigma(t)=\frac{1}{1+\exp(-t)}.
\end{equation}
The constants $\mu_s$ and $\sigma_s$ are the empirical mean and standard
deviation of $s^\star$, estimated once from a pilot sample. The coefficients are
\begin{equation}
\gamma_0=0,\qquad
\gamma_X=\frac{0.6}{\sqrt{5}}(1,-1,1,-1,1)^\top,\qquad
\gamma_U=\frac{0.8}{\sqrt{2}}(1,-1)^\top.
\end{equation}
Thus $a_i^\star\in(0.15,0.85)$, so the relevant optima are interior to the
treatment support.

Treatment assignment is generated from a beta distribution. First define
\begin{equation}
r_i=\eta_0+\eta_X^\top X_i+\eta_U^\top U_i+\eta_Z^\top Z_i,
\qquad
\bar r_i=\frac{r_i-\mu_r}{\sigma_r},
\end{equation}
where $\mu_r$ and $\sigma_r$ are pilot estimates of the mean and standard
deviation of $r$. The assignment mean is
\begin{equation}
m_i=
\operatorname{clip}\left\{
0.3a_i^\star+0.7\sigma(\bar r_i),\,0.02,\,0.98
\right\},
\end{equation}
and
\begin{equation}
A_i\sim \mathrm{Beta}\{\phi m_i,\phi(1-m_i)\},
\qquad
\phi=20.
\end{equation}
The assignment coefficients are
\begin{equation}
\eta_0=0,\qquad
\eta_X=\frac{0.4}{\sqrt{5}}(1,1,-1,1,-1)^\top,
\end{equation}
\begin{equation}
\eta_U=\frac{0.8}{\sqrt{2}}(1,1)^\top,\qquad
\eta_Z=\frac{0.6}{\sqrt{3}}(1,-1,1)^\top.
\end{equation}
Because $A_i$ depends on $U_i$, the observational treatment is confounded.

The potential outcome under treatment $a\in[0,1]$ is
\begin{equation}
Y_i(a)
=
\frac{
\mu_i+b_i+\lambda_W^\top W_i+\Gamma U_{i1}aX_{i0}
-c_i(a-a_i^\star)^2+\epsilon_i^Y
}{4},
\qquad
\epsilon_i^Y\sim\mathcal{N}(0,\sigma_Y^2),
\end{equation}
with $\sigma_Y=0.1$ and $\Gamma=1.0$. The observed outcome is $Y_i=Y_i(A_i)$.
The outcome components are
\begin{equation}
c_i=5+5\,\mathrm{softplus}
\left(\kappa_0+\kappa_X^\top X_i+\kappa_U^\top U_i\right),
\qquad
\mathrm{softplus}(t)=\log(1+\exp(t)),
\end{equation}
\begin{equation}
\mu_i=\theta_X^\top X_i+\theta_U^\top U_i,
\qquad
b_i=0.5\sin(\omega_X^\top X_i)+0.5\cos(\omega_U^\top U_i).
\end{equation}
The corresponding coefficients are
\begin{equation}
\theta_X=\frac{0.25}{\sqrt{5}}(1,-1,1,1,-1)^\top,
\qquad
\theta_U=\frac{0.5}{\sqrt{2}}(1,-1)^\top,
\end{equation}
\begin{equation}
\omega_X=\frac{0.2}{\sqrt{5}}(1,2,-1,1,-2)^\top,
\qquad
\omega_U=\frac{0.3}{\sqrt{2}}(1,1)^\top,
\end{equation}
\begin{equation}
\kappa_0=0,\qquad
\kappa_X=\frac{0.3}{\sqrt{5}}(1,-1,0.5,1,-0.5)^\top,
\qquad
\kappa_U=\frac{0.3}{\sqrt{2}}(1,-1)^\top,
\end{equation}
\begin{equation}
\lambda_W=\frac{0.5}{\sqrt{3}}(1,-1,1)^\top.
\end{equation}
The quadratic term gives each unit a single-peaked dose-response curve centered
near $a_i^\star$, while the additional interaction $(\Gamma U_{i1}aX_{i0})$
makes the latent confounding treatment-dependent. Therefore, hidden confounding
can distort not only the level of the response curve, but also its shape.

Figure~\ref{fig:tcga-reference-curves} shows the same diagnostic comparison for the
semi-synthetic TCGA benchmark. The causal curves retain an interior maximizer, while the
associational curves are shifted by the confounded treatment assignment mechanism. This visualizes
why the benchmark requires proxy-based adjustment: the observed conditional association does not
recover the causal dose-response surface used for policy evaluation.

\begin{figure}[t]
\centering
\begin{subfigure}{0.48\linewidth}
    \centering
    \includegraphics[width=\linewidth]{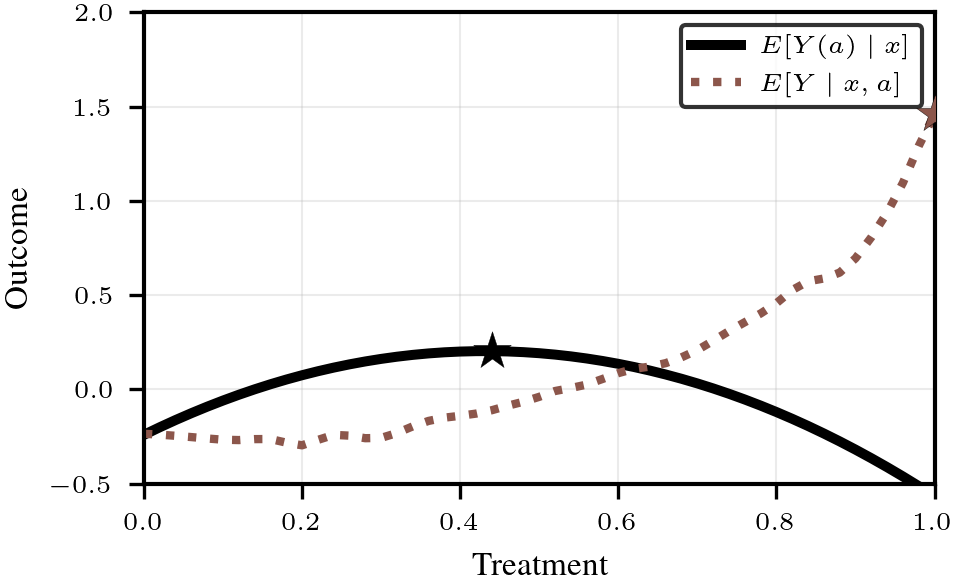}
    \caption{Individual-level curves for one randomly selected test point.}
    \label{fig:tcga-individual-reference-curves}
\end{subfigure}
\hfill
\begin{subfigure}{0.48\linewidth}
    \centering
    \includegraphics[width=\linewidth]{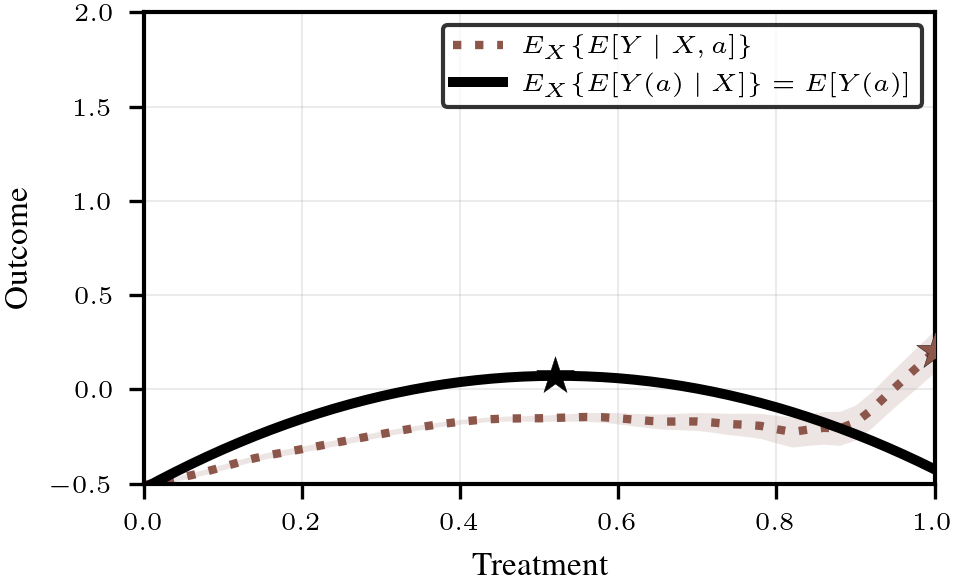}
    \caption{Population-level curves.}
    \label{fig:tcga-population-reference-curves}
\end{subfigure}
\caption{
Reference causal and associational curves in the semi-synthetic TCGA benchmark.
Left: for a fixed covariate value $x$, the solid curve shows the interventional response
$\mathbb{E}\{Y(a)\mid X=x\}$, while the dotted curve shows the associational response
$\mathbb{E}\{Y\mid A=a,X=x\}$. Right: the corresponding population quantities
$\mathbb{E}\{Y(a)\}$ and $\mathbb{E}\{Y\mid A=a\}$ are obtained by averaging over $X$.
Stars mark the maximizers of the displayed curves. The discrepancy between causal and
associational curves illustrates the impact of hidden confounding and shows that optimizing
the associational response can lead to a different treatment recommendation than optimizing
the causal response.
}
\label{fig:tcga-reference-curves}
\end{figure}

The causal estimand used for evaluation is the conditional response surface
\begin{equation}
m_0(a,x)=\mathbb{E}\{Y(a)\mid X=x\}.
\end{equation}
For this semi-synthetic DGP, $m_0(a,x)$ does not have a simple closed form
because the outcome averages over $U$, $W$, proxy noise, and TCGA-derived
features. We therefore evaluate $m_0(a,x)$, individualized optima, and policy
values by Monte Carlo simulation from the known structural equations. The
individualized policy target is
\begin{equation}
a^\star(x)\in\arg\max_{a\in\mathcal{G}_A}m_0(a,x),
\end{equation}
where $\mathcal{G}_A$ is a grid of $51$ equally spaced treatment values on
$[0,1]$.

For all semi-synthetic TCGA experiments we use
\begin{equation}
n_{\mathrm{train}}=3000,\qquad n_{\mathrm{test}}=1000,\qquad
\mathcal{A}=[0,1],\qquad |\mathcal{G}_A|=51.
\end{equation}
The TCGA preprocessing uses $180$ top-variance genes,
PCA dimensions $5,3,3$ for $X,\widetilde Z,\widetilde W$.

\subsection{Monte Carlo construction of ground-truth causal curves}
\label{app:ground-truth-curves}

For both benchmarks, the ground-truth conditional causal response
\begin{equation}
m_0(a,x)=\mathbb{E}\{Y(a)\mid X=x\}
\end{equation}
is evaluated directly from the known structural equations. This step is necessary because the
effect of the latent confounders is non-additive and depends on both the treatment and covariates.
Consequently, the conditional response cannot in general be read off from the deterministic part of
the outcome equation by simply removing an additive confounding term.

For a fixed covariate value $x$ and treatment level $a$, we use the structural intervention
\begin{equation}
do(A=a)
\end{equation}
and marginalize over the remaining stochastic components of the data-generating process. That is,
we write
\begin{equation}
m_0(a,x)
=
\mathbb{E}\{Y(a)\mid X=x\}
=
\mathbb{E}\!\left[
Y \mid do(A=a), X=x
\right],
\end{equation}
where the expectation is taken over the latent confounders, proxy variables, and outcome noise
generated under the intervention. Equivalently,
\begin{equation}
m_0(a,x)
=
\int
\mathbb{E}\!\left[
Y \mid do(A=a), X=x, U=u, W=w
\right]
\,dP(u,w\mid X=x).
\end{equation}
Since the simulator specifies the full joint law, this integral can be approximated by Monte Carlo.
For each evaluation covariate $x$ and treatment value $a$, we repeatedly sample latent variables
and outcome proxies from their conditional distribution, set the treatment to
$a$, and evaluate the interventional outcome equation.

Concretely, for $b=1,\ldots,B$, we draw
\begin{equation}
U^{(b)},W^{(b)} \sim P(U,W\mid X=x)
\end{equation}
according to the corresponding data-generating process, set $A=a$, and compute
\begin{equation}
Y^{(b)}(a,x)
=
Y\!\left(
do(A=a),X=x,U^{(b)},W^{(b)}
\right).
\end{equation}
The ground-truth curve is then approximated by
\begin{equation}
\widehat m_0(a,x)
=
\frac{1}{B}
\sum_{b=1}^B
Y^{(b)}(a,x).
\end{equation}
Once $x$ is fixed, the simulator is used to generate
the remaining latent and proxy variables under the intervention. We use $B=256$ Monte Carlo
draws for each pair $(x,a)$.

Ground-truth individualized policies are computed by evaluating this Monte Carlo curve on the
same treatment grid used by the learners:
\begin{equation}
\widehat a^\star(x)
\in
\arg\max_{a\in\mathcal{G}_A}
\widehat m_0(a,x).
\end{equation}
Policy values and regrets are then computed using these Monte Carlo estimates of the conditional
causal response rather than associational quantities such as
$\mathbb{E}[Y\mid A=a,X=x]$, which remain biased under hidden confounding.

\section{Hyperparameter Selection}
\label{app:hparam-selection}

This appendix describes the hyperparameter-selection protocol used in the experiments. The protocol has two stages. First, we select the base hyperparameters of each unweighted proximal learner using only factual validation error. Second, we calibrate the decision-aware weighting parameters using a limited shared calibration procedure and then keep them fixed across weighted methods and datasets. This separation avoids tuning model-specific weighting parameters using oracle quantities that would not be available in real observational applications.

\subsection{Stage 1: Base model selection using factual validation error}
\label{app:hparam-stage1}

For each dataset and each model class, we select the unweighted base learner by minimizing factual prediction error on held-out validation data. For a fitted bridge or response predictor $\widehat f_\lambda$, with hyperparameters $\lambda$, the validation objective is
\begin{equation}
\mathrm{RMSE}_{\mathrm{factual}}(\lambda)
=
\left[
\frac{1}{n_{\mathrm{val}}}
\sum_{i\in\mathcal{I}_{\mathrm{val}}}
\left\{
Y_i-\widehat f_\lambda(A_i,W_i,X_i)
\right\}^2
\right]^{1/2}.
\end{equation}
For methods whose factual prediction is obtained through a bridge-induced surface rather than a direct regression function, $\widehat f_\lambda$ denotes the corresponding factual prediction used by that estimator. This objective uses only observed factual outcomes and does not require counterfactual or oracle information.

The search was performed with Bayesian optimization using a tree-structured Parzen estimator sampler \citep{bergstra2011algorithms}, with a maximum budget of $100$ trials per model class and dataset, using Optuna \citep{akiba2019optuna}. This stage includes only unweighted learners; decision-aware weighting parameters are calibrated separately in Stage~\ref{app:hparam-stage2}. The search spaces were as follows.

For PMMR, we searched the ridge regularization parameter over
$\{10^{-4},5\cdot 10^{-4},10^{-3},5\cdot 10^{-3},10^{-2},5\cdot 10^{-2},10^{-1},5\cdot 10^{-1},1\}$.
The bandwidth scale for the bridge input $(A,W,X)$ was searched over
$\{0.3,0.5,0.75,1.0,1.25,1.5,1.75,2.0,2.25,2.5,3.0,3.5,4.0\}$, and the bandwidth scale for the conditioning input $(A,Z,X)$ was searched over
$\{0.3,0.5,0.75,1.0,1.25,1.5,1.75,2.0,2.25,2.5,2.75,3.0\}$.

For KPV, the first-stage regularization parameter was searched over
$\{10^{-4},3\cdot10^{-4},10^{-3},3\cdot10^{-3},10^{-2},3\cdot10^{-2},10^{-1}\}$,
and the second-stage regularization parameter over
$\{10^{-3},3\cdot10^{-3},10^{-2},3\cdot10^{-2},10^{-1},3\cdot10^{-1},1\}$.
The kernel variance scale was searched over
$\{0.5,0.75,1.0,1.25,1.5,1.75,2.0,2.5,3.0\}$.

For DFPV, the first-stage and second-stage penalties were both searched over
$\{10^{-3},3\cdot10^{-3},10^{-2},3\cdot10^{-2},10^{-1},3\cdot10^{-1},1\}$.
The hidden-layer architecture was selected from
$\{[32],[64],[32,32],[64,64],[128,128]\}$, the learned feature dimension from
$\{4,8,16\}$, the activation function from $\{\mathrm{ReLU},\mathrm{ELU},\tanh\}$, the learning rate from
$\{3\cdot10^{-4},10^{-3},3\cdot10^{-3}\}$, and the weight decay from
$\{0,10^{-5},10^{-4},10^{-3},10^{-2}\}$.

For NMMR-U, we searched the activation function over
$\{\mathrm{ReLU},\mathrm{ELU},\mathrm{SiLU}\}$, whether to use a product kernel over $\{\mathrm{false},\mathrm{true}\}$, the bandwidth scale for $(A,Z,X)$ over
$\{0.3,0.5,0.75,1.0,1.5,2.0,3.0\}$, the hidden-layer architecture over
$\{[32,32],[64,64],[64,64,64]\}$, and the learning rate over
$\{3\cdot10^{-5},10^{-4},3\cdot10^{-4},10^{-3}\}$. NMMR-V used the same search family as NMMR-U, with the only difference being the use of the V-statistic objective rather than the U-statistic objective.



\subsection{Stage 2: Shared calibration of decision-aware weighting}
\label{app:hparam-stage2}

After selecting the unweighted base hyperparameters, we calibrate the decision-aware weighting parameters. The weighted bridge objective uses a local-global weight controlled by a localization strength $\lambda$, a bandwidth $\tau$, and the number of alternating reweighting rounds $n_{rounds}$. These parameters determine how strongly the bridge loss is concentrated near the current pseudo-optimal treatment while retaining global stabilization.

In real observational settings, the target policy regret is not observable, since it depends on the unknown conditional causal response $m_0(a,x)$. A full model-specific regret-based tuning procedure would therefore use information unavailable in practice. To avoid this, we use a conservative shared calibration protocol. We tune $(\lambda,\tau,n_{rounds})$ once on a synthetic calibration setting where ground-truth causal curves are available, using a single base model and a single random seed. The selected weighting parameters are then fixed and reused for all weighted methods and datasets.

The weighting search space is
\begin{equation}
\lambda\sim \mathrm{LogUniform}(0.2,20.0),
\qquad
\tau\sim \mathrm{LogUniform}(0.02,2.0),
\qquad
n_{rounds}\in\{2,3,4,5\}.
\end{equation}
This protocol deliberately prevents each weighted model from receiving separate access to oracle regret tuning. As a result, performance differences among weighted methods are not driven by unequal model-specific optimization over non-observable objectives.

\subsection{Sensitivity Studies}
\label{app:hparam-ablation}

To assess sensitivity to the weighting parameters, we also performed ablation studies in Section \ref{sec:experiments} over the localization strength, bandwidth, number of reweighting rounds, and confounding intensity. These studies show that performance is stable over a nontrivial neighborhood of the selected weighting parameters. This supports the use of a shared weighting protocol: the decision-aware gains are not attributable to a single fragile hyperparameter choice, and the calibrated parameters transfer across model classes and datasets in our experiments.
\FloatBarrier

\section{Additional Results}
\label{app:additional-results}

Table \ref{tab:counterfactual_rmse_combined} show the same counterfactual RMSE results as Figure \ref{fig:counterfactual_results}, but NMMR-U results are better appreciated here. Figure \ref{fig:factual_results} shows factual RMSE for the baseline and DA proximal solvers and for the synthetic and semi-synthetic datasets. The conclusions are similar as with the counterfactual RMSE results.

\begin{table}[t]
\centering
\caption{Counterfactual RMSE across methods and training modes for the synthetic and semi-synthetic settings. 
Values are reported as $\mathrm{mean}_{\mathrm{std}}$ over 10 runs. Lower is better.}
\label{tab:counterfactual_rmse_combined}
\resizebox{\linewidth}{!}{
\begin{tabular}{lcccccccccc}
\toprule
& \multicolumn{5}{c}{Synthetic} & \multicolumn{5}{c}{Semi-synthetic} \\
\cmidrule(lr){2-6} \cmidrule(lr){7-11} 
& DFPV & KPV & NMMR-U & NMMR-V & PMMR
& DFPV & KPV & NMMR-U & NMMR-V & PMMR \\
\midrule

Standard
& $6.59_{0.51}$ 
& $4.25_{0.16}$ 
& $12.16_{22.52}$ 
& $4.38_{0.20}$ 
& $6.85_{0.42}$
& $0.59_{0.13}$ 
& $0.33_{0.01}$ 
& $3.17_{6.01}$ 
& $0.49_{0.04}$ 
& $0.37_{0.03}$ \\

DA
& $5.30_{0.80}$ 
& $3.97_{0.11}$ 
& $32.94_{31.28}$ 
& $4.05_{0.57}$ 
& $6.70_{0.54}$
& $0.61_{0.11}$ 
& $0.32_{0.01}$ 
& $1.01_{0.01}$ 
& $0.55_{0.06}$ 
& $0.40_{0.04}$ \\

\bottomrule
\end{tabular}
}
\end{table}

\begin{figure}[t]
\begin{subfigure}{0.49\linewidth}
        \centering
    \includegraphics[width=\linewidth]{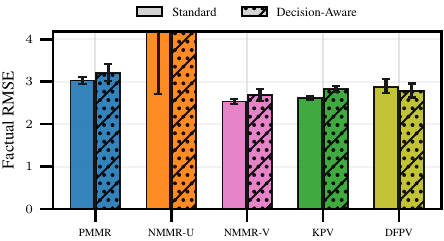}
    \caption{Synthetic experiment}
    \label{fig:semi-synthetic_factual}
\end{subfigure}
\begin{subfigure}{0.49\linewidth}
        \centering
    \includegraphics[width=\linewidth]{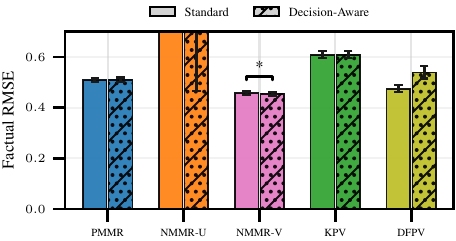}
    \caption{semi-synthetic experiment}
    \label{fig:semi-synthetic_factual}
\end{subfigure}
        \caption{Factual RMSE in the synthetic and semi-synthetic datasets. Mean and 95$\%$ confidence intervals over 10 seeds are shown. * denotes statistical difference between baselines and DA models.}
    \label{fig:factual_results}
\end{figure}

\FloatBarrier


\end{document}